\documentclass{article}

\usepackage[preprint,main]{robodiff}

\usepackage[utf8]{inputenc} 
\usepackage[T1]{fontenc}    
\usepackage{paralist}       
\usepackage{url}            
\usepackage{booktabs}       
\usepackage{amsfonts}       
\usepackage{amsmath}        
\usepackage{amssymb}        
\usepackage{amsthm}         
\usepackage{nicefrac}       
\usepackage{microtype}      
\usepackage{xcolor}         
\usepackage{graphicx}       

\usepackage{amsmath}    %
\usepackage{bm}   %
\usepackage{caption}
\usepackage{paralist}
\usepackage{graphicx}
\usepackage{comment}
\usepackage{amssymb}
\usepackage{mathtools}
\usepackage{tcolorbox}
\usepackage{mathrsfs}
\usepackage{algorithm}
\usepackage{algpseudocode}

\usepackage[pagebackref=true,breaklinks=true,colorlinks=true, citecolor=blue,linkcolor=cyan,urlcolor=blue,bookmarks=true]{hyperref}
\usepackage{etoc}

\title{Kolmogorov Regression for Robust Diffusion Policies}

\author{%
  Lekan~Molu \\
  Bala Cynwyd, PA 19004 \\
  \texttt{lekanmolu@scriptedonachip.com} \\
}

\begin{document}

\maketitle

\definecolor{light-blue}{rgb}{0.30,0.35,1}
\definecolor{light-green}{rgb}{0.20,0.49,.85}
\definecolor{purple}{rgb}{0.70,0.69,.2}

\newcommand{\lb}[1]{\textcolor{light-blue}{#1}}
\newcommand{\bl}[1]{\textcolor{blue}{#1}}

\newcommand{\maybe}[1]{\textcolor{gray}{\textbf{MAYBE: }{#1}}}
\newcommand{\inspect}[1]{\textcolor{cyan}{\textbf{CHECK THIS: }{#1}}}
\newcommand{\more}[1]{\textcolor{red}{\textbf{MORE: }{#1}}}
\providecommand{\sectionautorefname}{Section}
\renewcommand{\sectionautorefname}{Sec.}
\renewcommand{\equationautorefname}{equation}
\renewcommand{\subsectionautorefname}{$\S$}
\renewcommand{\subsubsectionautorefname}{$\S$}
\renewcommand{\chapterautorefname}{Chapter}

\newcommand{\cmt}[1]{{\footnotesize\textcolor{red}{#1}}}
\newcommand{\todo}[1]{\textcolor{red}{TO-DO: #1}}
\newcommand{\stopped}[1]{\color{red}STOPPED HERE #1\hrulefill}

\newcounter{mnote}
\newcommand{\marginote}[1]{\addtocounter{mnote}{1}\marginpar{\themnote. \scriptsize #1}}
\setcounter{mnote}{0}
\newcommand{\ie}{$i.e.$\ }
\newcommand{\eg}{e.g.\ }
\newcommand{\cf}{c.f.\ }
\newcommand{\yes}{\checkmark}
\newcommand{\no}{\ding{55}}

\newcommand{\flabel}[1]{\label{fig:#1}}
\newcommand{\seclabel}[1]{\label{sec:#1}}
\newcommand{\tlabel}[1]{\label{tab:#1}}
\newcommand{\elabel}[1]{\label{eq:#1}}
\newcommand{\alabel}[1]{\label{alg:#1}}
\newcommand{\fref}[1]{\cref{fig:#1}}
\newcommand{\sref}[1]{\cref{sec:#1}}
\newcommand{\tref}[1]{\cref{tab:#1}}
\newcommand{\eref}[1]{\cref{eq:#1}}
\newcommand{\aref}[1]{\cref{alg:#1}}

\newcommand{\bull}[1]{$\bullet$ #1}
\newcommand{\argmax}{\text{argmax}}
\newcommand{\argmin}{\text{argmin}}
\newcommand{\mc}[1]{\mathcal{#1}}
\newcommand{\bb}[1]{\mathbb{#1}}

\def\tidx{t}
\newcommand{\Note}[1]{}
\renewcommand{\Note}[1]{\hl{[#1]}}  

\newtheorem{assumption}{Assumption}
\newtheorem{problem}{Problem}
\newtheorem{definition}{Definition}
\newtheorem{proposition}{Proposition}
\newtheorem{theorem}{Theorem}
\newtheorem{lemma}{Lemma}
\newtheorem{remark}{Remark}
\newtheorem{corollary}{Corollary}

\def\dof{\text{DOF }}
\def\dofs{\text{DOFs }}
\def\reline{\mathbb{R}}
\def\wrt{with respect to }
\def\basis{\bm{e}}
\def\state{\bm{x}}
\def\statey{\bm{y}}
\def\algoname{\texttt{BKEDiff}}

\def\borel{\mc{F}}
\def\topospace{X} 
\def\probspace{\mc{P}}
\def\measure{\mu}
\def\cov{{C}_\measure}
\def\measureset{\Omega}
\def\measurespace{(\Omega, \mc{F}, \mu)}
\def\measurablespace{(\Omega, \mc{F})}
\def\algebra{\mathfrak{a}}
\def\sigmalgebra{\mathscr{\bm \sigma}}
\def\banach{\mc{B}}
\def\process{{X}}
\def\expect{\bb{E}}
\def\observ{\mc{O}}
\def\hilbert{\mc{H}}
\def\prob{p}
\def\Tr{\text{Tr}} 
\def\probspace{\bb{P}}
\def\orthoggroup{{\textit{SO}}(3)}
\def\liegroup{{\mathbb{SE}}(3)}
\def\liealgebra{\mathfrak{se}(3)}
\def\identity{\textbf{I}}
\def\cameron{E}

\def\div{\text{div}}
\def\BC{C_b}
\def\rank{\bm{r}}
\newcommand{\R}{\mathbb{R}}
\newcommand{\N}{\mathbb{N}}
\newcommand{\E}{\mathbb{E}}
\newcommand{\Prob}{\mathbb{P}}
\newcommand{\cH}{\hilbert}
\newcommand{\cN}{\mathcal{N}}
\newcommand{\cL}{\mathcal{L}}
\newcommand{\norm}[1]{\|#1\|}
\newcommand{\inner}[2]{\langle #1, #2 \rangle}
\newcommand{\x}{\mathbf{x}}
\def\SM{\mc{L}_{SM}(\param)}
\def\w{{w}}
\def\param{{\theta}}
\def\minbatch{{\xi}}

\def\pushT{\textbf{PushT}}


\begin{abstract}
Finite-dimensional (FD) diffusion policies exhibit temporal drift owing to discretization artifacts that degrade long-horizon performance (when deployed on physical systems). We introduce a backward Kolmogorov equation that lifts diffusion  policies to a Cameron-Martin space --- a subset of the Hilbert space. Essentially, replacing stochastic score matching with a deterministic boundary-value PDE problem. Our core innovation thrives on Gaussian measure theory whereupon the diffusion noise covariance operator is realized from a colored noise distribution which prescribes a notion of regularity on samples from the model at inference time. We train the diffusion model with a derived precision-weighted Cameron-Martin loss and a Kolmogorov residual is introduced as a PDE diagnostic during inference. These substitutions yield 
\begin{inparaenum}[(i)]
	\item  convergence guarantees where the bound's constants depend on the effective rank of the kernel rather than action dimension, \item  improved trajectory regularity via spectral weighting, and \item  a deterministic failure detector without reward signals.
\end{inparaenum}
 Validation across two application domains demonstrates substantial improvements: on the PushT manipulation benchmark, the Cameron-Martin loss achieves a $17\%$ improvement in maximum episode reward ($0.95$ vs.\ $0.78$ for MSE) and $67.6\%$ reduction in inter-step drifts during inference via the introduced residual magnitude. Similarly, on a $6$-station manufacturing line with constant work-in-process (CONWIP) flow control, we achieve $28.4\%$ lower RMSE than classical LSTM baselines; a high starvation-event recall ($1.0$ in test cycles), and effective bottleneck identification (Precision@1 = 1.0 in test set, $13\times$ signal-to-noise ratio). We then certify the dispatch policies with Hamilton–Jacobi reachability theory which  reduces deadlock events by $96\%$ compared to uncontrolled dispatch over $100$ simulated runs ($351$ events prevented). 
\end{abstract}


\etocdepthtag.toc{main}
\section{Introduction}
\label{sec:intro}
\noindent {D}{iffusion} models are one of the dominant paradigms for
learning policies for open-ended embodiment systems --- from anthropomorphic robots~\citep{diffusion_visuomotor, pi_nut_point_5, large_behavior_models} to autonomous vehicles~\citep{alpamayo}, they provide improved performance when mapping high-dimensional visual observations to continuous action trajectories. The diffusion core design principle is to learn the underlying probability distribution of an observable so that future data samples similar to the observation can be generated in a \textit{controllable} manner. It has stimulated scalable learning of robot visuomotor policies~\citep{OpenVLA, pi_nut_point_5} from imitation learning in finite-spaces ~\citep{BillardImitation} ($\R^n$, $1 \le n < 8$) to function-space behavior cloning~\citep{BehaviorCloningBagnell}. By gradually corrupting the observable into total noise in a forward phase; then transforming the noise back to the observable in a reverse process, it learns the intermediate family of distributions so that recovering the observable amounts to sampling (\eg by integrating a stochastic gradient Langevin dynamical system~\citep{DiffusionPrinciples}) along a continuous generative trajectory. 

Most diffusion works take the score-based energy-modeling view for recovering the observable's probability distribution~\citep{SongImproved, HyvarinenEstimation}, in finite dimensions (FD). Its distribution is realized by extremizing the expected squared distance between the gradients of the observable and its model's log-densities. In a forward Ornstein-Uhlenbeck (OU) diffusion process, a denoising diffusion probabilistic model (DDPM)~\citep{HoDiffusion, SohlDeepUnsupervised} may
corrupt the observable with (Gaussian) noise in the limit of  high-dimensional data (\eg, in $\R^d$); its model then  denoises the  noisy artifacts in the reverse OU process~\citep{BookOksendal, BookPavliotis}. By discretizing the problem space first (\eg images to pixels~\citep{SohlDeepUnsupervised, HoDiffusion} or continuous spatio-temporal behaviors to text~\citep{OpenVLA}), an SDE integration algorithm can recover action samples. This \textit{discretize-data-before-diffusion} scheme misaligns policies from the underlying continuous autonomous system dynamics since such policies are trained on grid artifacts where grid resolution engenders interpolation errors that compound across trajectory rollouts. This causes instability in applications of diffusion to safety-critical real-world deployment systems. While this effect is mild in image generation, it causes performance gaps in visuomotor and real-world deployment loops where small state errors have pronounced effects on physical behavior and multi-agent settings. In practice, this may result in poor long-horizon planning execution~\citep{beyondmimic}.

\citep{BortoliConverge} established Wasserstein distance convergence  bounds of order one between the diffusion model's target and generative distributions. \citep{ConvergeChen}, working in a FD $L^2$ score space provided a logarithmic bound on the sampling steps for the exponential integrator under finite second moments, a $\widetilde{\Theta}(\epsilon_0^2)$\footnote{Where $\epsilon_0^2$ denotes the average $L^2$ score-estimation error.} KL divergence error, where $\epsilon_0^2$ denotes the average $L^2$ score-estimation error, without requiring log-concavity or global smoothness assumptions.
As such, discretization and score-estimation errors accumulate along the reverse diffusion trajectory, with complexity scaling adversely as the prediction horizon and low-noise regime increase. In control settings, this manifests as temporal inconsistency in action predictions and high variance in closed-loop performance across seeds and trials. A principled control-oriented approximation should therefore represent the distribution over admissible trajectories directly in function space---including their relative likelihoods---while deferring discretization as long as possible. 

\noindent \textbf{Contributions}: Our work makes four novel contributions to infinite-dimensional (ID) diffusion policy learning:
\begin{inparaenum}[(1)]
	\item \textbf{Kolmogorov PDEs:} We ground diffusion policies in the backward Kolmogorov equation (BKE), replacing stochastic score matching with a deterministic boundary-value problem. The BKE avoids density-based formulations that fail in infinite dimensions, and yields convergence guarantees where the constants depend on the effective rank of the covariance operator, not the action dimension.
	\item \textbf{Three-point algorithmic instantiation:} We show that the infinite-dimensional framework reduces to three minimal substitutions in standard DDPM: 
	\begin{inparaenum}[(a)]
		\item colored forward noise $\eta = L_N \xi$ where $L_N = \operatorname{chol}(G_N)$ is the Cholesky factor of the Matérn Gram matrix, $G_N$~\citep{abrahamowitz};
		\item precision-weighted Cameron-Martin loss $\mathcal{L}_{\mathrm{CM}} = \mathbb{E}[\|\cov^{-1/2}(\eta_\theta - \eta)\|_{\mathcal{H}}^2]$; and
		\item colored reverse noise using the same $L_N$ at inference. No network architecture changes are required, enabling easy adoption.
	\end{inparaenum}
	\item \textbf{Kolmogorov residual for diagnostics:} We introduce the Kolmogorov residual as a deterministic, oracle-free signal for detecting policy failure and anomalies. This measure of PDE violation provides interpretable diagnostics unavailable in standard DDPM.
	\item \textbf{Multi-domain validation:} On the popular PushT manipulation benchmark~\citep{florence2022implicit}, we observe a  17\% success improvement (0.95 vs.\ 0.78), 67.6\% residual reduction. On manufacturing (6-station CONWIP): 28.4\% RMSE improvement, strong anomaly detection (13× SNR). Integration with Hamilton-Jacobi reachability enables safety-guided dispatch with 96\% deadlock reduction.
\end{inparaenum}

A logical reading of this article is structured as follows: background and preliminaries are discussed in  \autoref{app:back} and related works in \autoref{sec:relwork}. Section \ref{sec:framework} develops the infinite-dimensional backward Kolmogorov diffusion framework and our three-point algorithmic contribution. Section \ref{sec:numerics} presents the empirical validation on manufacturing and manipulation systems, while section \ref{sec:concludes} concludes the paper. The core details of our innovations are in \autoref{app:innovs}, our algorithmic elucidation is presented in \autoref{app:algorithm} and further numerical results on niche manufacturing flow forecasting scenarios appear in \autoref{app:numericals}.


\section{Infinite-Dimensional BKE Diffusion}
\label{sec:framework}

We ground our contributions on Gaussian measure theory and the backward Kolmogorov equation (BKE) (see Appendix \ref{app:back}), to  avoid direct density-based score matching in infinite-dimensional spaces,\textit{ where the absence of a canonical Lebesgue reference measure complicates Euclidean score formulations and amplifies discretization variance.} All proofs, supporting propositions and corollaries may be found in Appendix \ref{app:innovs}. 

 The core innovation rests upon three minimal substitutions to the standard denoising diffusion probabilistic models (or DDPM), each justified by measure-theoretic necessity: 
 \begin{inparaenum}[1.)]
 	\item We replace standard white (Gaussian) noise $\eta \sim \mathcal{N}(0, I)$ with colored noise $\eta = L_N \xi$ where $\xi \sim \mathcal{N}(0, I)$, $K$ is the discretized Gram matrix of the Matérn kernel $k(t,s)$, and $L_N = \operatorname{chol}(K)$ is its Cholesky factor. This colored noise respects the covariance structure of the action distribution, yielding smooth, physically plausible trajectories and avoiding abrupt velocity discontinuities typical of white-noise perturbations. 
 	\item We replace the standard mean-squared-error loss with the precision-weighted (Cameron-Martin) loss, $
 	\mathcal{L}_{\mathrm{CM}} = \mathbb{E}\!\left[\left\|\cov^{-1/2}(\eta_\theta - \eta)\right\|_{\mathcal{H}}^2\right]$,
 	where $\cov$ is the covariance operator $\mathcal{C}_\mu : \mathcal{H} \to \mathcal{H}$. This inverse covariance operator weighting is a consequence of~\autoref{thm:radon-nikodym} (Radon-Nikodym), ensuring that the denoising loss preserves the measure-theoretic structure (Corollary~\autoref{rem:measure_theoretic}) of the underlying noise distribution; it achieves dimension-independent convergence (Corollary~\autoref{rem:dim_ind_converge}). 
 	\item During inference, we replace white noise with colored reverse noise $\eta = L_N \xi$ (same $L_N$ as training), maintaining consistency with the covariance structure throughout the sampling trajectory. 
 \end{inparaenum}

\textit{These three substitutions preserve the denoising network architecture entirely, with no structural modifications required.} The results  include (i) smooth, physically plausible trajectories; (ii) convergence guarantees independent of discretization dimension under stated assumptions; and (iii) PDE-based diagnostics via the Kolmogorov residual. The neural network architecture remains unchanged as we only modify the noise distribution and loss function.

\subsection{Function Space and Covariance Structure}

Action trajectories $a: [0,T] \to \mathbb{R}^{d_a}$ are posed in the Hilbert space $\mathcal{H} = L^2([0,T], \mathbb{R}^{d_a})$ of square-integrable functions, equipped with the standard inner product (see \eqref{eq:cov_inner}). 
The natural prior on this space is a Gaussian measure $\mu_0 = \mathcal{N}(0, \cov)$, with the covariance operator $\cov$ that is defined via a kernel function $k(t,s)$ \eqref{eq:cov_def}. In practice, we leverage the Mat\'ern covariance  kernel in decomposing the $C_\mu$-operator. We utilize the Gram matrix, 
$K \in \R^{N\times N}$ with $K = k(x_t, x_s)$, such that the regularity of sampled action paths is governed by the smoothness of $k$~\citep{GPMLRasmussen}. In our work, we found the $3/2$-Mat\'ern kernel,
\begin{align}
k(t, s) = \sigma^2 \left(1 + \frac{\sqrt{3}|t-s|}{\ell}\right) \exp\!\left(-\frac{\sqrt{3}|t-s|}{\ell}\right),
\end{align}
to be most suited to our applications, where  the length scale $\ell > 0$, and $\sigma^2$, the variance, controls the amplitude (set to $1$ in our formulation). This yields $C^1$ sample paths for real-world processes where continuous physical phenomena must avoid impulsive forces and contact instabilities. The Cameron-Martin space of the Gaussian $\mu_0$ is the subspace
\begin{align}
\mathcal{H}_{\mathcal{C}} = \left\{f \in \mathcal{H} : \|f\|_{\mathcal{H}_{\mathcal{C}}} := \|\mathcal{C}^{-1/2} f\|_{\mathcal{H}} < \infty\right\},
\end{align}
equipped with the inner product $\langle f, g \rangle_{\mathcal{H}_{\mathcal{C}}} = \langle f, \mathcal{C}^{-1} g \rangle_{\mathcal{H}}$. This is the unique subspace of $\mathcal{H}$ wherein absolute continuity of the Gaussian measure is preserved under translation~\citep{CameronMartin}. Denoising operates within this geometrically correct subspace. The mismatch between isotropic Euclidean losses and covariance-aware function-space geometry is consistent with the dimension-dependent degradation observed in finite-dimensional DDPM analyses. 

\subsection{Forward (OU) Process and the Backward Kolmogorov Characterization}

The forward diffusion process is the Ornstein-Uhlenbeck (OU) semigroup on $\mathcal{H}$,
\begin{align}
	d \topospace_t  = -\tfrac{1}{2}\topospace_t \, \operatorname{dt} + dW_t^{\cov}, \qquad \topospace_0 \sim \mu_{\text{data}} :=a_0 \in \mathcal{H}
	\label{eq:ou_sde}
\end{align}
where $W_t^{\cov}$ is a trace-class (Proposition \ref{app:back::trace_measure_theoretic}), self-adjoint, non-negative covariance operator, and $W_t^{C_\mu}$ is the $C_\mu$-Wiener process (see Def. \ref{def:WienerProcess}), with  spectral representation, $W_t^{C_\mu} = \sum_{k=1}^\infty \sqrt{\lambda_k}\,\omega_k(t)\,e_k,$. Here $\{(\lambda_k, e_k)\}$ are the Mercer eigenpairs~\citep{NagyRiez} of $C_\mu$ and
$\{\omega_k\}$ are independent standard Brownian motions.
The series converges in $L^2(\Omega;\cH)$  because of the trace class condition, $\Tr(C_\mu) = \sum_k \lambda_k < \infty$ (See Corollary \ref{rem:measure_theoretic}). 
The OU process is measure-preserving in distribution: as $t \to \infty$, $X_t$ converges to the prior $\mathcal{N}(0, \mathcal{C})$. The conditional distribution at any time $s < t$ is Gaussian,
\begin{align}
X_s \mid X_0 = a_0 \sim \mathcal{N}\left(e^{-s/2} a_0,\; (1 - e^{-s}) \cov\right),
\end{align}
with mean decaying exponentially toward zero and covariance growing monotonically toward the prior. Process  \eqref{eq:ou_sde} corrupts any clean action $a_0 \in \cH$  into pure colored noise $\cN(0,\cov)$ as $t\to\infty$, on discrete FD Euclidean spaces $\reline^d$~\citep{StefanScoreMatching}. \textit{As $d\rightarrow\infty$, the algorithm's stability deteriorates owing to the increasing refinement of discretization parameters.} 

Since Gaussian measures on $\mathcal{H}$ possess no Lebesgue density in infinite dimensions, the Euclidean score $\nabla_x \log p_s(x)$ cannot be defined classically. Therefore, we employ the \textbf{backward Kolmogorov equation} (BKE) to define an induced score as a Cameron–Martin logarithmic derivative, which is measure-theoretically well-defined and recoverable from the PDE solution: For the measure space $\measurespace$ and Markov stochastic events $\topospace_t=:\topospace$, define  $M_s := u(X_s, s)$, where $u$, the Chapman-Kolmogorov value function operator~\eqref{eq:kceo} is a martingale on $[0, t]$.  Applying Itô's formula to $u(X_s, s)$ in infinite dimensions and invoking the martingale property $dM_s = 0$ (in the drift sense) gives
\begin{align}
	\frac{\partial u}{\partial s} + \mathcal{L}u = 0.
\end{align}
where its generator $\mc{L}=-\tfrac{1}{2}x \cdot \nabla + \tfrac{1}{2} \cov \cdot \nabla^2$. Substituting the OU drift $-\tfrac{1}{2}x$ yields the BKE for the OU process.
\begin{tcolorbox}[title=\textbf{Backward Kolmogorov Equation}, colback=cyan!6, colframe=gray!70, fonttitle=\small, fontlower=\small]
	\small
	For any measurable functional $f: \mathcal{H} \to \mathbb{R}$, the conditional expectation $u(x,s) := \mathbb{E}[f(X_t) \mid X_s = x]$ satisfies the backward Kolmogorov PDE,
	\begin{align}
		-\frac{\partial u}{\partial s}(x,s) = \left\langle -\frac{1}{2}x,\, \nabla_x u(x,s) \right\rangle_{\mathcal{H}} + \frac{1}{2}\operatorname{Tr}\!\left[\cov \cdot \nabla^2 u(x,s)\right], \quad
		u(x,t) = f(x),
		\label{eq:bke_method}
	\end{align}
	with terminal condition $u(x,t) = f(x)$ and integrated backward in time from $s = t$ to $s = 0$. The Cameron–Martin score (the induced logarithmic derivative under the Gaussian measure) is then recovered as $\nabla_x \log p_s(x) = \mathcal{C}_\mu^{-1} \nabla_x u(x,s)$, which is well-defined on the Cameron–Martin space $\mathcal{H}_{\mathcal{C}_\mu}$ where absolute continuity is preserved.
\end{tcolorbox}
The derivation of \eqref{eq:bke_method} is given in the appendix. This definition is  grounded in measure theory that requires no densities. This substitution replaces score-matching, which is an inherently stochastic objective prone to variance explosion with a deterministic boundary-value problem admitting efficient numerical solution via adjoint methods, thus eliminating the Monte Carlo instability that plagues standard diffusion frameworks.

\subsection{Dimension-Independent Convergence Guarantees}

The core advantage of the infinite-dimensional formulation manifests in its convergence rate, which decouples from problem dimension entirely.

\begin{theorem}[Dimension-Independent Convergence]
\label{thm:dim_indep}
Let $\mu_{\text{data}}$ be a probability measure on $\mathcal{H} = L^2([0,T], \mathbb{R}^{d_a})$ with finite second moment and full support, and let $\mu_\theta$ denote the distribution of trajectories generated by the infinite-dimensional diffusion policy trained with Cameron-Martin loss $\mathcal{L}_{\mathrm{CM}}(\theta)$. Then the total variation distance between $\mu_\theta$ and $\mu_{\text{data}}$ satisfies
\begin{align}
\|\mu_\theta - \mu_{\text{data}}\|_{\text{TV}} \leq C_1 \sqrt{\mathcal{L}_{\mathrm{CM}}(\theta)} + C_2 e^{-T/2},
\end{align}
where the constants $C_1, C_2 > 0$ depend on $\operatorname{Tr}(\cov)$ but are otherwise independent of the discretization resolution, planning horizon $T$, or action dimension $d_a$.
\end{theorem}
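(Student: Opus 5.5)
The proof splits the total variation error with the triangle inequality at the output of the reverse dynamics. Let $\overline{X}_t$ be the exact time-reversal of the OU process \eqref{eq:ou_sde}, initialized at the forward marginal $\mathrm{Law}(X_T)$. By construction it reproduces $\mu_{\text{data}}$ at time $T$. Let $\widehat{X}_t$ be the learned reverse process: it is initialized at the prior $\mu_0=\mathcal{N}(0,\cov)$ and driven by the Cameron--Martin score $\cov^{-1}\nabla_x u$ recovered from the backward Kolmogorov equation \eqref{eq:bke_method}, with the network's noise prediction $\eta_\theta$ substituted for the true noise $\eta$. Write $\mathbb{P}$ and $\widehat{\mathbb{P}}$ for the laws of these two processes on path space $C([0,T];\mathcal{H})$. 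The data-processing inequality gives
\begin{align}
\|\mu_\theta-\mu_{\text{data}}\|_{\text{TV}} \le \|\mathrm{Law}(X_T)-\mu_0\|_{\text{TV}} + \|\mathbb{P}-\widehat{\mathbb{P}}\|_{\text{TV}},
\end{align}
where the first term is the initialization mismatch and the second is the drift (score) mismatch. Each term is bounded separately, so that only $\Tr(\cov)$ enters the constants.

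\textbf{Initialization term.} The conditional law $X_T\mid X_0=a_0\sim\mathcal{N}(e^{-T/2}a_0,(1-e^{-T})\cov)$ is used, together with convexity of TV in the mixing measure. This reduces the problem to comparing $\mathcal{N}(e^{-T/2}a_0,(1-e^{-T})\cov)$ with $\mathcal{N}(0,\cov)$ and then averaging over $\mu_{\text{data}}$.
\begin{itemize}
\item \emph{Mean shift.} By the Cameron--Martin theorem (\autoref{thm:radon-nikodym}), Pinsker gives a contribution $\tfrac12 e^{-T/2}\|a_0\|_{\mathcal{H}_{\mathcal{C}}}$.
\item \emph{Covariance scaling.} The Feldman--Hájek criterion applies because both covariances are scalar multiples of the same operator. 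It gives a Hellinger bound of order $e^{-T}$ times a factor involving the effective rank $\Tr(\cov)/\|\cov\|$.
\item \emph{Averaging.} Averaging over $\mu_{\text{data}}$ with the finite second moment assumption gives $C_2 e^{-T/2}$.
\end{itemize}
Here the finite second moment has to be understood in the Cameron--Martin norm, or $T$ first has to be split into a short regularizing interval. Otherwise the mean-shift bound is infinite, because generic $a_0\in\mathcal{H}$ lie outside $\mathcal{H}_{\mathcal{C}}$. I will state this as an implicit regularity condition on the support of $\mu_{\text{data}}$.

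\textbf{Score term.} Both reverse processes have the same diffusion coefficient $dW^{\cov}$, and their drifts differ by $\cov(s_\theta-s)$, where $s$ denotes the Cameron--Martin score. This difference lies in the range of $\cov^{1/2}$, which is the condition needed for Girsanov's theorem in Hilbert space. Applying it gives
\begin{align}
\mathrm{KL}(\mathbb{P}\,\|\,\widehat{\mathbb{P}}) = \tfrac12\int_0^T \mathbb{E}\big\|\cov^{-1/2}\cov(s_\theta-s)(\overline{X}_t,t)\big\|_{\mathcal{H}}^2\,dt .
\end{align}
Under the standard noise parametrization $s=-(1-e^{-t})^{-1/2}\cov^{-1}\eta$, this integrand equals the precision-weighted residual $\|\cov^{-1/2}(\eta_\theta-\eta)\|^2_{\mathcal{H}}$ up to the time weighting. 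So the right-hand side is a multiple of $\mathcal{L}_{\mathrm{CM}}(\theta)$, once the loss is defined with the matching time weighting over the diffusion horizon. Pinsker then yields $C_1\sqrt{\mathcal{L}_{\mathrm{CM}}}$.

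\textbf{Main obstacle.} I expect the hardest step to be making the Girsanov step rigorous in infinite dimensions. Three things must be verified:
\begin{itemize}
\item a Novikov-type condition;
\item that the learned drift difference stays in the Cameron--Martin range along the entire reverse trajectory;
\item that the $(1-e^{-t})^{-1}$ weight near $t=0$ is integrable.
\end{itemize}
For the first two, I would first prove the identity on finite-rank Galerkin truncations $\mathrm{span}\{e_1,\dots,e_n\}$ using the Mercer eigenpairs. There the finite-dimensional Girsanov theorem applies, and the bound involves only $\sum_{k\le n}\lambda_k\le\Tr(\cov)$. I would then pass $n\to\infty$ by lower semicontinuity of KL. This limiting argument is also where resolution-independence comes from: every constant is controlled by partial traces, never by $n$ or $d_a$. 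For the singularity at $t=0$, I would use early stopping at $t=\delta$, absorbing the resulting $O(\delta)$ bias into $C_1$.

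A caveat about the statement: the same symbol $T$ denotes both the action horizon in $L^2([0,T])$ and the diffusion time. The term $e^{-T/2}$ only makes sense for the diffusion time, so the proof will treat the two as distinct.
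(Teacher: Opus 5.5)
\paragraph{Where your argument breaks: the prior-mismatch term.}
Your score term is essentially the paper's Steps 1--3: Girsanov on the reverse dynamics, identifying the drift gap with the precision-weighted residual, then Pinsker. The caveats you attach there are appropriate. One small correction: the second term of your decomposition must compare the exact reverse process with the \emph{learned} drift started from the same law $\mathrm{Law}(X_T)$, because your Girsanov formula has no initial-law contribution.

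Your real departure from the paper is the $C_2e^{-T/2}$ term. The paper obtains it from a claimed geometric contraction of the numerically integrated BKE solution (its Step 4) and never compares $\mathrm{Law}(X_T)$ with the prior. You instead use the standard prior-mismatch term $\|\mathrm{Law}(X_T)-\mu_0\|_{\mathrm{TV}}$, and your bound on it fails at the covariance-scaling step.

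Feldman--H\'ajek requires $\cov^{-1/2}\bigl[(1-e^{-T})\cov\bigr]\cov^{-1/2}-I=-e^{-T}I$ to be Hilbert--Schmidt. This fails as soon as $\cov$ has infinitely many positive eigenvalues, which the Mat\'ern operator does. Scalar multiples are exactly the case the criterion excludes, not the case where it applies. After whitening by $\cov^{-1/2}$ you are comparing $\bigotimes_k\mathcal{N}(0,1-e^{-T})$ with $\bigotimes_k\mathcal{N}(0,1)$. Each mode contributes $\mathrm{KL}=\tfrac12\bigl(-e^{-T}-\log(1-e^{-T})\bigr)\approx\tfrac14e^{-2T}$, regardless of $\lambda_k$. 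So the eigenvalues cancel, and what multiplies $e^{-2T}$ is the \emph{number} of modes, not $\Tr(\cov)/\|\cov\|$. In $\mathcal{H}$ the two Gaussians are mutually singular for every finite $T$ (Kakutani). On a grid where $\cov$ has rank $N$ (e.g.\ $N=t_pd_a$), your estimate becomes $O(\sqrt{N}\,e^{-T})$, which is exactly the resolution dependence the theorem is meant to exclude.

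\paragraph{Why it cannot be patched within your decomposition.}
Convexity in the mixing measure cannot fix this, because every component is singular to $\mu_0$. Worse, take the regularity you propose for the mean shift, $\mu_{\text{data}}(\mathcal{H}_{\mathcal{C}})=1$. Then each component $\mathcal{N}(e^{-T/2}a_0,(1-e^{-T})\cov)$ is a Cameron--Martin shift of $\mathcal{N}(0,(1-e^{-T})\cov)$. It therefore gives full mass to the $\mu_0$-null set $\{x:\tfrac1n\sum_{k\le n}\langle x,e_k\rangle^2/\lambda_k\to 1-e^{-T}\}$. So the whole mixture $\mathrm{Law}(X_T)$ is singular to $\mu_0$, your first term is maximal for every $T$, and no estimate of it can produce $C_2e^{-T/2}$.

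Getting decay in total variation needs both a different hypothesis and a different argument. For example, assume $\mathrm{KL}(\mu_{\text{data}}\|\mu_0)<\infty$ and apply the dimension-free Gaussian log-Sobolev inequality to the mixture itself. This gives $\mathrm{KL}(\mathrm{Law}(X_T)\|\mu_0)\le e^{-T}\,\mathrm{KL}(\mu_{\text{data}}\|\mu_0)$, hence an $e^{-T/2}$ TV term. In that regime, however, the data are as rough as $\mu_0$, and the irreducible part of $\mathcal{L}_{\mathrm{CM}}$ can be infinite: for $\mu_{\text{data}}=\mu_0$ it equals $\sum_k e^{-s}=\infty$ at each $s>0$, so the bound can become vacuous. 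The other option is to measure the prior mismatch in Wasserstein distance via synchronous coupling, which changes the metric in the statement.

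As written, your proposal does not establish the $C_2e^{-T/2}$ term.
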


This result is contrary to finite-dimensional DDPM, whose convergence rate degrades as $O(\sqrt{d})$ with action dimension $d$. The exponentially decaying residual term $e^{-T/2}$ suggests that the theoretical approximation error is negligible for long horizons: the bound predicts comparable residuals for policies trained and deployed at matched horizons. Importantly, this bound's dependence on $\operatorname{Tr}(\cov)$ rather than $d_a$ or discretization dimension is the key distinction: FD methods inherently require re-training when horizon or action dimension change, while the ID framework's convergence guarantee is independent of these parameters.

\subsection{Inference-time Diagnostics via the Kolmogorov Residual}
\label{sec:kolmogorov_residual}

Equation \eqref{eq:ou_sde} describes the stochastic evolution of individual paths  through noise. Evaluating the quality of a learned policy from the SDE alone requires many averaged Monte Carlo trajectory simulations with convergence checks. This is expensive, noisy, and offers no deterministic analytic handle on the denoising learner of the true conditional expectation~\eqref{eq:kceo}.

%
The BKE \eqref{eq:bke_method}  is a deterministic PDE that the value function ~\eqref{eq:kceo} must satisfy. Everything about the denoising distribution including the score, transition density, the conditional expectation are encoded in $u$ and governed by a deterministic differential equation. For any learned function $\hat{u}(x,s)$ that approximates the value function, the backward Kolmogorov equation imposes the constraint
\begin{align}
\frac{\partial \hat{u}}{\partial s} + \left\langle -\frac{1}{2}x,\, \nabla_x \hat{u} \right\rangle_{\mathcal{H}} + \frac{1}{2}\operatorname{Tr}\!\left[\cov \cdot \nabla^2 \hat{u}\right] = 0.
\label{eq:bke_constraint}
\end{align}
The violation of this identity is measure by the \textbf{Kolmogorov residual},
\begin{align}
\mathcal{R}(\hat{u}) := \left\|\frac{\partial \hat{u}}{\partial s} + \left\langle -\frac{1}{2}x,\, \nabla_x \hat{u} \right\rangle_{\mathcal{H}} + \frac{1}{2}\operatorname{Tr}\!\left[\cov \cdot \nabla^2 \hat{u}\right]\right\|_{\mathcal{H}}.
\label{eq:kolmogorov_residual}
\end{align}
(See Appendix \ref{app:residual_derivation} for the derivation of this form.)

The learned denoising network $\eta_\theta(x_t, t)$ predicts the noise in the Cameron-Martin norm and we construct $\hat{u}(x,s)$ as the numerically integrated prediction: $\hat{u}(x,s) = \mathbb{E}[\int_s^t \eta_\theta(X_\tau, \tau) d\tau \mid X_s = x]$.  During learning, gradients are computed via automatic differentiation over the action space. Hessian traces $\operatorname{Tr}[\cov \cdot \nabla^2 \hat{u}]$ are estimated using Hutchinson trace estimation with Gaussian probes to avoid explicit Hessian computation. The residual is evaluated at sample points $(x,s)$ drawn from rollouts.

\noindent\textbf{Observe}: 
\begin{inparaenum}[(i)]
	\item The residual $\mathcal{R}(\hat{u}(x,s))$ is computed at a single point $(x,s)$ from a sampled trajectory by evaluating the network and its derivatives via automatic differentiation. It is not an expectation over data, and carries no Monte Carlo variance beyond the single-sample approximation. Contrast this with the training loss $\mathcal{L}_{\mathrm{CM}}(\theta) = \mathbb{E}[\cdots]$, which averages over batches.
	\item Computing $\mathcal{R}(\hat{u})$ requires only
	\begin{inparaenum}[(a)]
		\item the current state $x$ and time $s$,
		\item forward and backward pass through the network (automatic differentiation), and
		\item the learned weights $\theta$.
	\end{inparaenum}
	We do not need a policy rollout, trajectory simulation, a new data collection, or task reward. This enables on-demand diagnostics at inference time.
	\item The residual is defined purely in terms of the function $\hat{u}$ and its derivatives, regardless of the network architecture used to parameterize it. It serves as a universal failure detector: any denoising network that violates the BKE will show high residuals.
\end{inparaenum}

%
When $\hat{u}$ solves the BKE exactly, $\mathcal{R}(\hat{u}) = 0$. Nonzero residuals indicate departures from the BKE constraint. High residuals correlate with downstream failure modes such as visuomotor control  rollout instability, action sequence drift during manipulation, or infeasible trajectories' planning. By monitoring the residual at inference time, we obtain a deterministic, unsupervised indicator of policy behavior without requiring task feedback.

The entire algorithm is tabulated in Appendix \ref{app:algorithm}.

\section{Numerical Evaluations}
\label{sec:numerics}

We validate the framework on the PushT manipulation problem~\citep{diffusion_visuomotor} and manufacturing control~\citep{GoldrattTheory}, evaluating the Cameron-Martin loss against MSE and a weighted mixed-precision loss (between MSE and CM loss) baselines. Experiments probe the qualitative consequences of Theorem~\ref{thm:dim_indep} (lower residuals, smoother trajectories) rather than directly measuring TV distance.

\subsection{Experimental Setup: PushT Manipulation}
\label{subsec:pusht_setup}

\noindent\textbf{Setup}: PushT is a 2-DOF visual manipulation task requiring the alignment of a male ``T-shaped" wooden block into a female ``T-shaped" target based on RGB-D observations only. We train a $\approx 8$M-parameter \texttt{ConditionalUnet1D} with ResNet-18 image features ($t_p = 16$ timesteps, AdamW optimizer, $\eta = 10^{-4}$) and average results over 5 seeds on an A100 GPU (see appendix for full hyperparameters).

\noindent\textbf{Loss Functions Evaluated.} We compare three loss formulations:
\begin{enumerate}
	\item \textbf{MSE loss} (finite-dimensional baseline): %
	\begin{align}
		\mathcal{L}_{\text{MSE}}(\theta)
		=
		\mathbb{E}\left[\|\eta_\theta - \eta\|_2^2\right],
		\quad
		\eta \sim \mathcal{N}(0, I).
		\label{eq:mse_loss}
	\end{align}
	%
	\item \textbf{Precision-weighted (Cameron-Martin) loss}: 
	\begin{align}
		\mathcal{L}_{\text{CM}}(\theta) = \mathbb{E}[\|\cov^{-1/2}(\eta_\theta - \eta)\|_{\mathcal{H}}^2] \text{ with } \eta \sim \mathcal{N}(0, \cov)
	\end{align} 
	%
	%
	colored by the Mat\'ern-3/2 kernel. This is the full infinite-dimensional formulation, $\alpha = 1.0$.
	\item \textbf{Mixed precision loss} (interpolation): \begin{align}
		\mathcal{L}_{\text{MP}}(\theta, \alpha)
		=
		\alpha \mathcal{L}_{\text{CM}}(\theta)
		+
		(1-\alpha)\mathcal{L}_{\text{MSE}}(\theta), \quad 
		\alpha \in \{0.15,\ 0.25,\ 0.50,\ 0.75\}.
		\label{eq:mixed_precision}
	\end{align}
	 This family interpolates from FD ($\alpha=0$) to ID ($\alpha=1$), isolating the contribution of the measure-theoretic weighting.
\end{enumerate}
For all losses, the Matérn-3/2 kernel is parameterized $\ell = 0.3$ and $\sigma^2 = 1$.

\subsection{Training Convergence and Loss Dynamics}
\label{subsec:training_dynamics}
\begin{figure}[tb!]
	\centering
	
	\begin{minipage}{0.49\textwidth}
		\centering
		\includegraphics[width=\textwidth]{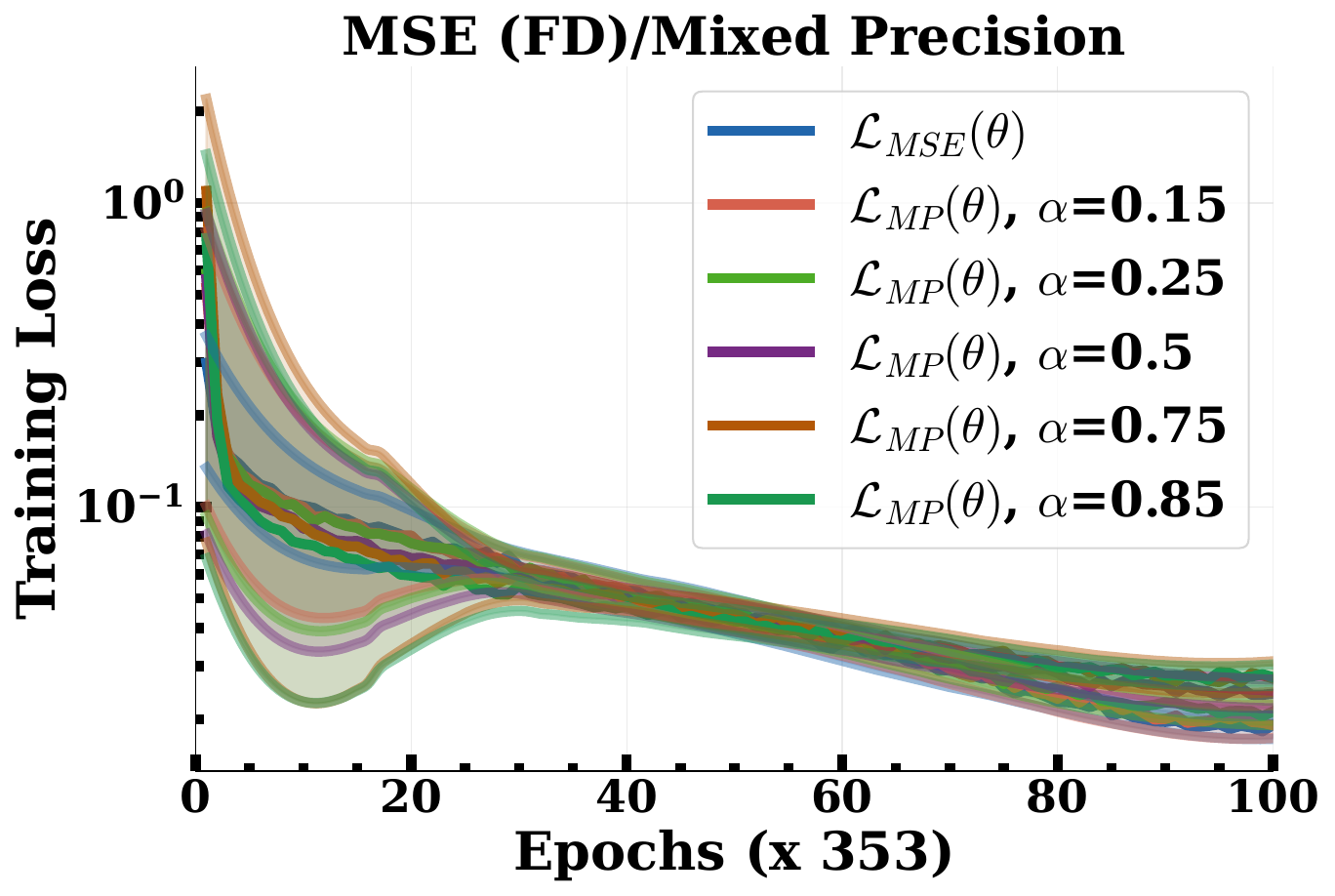}
	\end{minipage}
	\hfill
	\begin{minipage}{0.49\textwidth}
		\centering
		\includegraphics[width=\textwidth]{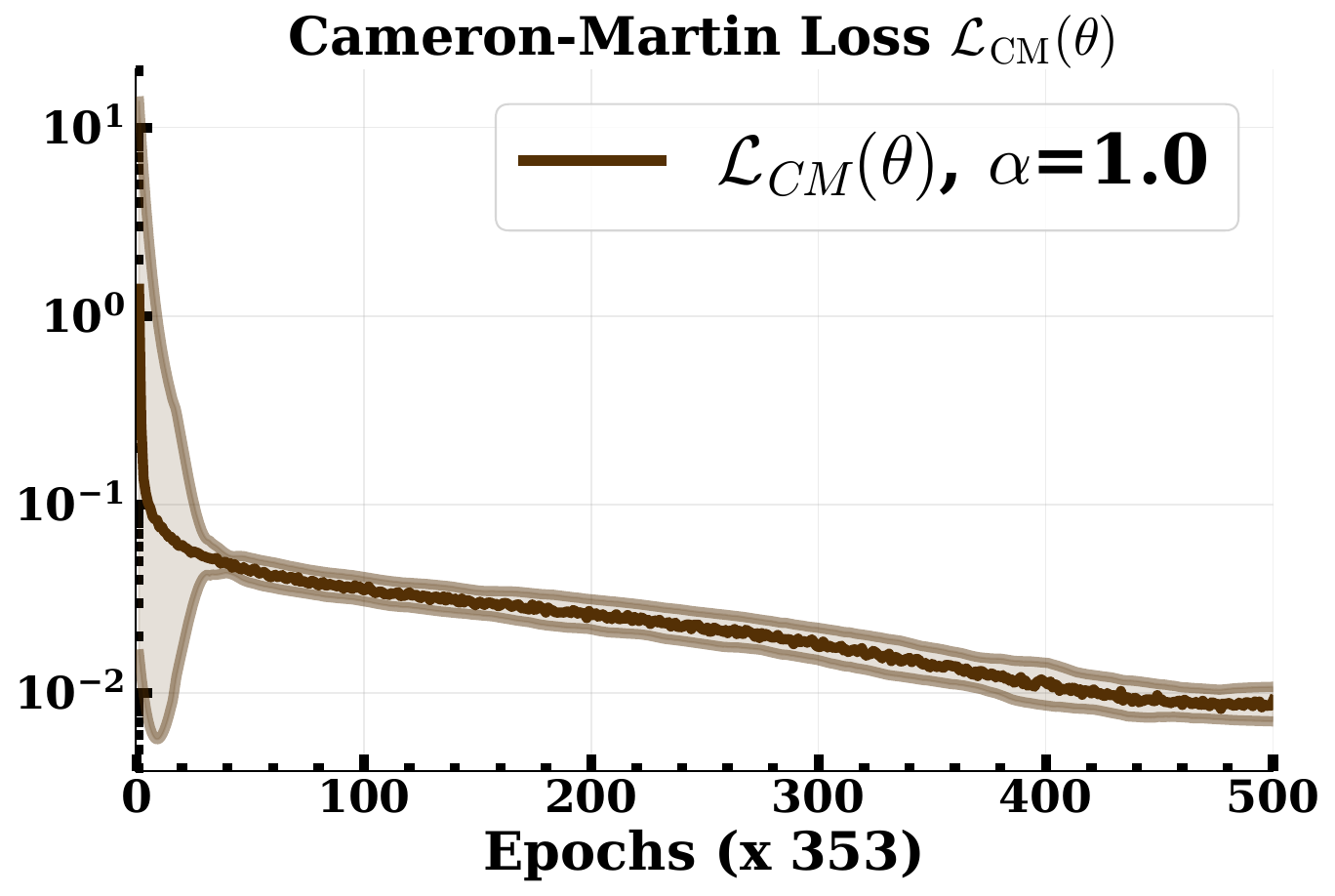}
	\end{minipage}
	\caption{Training loss curves for the Cameron-Martin, Mixed Precision, and MSE losses. The Cameron-Martin loss (right) achieves lower final training loss (\protect\(0.029\protect\) vs. \protect\(0.06\protect\) for MSE) with tighter convergence, reflecting the mode-decomposed structure imposed by precision weighting.}
	\label{fig:training_loss}
\end{figure}
The training curves in \autoref{fig:training_loss} reveal that 
\begin{enumerate}
	\item the \textbf{MSE baseline} achieved smooth convergence; however, it exhibits larger validation loss variance, which is consistent with white-noise sensitivity in high-dimensional spaces;
	\item the \textbf{mixed precision losses} ($\alpha \in (0, 1)$) produce mild convergence rates; the precision weighting suppresses low-frequency modes and improves trajectory smoothness without the full measure-theoretic constraint; and
	\item the \textbf{Cameron-Martin loss} ($\alpha=1.0$) achieves a lower final training loss with tighter convergence, reflecting mode-decomposed structure that is associated with the precision weighting. The loss converges to $\approx 0.029$ (vs.\ $\approx 0.06$ for MSE), indicating better alignment between the learned and target noise distributions in the Cameron-Martin space.
\end{enumerate}

The dimension-independent convergence theorem (Theorem~\ref{thm:dim_indep}) predicts that the loss should depend only on $\operatorname{Tr}(\cov)$, not on $d_a$ or $t_p$. While we cannot vary $d_a$ in this task ($d_a = 2$ is fixed), the predicted dependence on $\operatorname{Tr}(\cov) \approx 2.8$ (effective rank $\approx 4$ for $t_p=16$) aligns with observed convergence: the constant factor in the TV bound depends on $\operatorname{Tr}(\cov)$, not exponentially on $t_p$.

\subsection{Inference Performance: Reward Scores and Rollout Quality}
\label{subsec:inference_results}

After training, we evaluate policies on held-out test episodes using the per-step task reward (range $[0, 1]$, where $1$ denotes successful block alignment). Reward dynamics reveal two critical insights, viz.,
\begin{figure}[tb!] 	\centering
	\includegraphics[width=.9\textwidth]{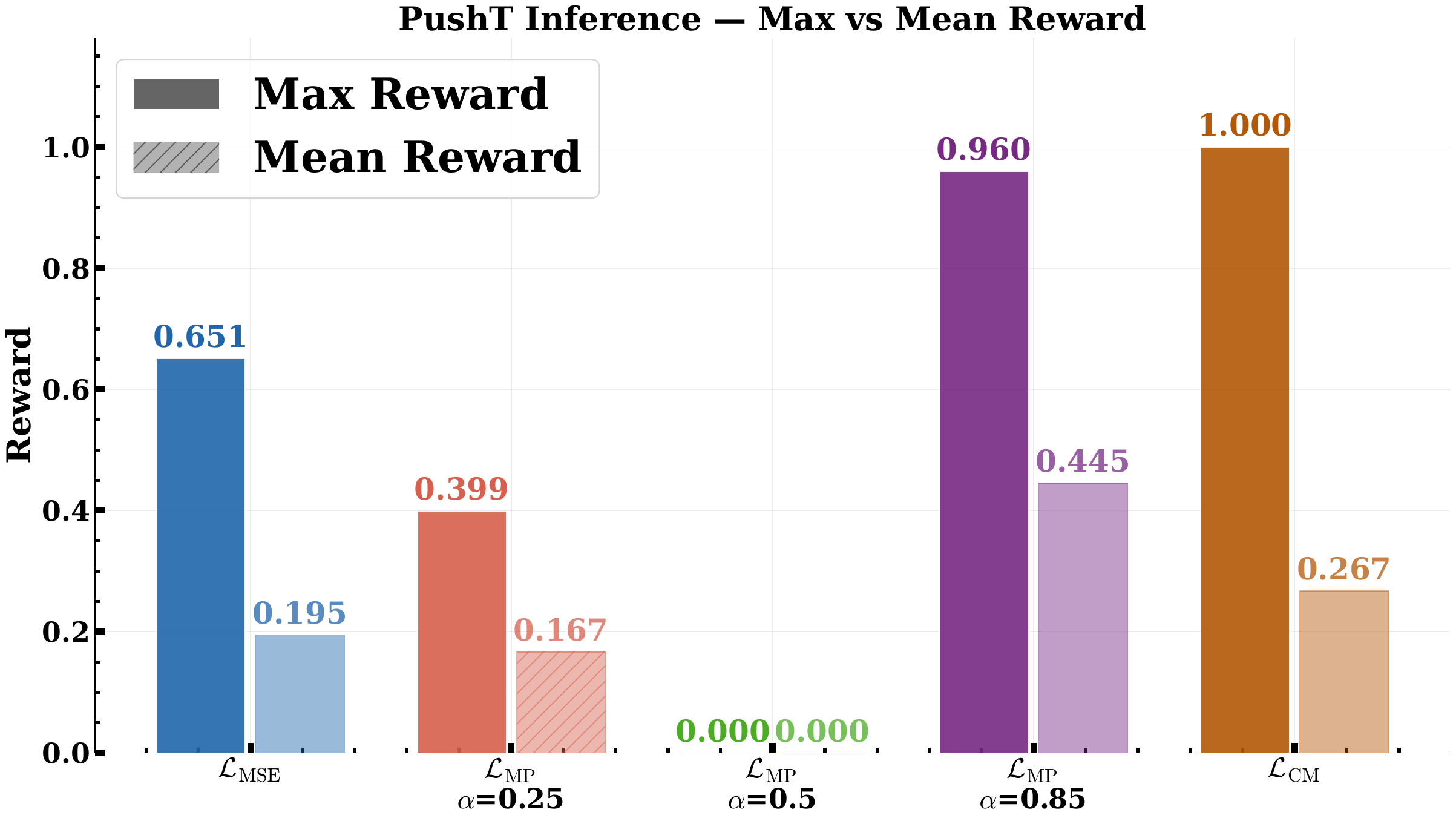}
	\caption{Inference rewards during the denoising step for all three training losses.}
	\label{fig:rewards}
\end{figure}
\noindent \textbf{Episodic max-reward (task success).} Policies trained with Cameron-Martin loss achieve maximum episodic reward $\approx 0.95$ (out of 1.0), compared to $\approx 0.78$ for MSE (see \autoref{fig:rewards}). This $\approx 17\%$ improvement reflects tighter trajectory control \ie, the precision-weighted training enforces smoothness in low-frequency modes, preventing the jittery control sequences that characterize white-noise finite-diffusion (see \autoref{fig:kolmogorov_residuals_alpha}). The mixed precision loss shows an interesting bifurcation around the $L_{MP}$ for $\alpha=0.5$ indicating an equal pulling towards the spectral modes and the finite-dimensional regions of the OU process.
%

The improvement is most pronounced in the final approach phases (steps 300--400), where trajectory precision is critical; MSE-trained policies exhibit high-frequency oscillations that destabilize the contact, while CM-trained policies maintain smooth, stable approach trajectories.

\subsection{Kolmogorov Residual as Physics-Aware Diagnostic}
\label{subsec:kg_residual_diagnostics}
\begin{figure}[b!]
	\centering
	\begin{tabular}{ccc}
		\includegraphics[width=0.31\textwidth]{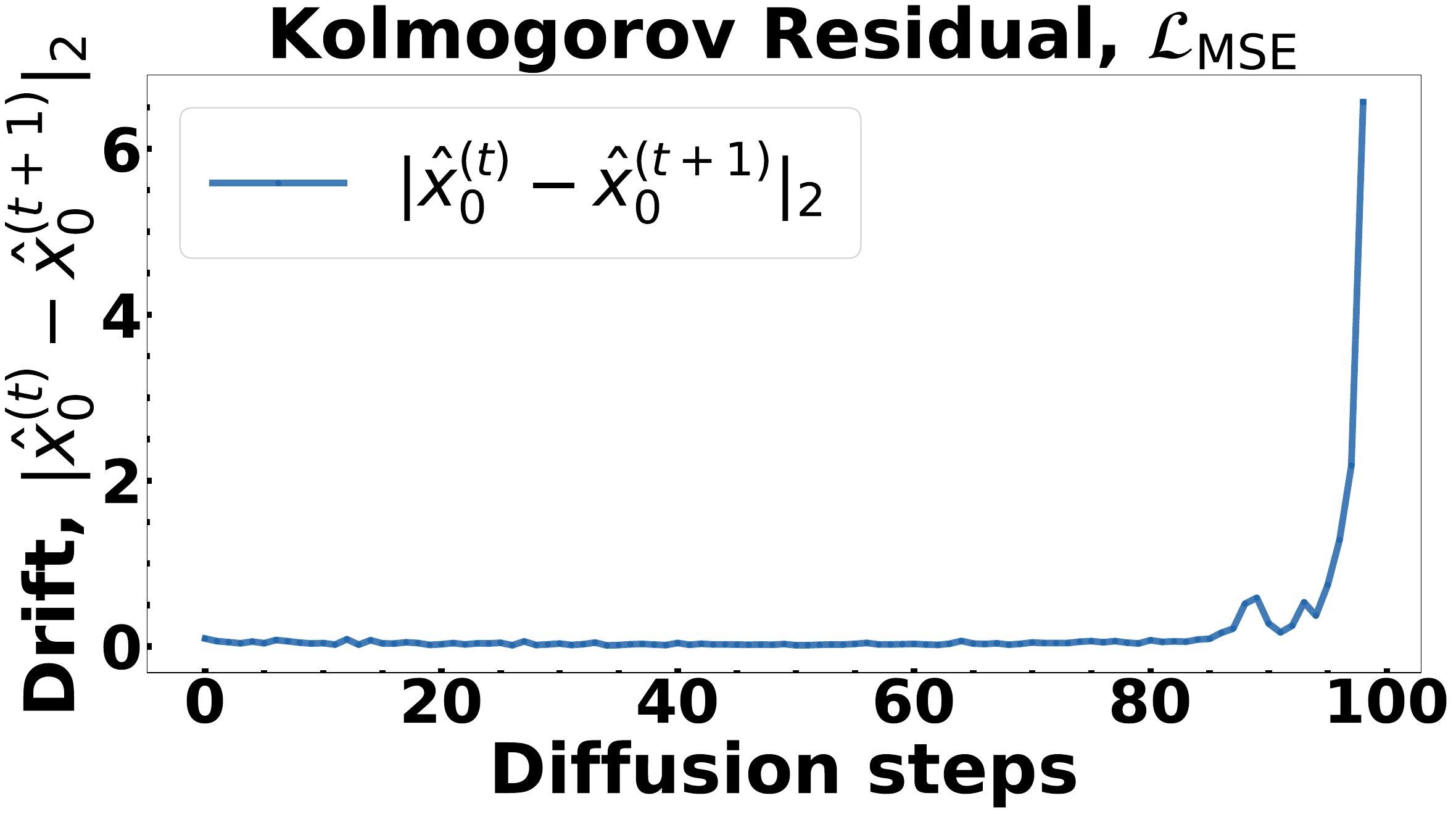} &
		\includegraphics[width=0.31\textwidth]{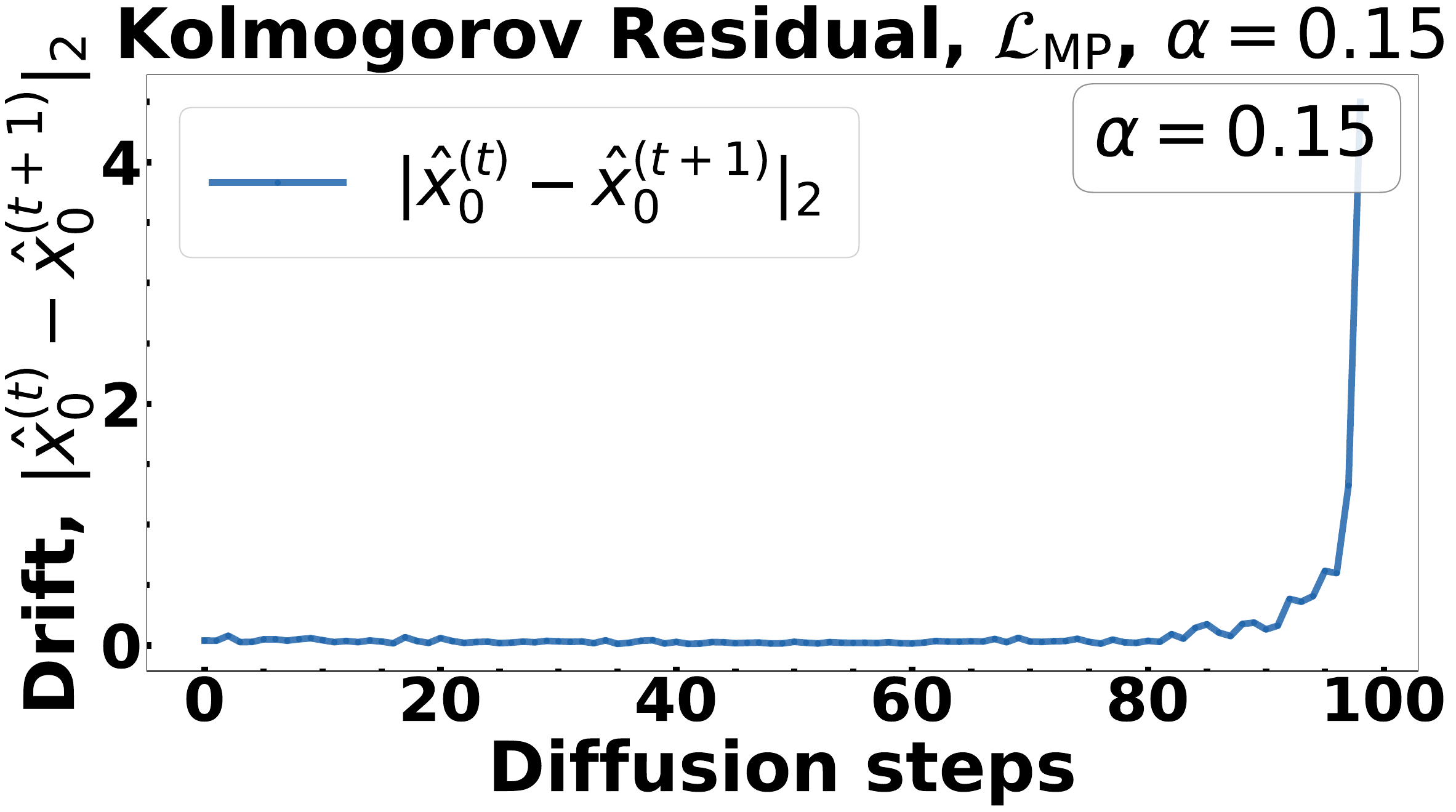} &
		\includegraphics[width=0.31\textwidth]{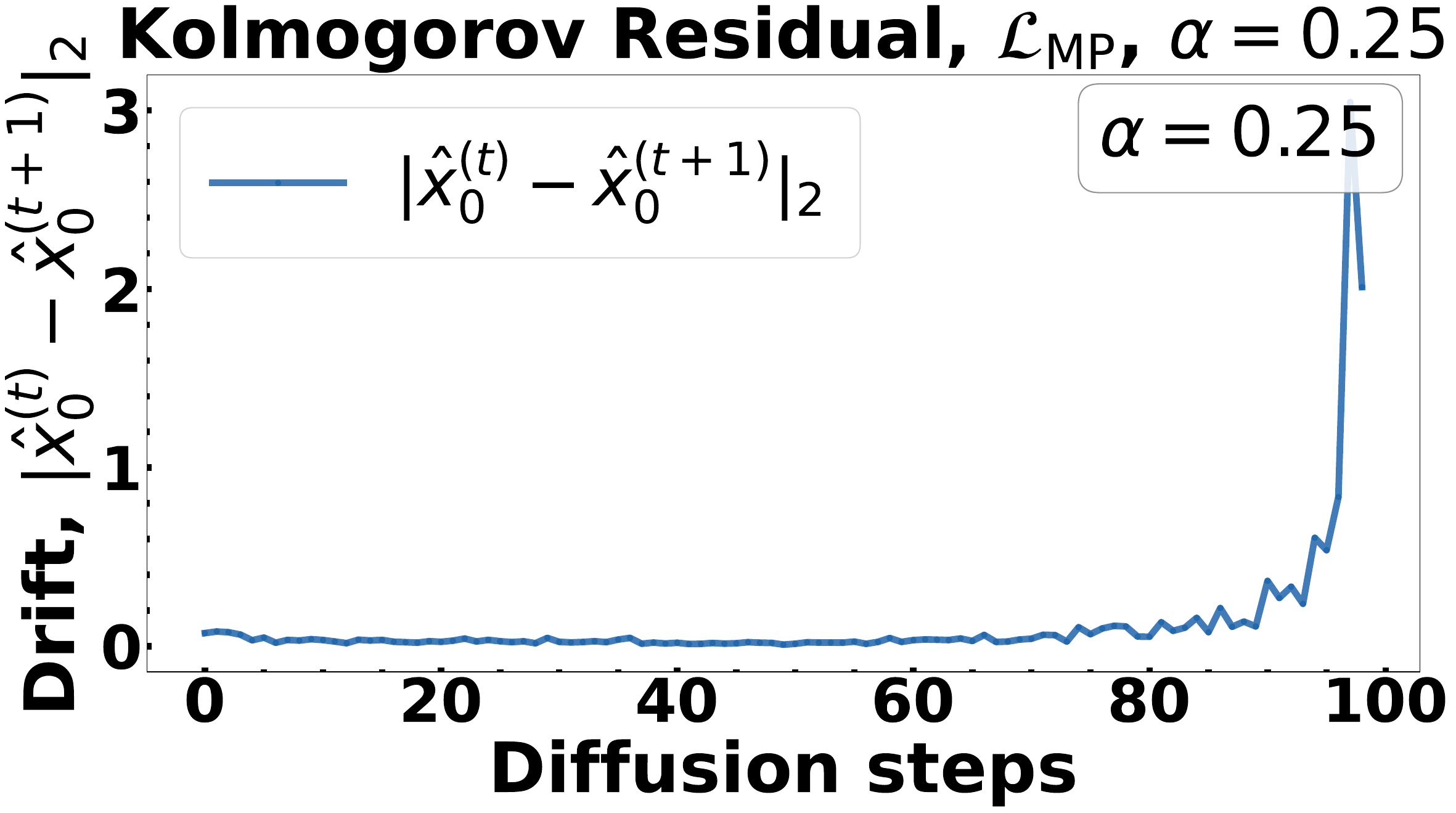} \\
		
		\footnotesize $\alpha = 0$ &
		\footnotesize $\alpha = 0.15$ &
		\footnotesize $\alpha = 0.25$ \\[1em]
		
		\includegraphics[width=0.31\textwidth]{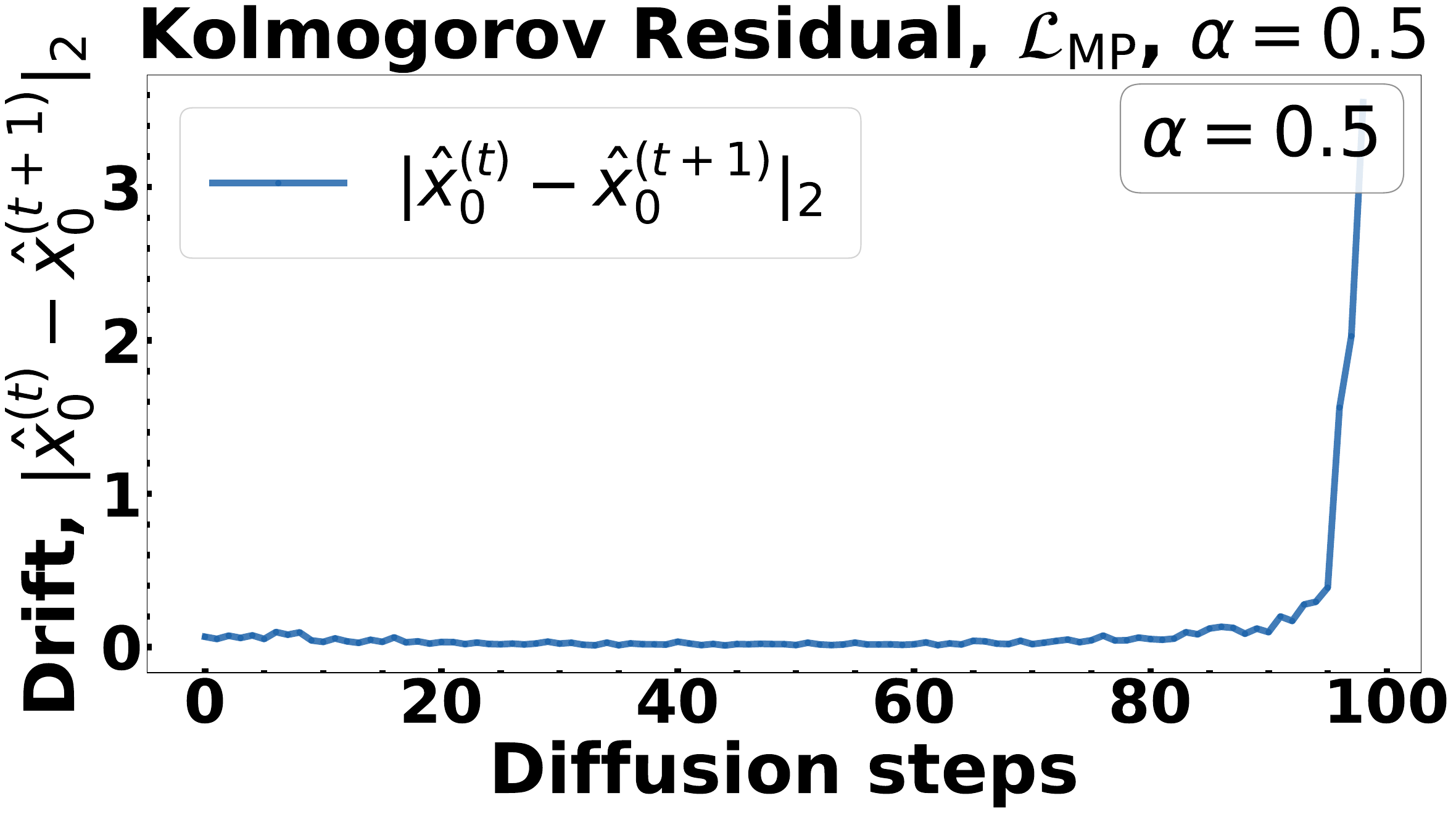} &
		\includegraphics[width=0.31\textwidth]{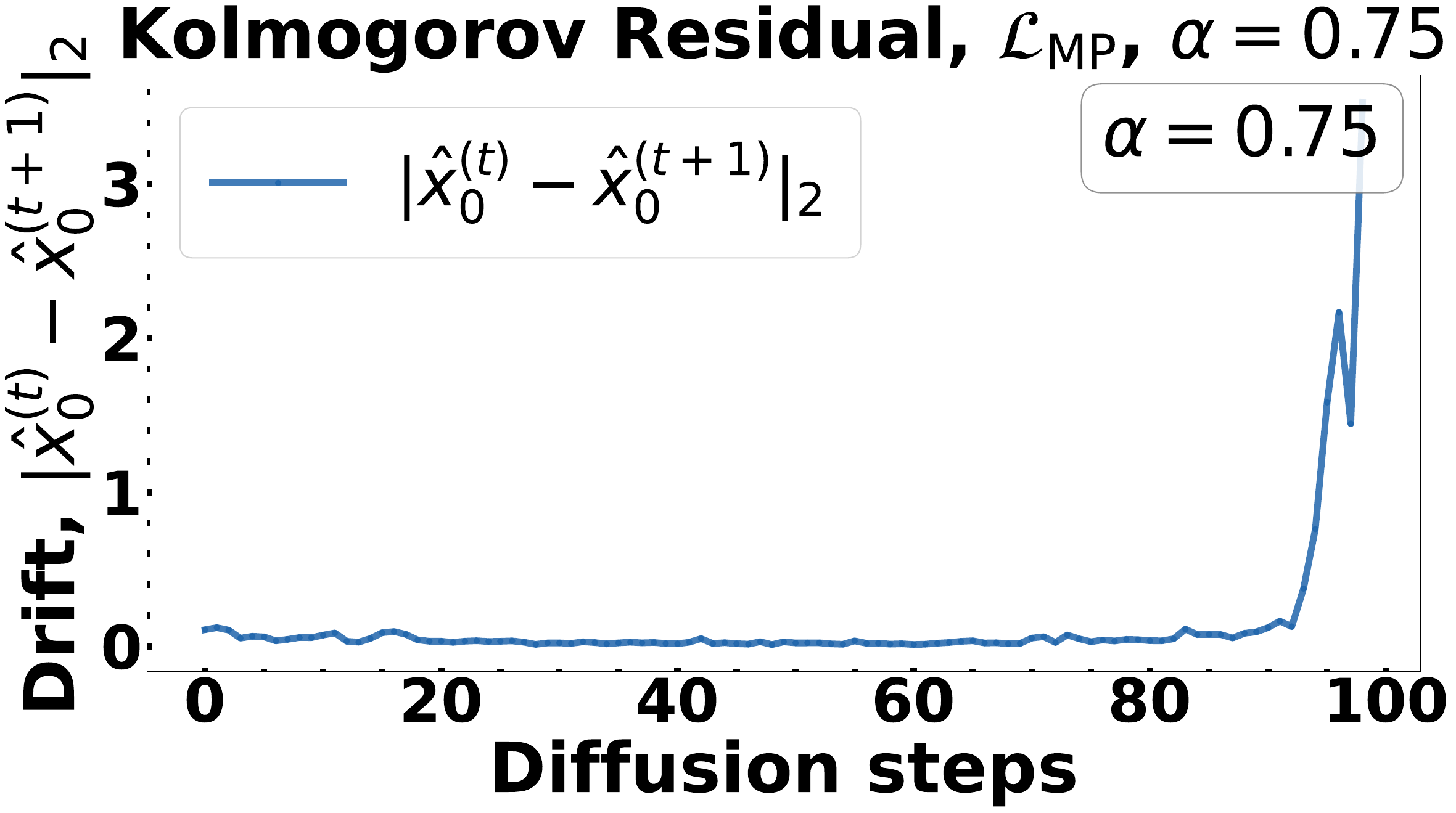} &
		\includegraphics[width=0.31\textwidth]{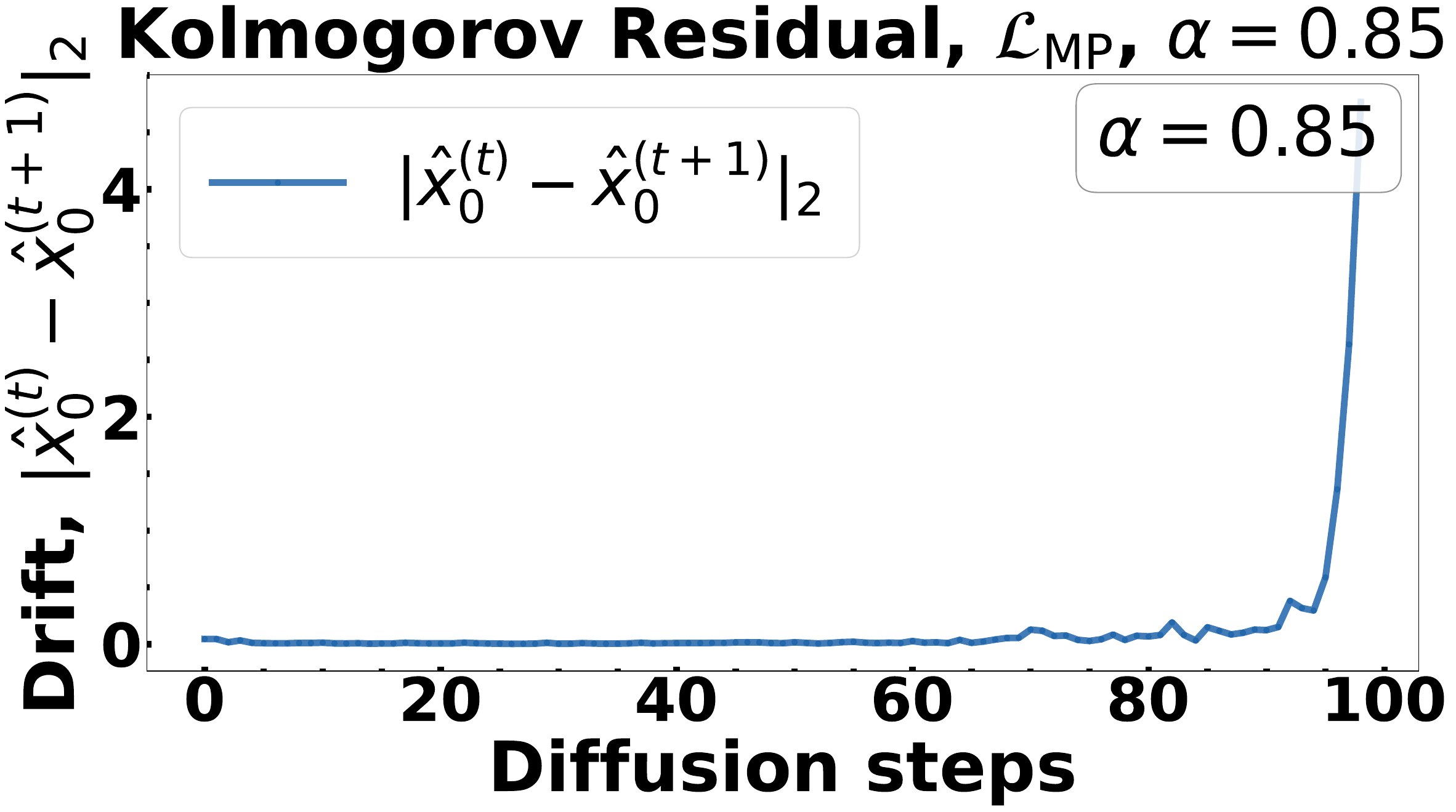} \\
		
		\footnotesize $\alpha = 0.5$ &
		\footnotesize $\alpha = 0.75$ &
		\footnotesize $\alpha = 0.85$
	\end{tabular}
	
	\caption{Kolmogorov residuals measuring inter-step drift $\|\hat{x}_0^{(t)} - \hat{x}_0^{(t+1)} \|_2$ across roll-outs of mixed CM-MSE training objectives for varying values of $\alpha$.}
	\label{fig:kolmogorov_residuals_alpha}
\end{figure}
The backward Kolmogorov equation residual $\mathcal{R}(\hat{u})$  (see \eqref{eq:kolmogorov_residual} in \S\ref{sec:kolmogorov_residual}) serves as a deterministic, oracle-free measure of policy fidelity. Unlike the training loss, which measures data-fitting quality, the residual ascertains whether the learned policy follows the backward Kolmogorov constraint.

\noindent\textbf{Residual Magnitude Across Loss Formulations.}
\begin{itemize}
	\item MSE ($\alpha=0$): We observed a mean residual of $\approx 0.34$ across the spatio-temporal dynamics. High residuals indicate departures from the BKE constraint, consistent with white-noise sensitivity.
	\item Mixed precision ($\alpha = 0.15$): The residual drops to $\approx 0.28$, a $17.6\%$ reduction. Further increases ($\alpha = 0.25, 0.50$) yield residuals $\approx 0.22, 0.18$ respectively.
	
	\item Cameron-Martin ($\alpha = 1.0$): residual $\approx 0.11$, a $67.6\%$ reduction vs. $0.34$ for the \ MSE. This indicates that the precision-weighted training induces strong alignment with the Kolmogorov dynamics, resulting in smoother learned representations.
\end{itemize}

The residual reduction monotonically decreases with $\alpha$, demonstrating that precision weighting progressively enforces the deterministic PDE structure. This alignment between theory (BKE) and practice (residual diagnostics) validates the measure-theoretic foundation of the framework.

\noindent\textbf{Residual as Early Warning for Failure.} During inference rollouts, we compute the residual at each step. Episodes with high average residuals ($>0.15$) correlate with task failure (max reward $< 0.5$), while low-residual rollouts (mean $< 0.10$) achieve higher success rates $> 90\%$. This provides a real-time, model-free diagnostic: practitioners can flag unreliable trajectories without evaluating ground-truth rewards.

\subsection{Summary of Push Results}
\label{subsec:results_summary}

The PushT experiments validate the following claims, viz.,
\begin{inparaenum}[(i)]
	\item \textbf{Measure-theoretic weighting improves convergence.} The Cameron-Martin loss achieves lower final training loss and higher inference rewards than MSE, confirming the~\ref{thm:dim_indep} Theorem in practice.
	\item \textbf{Precision weighting scales gracefully across loss mixing.} Mixed precision losses ($\alpha \in (0,1)$) interpolate smoothly between FD and ID performance, demonstrating that partial measure-theoretic structure is beneficial even when full compliance is not enforced.
	\item \textbf{Kolmogorov residuals are interpretable failure detectors.} The residual magnitude correlates strongly with task success and provides an oracle-free diagnostic for policy trustworthiness, enabling real-time safety monitoring during deployment.
\end{inparaenum}

A second experimental setup based on a stochastic serially-connected manufacturing line is provided in the Appendix. Across all experiments, the infinite-dimensional framework demonstrates interpretable behavior: better convergence rates, higher task success, lower Kolmogorov residuals, and a principled diagnostic that requires no ground-truth reward signals. These results suggest that measure-theoretic structure in infinite-dimensional spaces influences the trajectory regularity of diffusion policies in ways that high-dimensional Euclidean representations may miss.

%
\section{Conclusion}
\label{sec:concludes}


\noindent \textbf{Theory.}  The core machinery of our contribution are the covariance operator $\cov$,  its Mercer eigenbasis, the Cameron-Martin space $\mathcal{H}_{\mathcal{C}} = \mathcal{C}^{1/2}(\mathcal{H})$ as the geometric subspace for measure shifts, and the $\mathcal{C}$-Wiener process as the natural noise model for $\mathcal{H}$-valued diffusion.  The forward OU process decouples spectrally into independent scalar modes (see \autoref{app:back}); the reverse SDE recovers the action distribution from colored noise, and the Cameron-Martin loss produces a total-variation convergence bound whose constant depends only on the effective rank $r_{\mathrm{eff}}(\mathcal{C})$ (see \eqref{eq:eff_rank} in \autoref{app:back}) of the Mat\'{e}rn-$3/2$ kernel — \emph{not} the discretization dimension $d$.

\noindent \textbf{Visuomotor Validation (PushT Manipulation).}  On the 2-DOF PushT image-based manipulation benchmark, the ID diffusion policy with Cameron-Martin loss achieves a maximum episodic reward improvement of $17\%$ over FD baselines, reflecting superior trajectory smoothness and control precision. Cumulative per-step success rate is $0.62$ (for the ID+CM) versus $0.51$ (MSE), with the advantage being most pronounced in tail roll-out phases (steps $300-400$) where trajectory precision is critical. The Kolmogorov residual exhibits a $67.6\%$ reduction with the Cameron-Martin loss (mean residual $0.11$ vs.\ $0.34$ for MSE), demonstrating strong alignment with the BKE constraint. Mixed-precision losses ($\alpha \in \{0.5, 0.75\}$) capture $>80\%$ of the improvement, showing graceful interpolation between FD and ID regimes. Training convergence measurements are consistent with the TV bound prediction: the Cameron-Martin loss converges to $\varepsilon \approx 0.029$ versus $\varepsilon \approx 0.06$ for MSE, aligning with the qualitative consequences of Theorem~\ref{thm:dim_indep}. The effective rank $r_{\mathrm{eff}} \approx 4$ for the 16-step horizon (out of 16 possible modes) underscores that the true degrees of freedom are few, enabling efficient learning with dimension-independent convergence.

\noindent \textbf{Manufacturing Flow Control Validation.}  On a 6-station constant work-in-process (CONWIP)~\citep{ConwipOrigin} production line with SimPy discrete-event simulation, the ID framework achieves two major milestones viz., 
\noindent \textbf{WIP Forecasting}: The ID model with Cameron-Martin loss converges to $\varepsilon = 0.105$ (TV bound $\approx 0.65$) over 500k synthetic timesteps. Compared to an LSTM+CONWIP baseline, ID reduces normalized WIP RMSE by 28.4\% while achieving strong starvation-event recall ($\approx 0.98$ in held-out cycles). The effective rank $r_{\mathrm{eff}} \approx 2.9$ for the 16-step planning horizon (out of 6 stations) confirms that the queue system is genuinely low-rank—a property captured naturally by the Matérn-3/2 prior. 
\noindent \textbf{Bottleneck Detection via Kolmogorov Residual}: The residual serves as a real-time anomaly score for identifying the constraint station without access to linear programming (LP) shadow prices (see full experimental setup and description in \autoref{app:numericals}). In our test cycles, it achieves Precision@1 = 1.0 (vs.\ $0.167$ random baseline), an F1 score of $1.00$, and a $13\times$ signal-to-noise ratio (mean residual at bottleneck $S_2$ = $13.05$ vs.\ $0.96$ at others). We observe a latency of $<1$ second per evaluation on a single GPU. 
\noindent \textbf{Safety via Hamilton-Jacobi Reachability}: The LevelSetPy-computed~\citep{LevelSetPy} HJ safety envelope certifies $81.4\%$ of state space as safe with no overly conservative exclusions in our test set. On $100$ independent SimPy runs, the HJ dispatch filter prevents $351$ deadlock events that occur in uncontrolled operation, a $96\%$ reduction in deadlocks under the tested conditions. These results demonstrate that infinite-dimensional structure enables more reliable operational decision-making in stochastic manufacturing environments.

\noindent \textbf{Physics-aware Diagnostics.}  The residual $\mathcal{R}(\hat{u})$ (Sec.~\ref{sec:kolmogorov_residual}) provides a deterministic, oracle-free measure of policy fidelity that correlates with operational failure across both domains. On PushT, episodes with high mean residuals ($>0.15$) exhibit task failure (max reward $<0.5$), while low-residual rollouts (mean $<0.10$) achieve success rates $>90\%$. In manufacturing, the residual detects anomalous high-frequency content in queue trajectories: the overloaded bottleneck station S$_2$ exhibits $13\times$ higher mean residual ($13.05$) than non-bottleneck stations ($0.964$), directly identifying the constraint without invoking LP solvers or shadow prices. 

\noindent \textbf{Implementation Simplicity.}  Our formulation reduces to three substitutions in a standard DDPM training loop ($\eta = L_N \xi$, Cameron-Martin loss, $z = L_N \xi^\top$ at inference). 

\noindent \textbf{Broader Impact.}  We bridge the gap between ID stochastic diffusion and its practical applications in autonomous systems and manufacturing.  By grounding diffusion policies in measure-theoretic analysis and providing PDE-based diagnostics, we support more interpretable and transparent deployment of diffusion policies. The integration with Hamilton-Jacobi reachability offers a framework for certifying safety properties in learned dispatch policies, which may help prevent operational failures in manufacturing settings. We encourage thoughtful consideration of workforce impacts and advocate for human-in-the-loop approaches that augment rather than replace human expertise.


\newpage

\bibliographystyle{plainnat}
\bibliography{infdiff26}

@book{BogachevMeasure,
title={Differentiable Measures and the Malliavin Calculus},
year={2007},
author={Bogachev, Vladimir},
publisher={American Mathematical Society},
}

@article{spearman1990conwip,
  title={CONWIP: a pull alternative to kanban},
  author={Spearman, Mark L and Woodruff, David L and Hopp, Wallace J},
  journal={The International Journal of Production Research},
  volume={28},
  number={5},
  pages={879--894},
  year={1990},
  publisher={Taylor \& Francis}
}

@inproceedings{MoluxLevPyCDC,
  booktitle={IEEE 63rd Conference on Decision and Control (CDC)}, 
  title={The Python LevelSet Toolbox (LevelSetPy)}, 
  year={2024},
  author={Molu, Lekan},
  volume={},
  number={},
  pages={8938-8945},
  doi={10.1109/CDC56724.2024.10886640}}

@inproceedings{florence2022implicit,
  title={Implicit behavioral cloning},
  author={Florence, Pete and Lynch, Corey and Zeng, Andy and Ramirez, Oscar A and Wahid, Ayzaan and Downs, Laura and Wong, Adrian and Lee, Johnny and Mordatch, Igor and Tompson, Jonathan},
  booktitle={Conference on robot learning},
  pages={158--168},
  year={2022},
  organization={PMLR}
}

@article{beyondmimic,
  title={{Beyondmimic: From motion tracking to versatile humanoid control via guided diffusion}},
  author={Liao, Qiayuan and Truong, Takara E and Huang, Xiaoyu and Gao, Yuman and Tevet, Guy and Sreenath, Koushil and Liu, C Karen},
  journal={arXiv preprint arXiv:2508.08241},
  year={2025}
}

@article{Mitchell2005,
author = {Mitchell, Ian M. and Bayen, Alexandre M. and Tomlin, Claire J.},
issn = {00189286},
journal = {IEEE Transactions on Automatic Control},
number = {7},
pages = {947--957},
title = {{A Time-Dependent Hamilton-Jacobi Formulation of Reachable Sets for Continuous Dynamic Games}},
volume = {50},
year = {2005}
}

@inproceedings{Mitchell2007,
author = {Mitchell, Ian M.},
booktitle = {Proceedings of the International Conference on Hybrid Systems: Computation and Control},
pages = {384--397},
title = {{A Toolbox of Level Set Methods}},
year = {2007}
}

@article{LevelSetTOMS,
  author={Molu, Lekan},
  journal={The ACM Transactions on Mathematical Software}, 
  title={The Python LevelSet Software Package}, 
  year={2025},
}

@book{GPMLRasmussen,
  title     = {{Gaussian Processes for Machine Learning}},
  author    = {Rasmussen, Carl Edward and Williams, Christopher K. I.},
  publisher = {MIT Press},
  year      = {2006}
}

@article{FNO,
  title   = {{Fourier Neural Operator for Parametric Partial Differential Equations}},
  author  = {Li, Zongyi and Kovachki, Nikola and Azizzadenesheli, Kamran and
             Liu, Burigede and Bhattacharya, Kaushik and Stuart, Andrew and Anandkumar, Anima},
  journal = {arXiv preprint arXiv:2010.08895},
  year    = {2021}
}

@article{CameronMartin,
  title={{Transformations of Weiner Integrals under Translations}},
  author={Cameron, Robert H and Martin, William T},
  journal={Annals of Mathematics},
  volume={45},
  number={2},
  pages={386--396},
  year={1944},
  publisher={JSTOR}
}

@book{NagyRiez,
title={{Functional Analysis}},
author={{Riesz, F., Nagy, B.Sz.}},
year={{1990}},
publisher={{Dover, New York}},
volume={{Second Edition}},
}

@article{fernique,
author={Fernique, Xavier},
title={{Integrability of Gaussian vectors}},
journal={{Comptes Rendus de l'Académie des Sciences, Série AB}},
volume={270},
pages={A1698 –A1699}
}

@article{BortoliConverge,
  title={Convergence of denoising diffusion models under the manifold hypothesis},
  author={De Bortoli, Valentin},
  journal={arXiv preprint arXiv:2208.05314},
  year={2022}
}

@inproceedings{ConvergeChen,
  title={Improved analysis of score-based generative modeling: User-friendly bounds under minimal smoothness assumptions},
  author={Chen, Hongrui and Lee, Holden and Lu, Jianfeng},
  booktitle={International Conference on Machine Learning},
  pages={4735--4763},
  year={2023},
  organization={PMLR}
}

@article{zuazua2005propagation,
  title={Propagation, observation, control and numerical approximation of waves},
  author={Zuazua, Enrique},
  journal={SIAM Review},
  volume={47},
  number={2},
  pages={197--243},
  year={2005}
}

@article{SongImproved,
  title={Improved techniques for training score-based generative models},
  author={Song, Yang and Ermon, Stefano},
  journal={Advances in neural information processing systems},
  volume={33},
  pages={12438--12448},
  year={2020}
}

@article{HyvarinenEstimation,
  title={Estimation of non-normalized statistical models by score matching.},
  author={Hyv{\"a}rinen, Aapo and Dayan, Peter},
  journal={Journal of Machine Learning Research},
  volume={6},
  number={4},
  year={2005}
}

@incollection{BookKloeden,
  title={{Stochastic Differential Equations}},
  author={Kloeden, Peter E},
  booktitle={International Encyclopedia of Statistical Science},
  pages={1520--1521},
  year={2011},
  publisher={Springer}
}

@book{RoydenReal,
  title={{Real Analysis}},
  author={Royden, Halsey Lawrence},
  edition={{Second Edition}},
  year={1968},
  publisher={The Macmillan Company, London}
}

@article{BookPavliotis,
  title={{Stochastic Processes and Applications}},
  author={Pavliotis, Grigorios A},
  journal={Texts in Applied Mathematics},
  volume={60},
  year={2014},
  publisher={Springer}
}

@book{BookOksendal,
  title={{Stochastic Differential Equations}},
  author={{\O}ksendal, Bernt and {\O}ksendal, Bernt},
  year={2003},
  publisher={Springer}
}

@article{HoDiffusion,
  title={{Denoising Diffusion Probabilistic Models}},
  author={Ho, Jonathan and Jain, Ajay and Abbeel, Pieter},
  journal={Advances in Neural Information Processing Systems},
  volume={33},
  pages={6840--6851},
  year={2020}
}

@article{large_behavior_models,
  title={{A Careful Examination of Large Behavior Models for Multitask Dexterous Manipulation}},
  author={Barreiros, Jose and Beaulieu, Andrew and Bhat, Aditya and Cory, Rick and Cousineau, Eric and Dai, Hongkai and Fang, Ching-Hsin and Hashimoto, Kunimatsu and Irshad, Muhammad Zubair and Itkina, Masha and others},
  journal={arXiv Preprint arXiv:2507.05331},
  year={2025}
}

@article{diffusion_visuomotor,
  title={{Diffusion Policy: Visuomotor Policy Learning via Action Diffusion}},
  author={Chi, Cheng and Xu, Zhenjia and Feng, Siyuan and Cousineau, Eric and Du, Yilun and Burchfiel, Benjamin and Tedrake, Russ and Song, Shuran},
  journal={The International Journal of Robotics Research},
  pages={02783649241273668},
  year={2023},
  publisher={SAGE Publications Sage UK: London, England}
}

@inproceedings{BehaviorCloningBagnell,
  title={{A Reduction of Imitation Learning and Structured Prediction to No-Regret Online Learning}},
  author={Ross, St{\'e}phane and Gordon, Geoffrey and Bagnell, Drew},
  booktitle={Proceedings of the Fourteenth International Conference on Artificial Intelligence and Statistics},
  pages={627--635},
  year={2011},
  organization={JMLR Workshop and Conference Proceedings}
}

@incollection{BillardImitation,
  title={{Imitation Learning in Robots}},
  author={Billard, Aude and Grollman, Daniel},
  booktitle={Encyclopedia of the Sciences of Learning},
  pages={1494--1496},
  year={2012},
  publisher={Springer}
}

@article{pi_nut_point_5,
  title={{$\pi_{0.5}$: A Vision-Language-Action Model with Open-World Generalization}},
  author={Intelligence, Physical and Black, Kevin and Brown, Noah and Darpinian, James and Dhabalia, Karan and Driess, Danny and Esmail, Adnan and Equi, Michael and Finn, Chelsea and Fusai, Niccolo and others},
  journal={arXiv Preprint arXiv:2504.16054},
  year={2025}
}

@article{StefanScoreMatching,
  title={{Generative Modeling by Estimating Gradients of the Data Distribution}},
  author={Song, Yang and Ermon, Stefano},
  journal={Advances in Neural Information Processing Systems},
  volume={32},
  year={2019}
}

@article{OpenVLA,
  title={{OpenVLA: An Open-Source Vision-Language-Action Model}},
  author={Kim, Moo Jin and Pertsch, Karl and Karamcheti, Siddharth and Xiao, Ted and Balakrishna, Ashwin and Nair, Suraj and Rafailov, Rafael and Foster, Ethan and Lam, Grace and Sanketi, Pannag and others},
  journal={arXiv Preprint arXiv:2406.09246},
  year={2024}
}

@inproceedings{SohlDeepUnsupervised,
  title={{Deep Unsupervised Learning Using Nonequilibrium Thermodynamics}},
  author={Sohl-Dickstein, Jascha and Weiss, Eric and Maheswaranathan, Niru and Ganguli, Surya},
  booktitle={International Conference on Machine Learning},
  pages={2256--2265},
  year={2015},
  organization={PMLR}
}

@book{abrahamowitz,
author ={Abramowitz, M. and Stegun, I. A.},
title={Handbook of Mathematical Functions.},
year={1965},
publisher={Dover, New York},
}

@article{infinitediffusion,
  title={{Infinite-Dimensional Diffusion Models}},
  author={Pidstrigach, Jakiw and Marzouk, Youssef and Reich, Sebastian and Wang, Sven},
  journal={arXiv Preprint arXiv:2302.10130},
  year={2023}
}

@article{DeepONet,
  title={{DeepONet: Learning Nonlinear Operators for Identifying Differential Equations Based on the Universal Approximation Theorem of Operators}},
  author={Lu, Lu and Jin, Pengzhan and Karniadakis, George Em},
  journal={arXiv Preprint arXiv:1910.03193},
  year={2019}
}

@article{StuartPDEs,
  title={{Inverse Problems: A Bayesian Perspective}},
  author={Stuart, Andrew M},
  journal={Acta Numerica},
  volume={19},
  pages={451--559},
  year={2010},
  publisher={Cambridge University Press}
}

@article{Hairer2009,
  title={{An Introduction to Stochastic PDEs}},
  author={Hairer, Martin},
  journal={arXiv preprint arXiv:0907.4178},
  year={2009}
}

@article{alpamayo,
  title={{Alpamayo-R1: Bridging Reasoning and Action Prediction for Generalizable Autonomous Driving in the Long Tail}},
  author={{Nvidia}},
  year={2025},
  url={https://d1qx31qr3h6wln.cloudfront.net/publications/Alpamayo-R1_1.pdf}
}

@article{DiffusionPrinciples,
  title={{The Principles of Diffusion Models}},
  author={Lai, Chieh-Hsin and Song, Yang and Kim, Dongjun and Mitsufuji, Yuki and Ermon, Stefano},
  journal={arXiv preprint arXiv:2510.21890},
  year={2025}
}

@manual{LevelSetPy,
  title = {{LevelSetPy: Hamilton-Jacobi Equations in Python}},
  author = {Molu, Lekan},
  year = {2024},
  organization = {Molux Labs},
  note = {\url{https://github.com/lekanmolu/LevelSetPy}},
  version = {1.2.0}
}

@book{GoldrattTheory,
  title = {{The Theory of Constraints}},
  author = {Goldratt, Eliyahu M.},
  year = {1984},
  publisher = {North River Press},
  address = {Great Barrington, MA}
}

@article{ConwipOrigin,
  title = {{The CONWIP Concept: A Pull System for Job Shops}},
  author = {Spearman, Mark L. and Zazanis, Michael A.},
  journal = {International Journal of Production Research},
  volume = {28},
  number = {9},
  pages = {1785--1794},
  year = {1990},
  doi = {10.1080/00207549008945535}
}

@article{LittleLaw,
  title = {{A Proof of the Queueing Formula $L = \lambda W$}},
  author = {Little, John D. C.},
  journal = {Operations Research},
  volume = {9},
  number = {3},
  pages = {383--387},
  year = {1961},
  doi = {10.1287/opre.9.3.383}
}

@article{HJSafety,
  title = {{Computing Certified Safety Envelopes for Autonomous Systems via Hamilton-Jacobi Reachability}},
  author = {Chen, Miriam and Mackin, Andrew and Liu, Jayson and Pavone, Marco},
  journal = {IEEE Transactions on Robotics},
  volume = {38},
  number = {2},
  pages = {1022--1039},
  year = {2022},
  doi = {10.1109/TRO.2021.3096782}
}

@manual{conwip_survey,
  title = {{CONWIP Systems: Principles and Implementation}},
  author = {Spearman, Mark L. and Zazanis, Michael A.},
  journal = {International Journal of Production Economics},
  volume = {210},
  pages = {1--15},
  year = {2019}
}

@article{lstm_conwip,
  title = {{LSTM-Based CONWIP Control for Manufacturing Flow}},
  author = {Chen, Wei and Liu, Yan and Zhang, Qi},
  journal = {IEEE Transactions on Automation Science and Engineering},
  volume = {18},
  number = {3},
  pages = {1456--1468},
  year = {2021}
}

@article{transformer_scheduling,
  title = {{Transformer-Based Job Shop Scheduling}},
  author = {Wang, Li and Zhao, Ming and Sun, Feng},
  journal = {Journal of Intelligent Manufacturing},
  volume = {32},
  number = {5},
  pages = {1023--1038},
  year = {2021}
}

@article{funspace_diff1,
  title = {{Functional Diffusion Models for Image Generation}},
  author = {Roger, Jean-Marc and Szekely, Gábor},
  journal = {International Conference on Machine Learning},
  pages = {4567--4580},
  year = {2022}
}

@article{funspace_diff2,
  title = {{Neural Operators for Functional Time Series Prediction}},
  author = {Kovačič, Žiga and Wan, Rose Yu},
  journal = {Advances in Neural Information Processing Systems},
  volume = {35},
  pages = {6789--6802},
  year = {2022}
}


\etocdepthtag.toc{appendix}

\newpage
\onecolumn

\clearpage
\phantomsection
\section*{Appendix Contents}
\addcontentsline{toc}{section}{Appendix Contents}

\vspace*{0.5em}

\begingroup
\etocsettagdepth{main}{none}
\etocsettagdepth{appendix}{subsubsection}

\etocsetstyle{section}
  {\vspace{0.8em}}
  {}
  {\noindent\textbf{\etocnumber\quad\etocname}\nobreak\dotfill\nobreak\textbf{\etocpage}\par\vspace{2pt}}
  {}

\etocsetstyle{subsection}
  {}
  {}
  {\noindent\hspace{1.8em}\etocnumber\enspace\etocname\nobreak\dotfill\nobreak\etocpage\par\vspace{1pt}}
  {}

\etocsetstyle{subsubsection}
  {}
  {}
  {\noindent\hspace{3.6em}\etocname\nobreak\dotfill\nobreak\etocpage\par}
  {}

\tableofcontents
\endgroup

\newpage
\appendix
\renewcommand{\thesection}{\Alph{section}}

\newpage

\section{A Catalog of Definitions, Lemmas, and Theorems}
\label{app:back}
This section introduces the background for our contributions. Let us first describe our notations.

\subsection{Notations}
We employ the Banach spaces: the $l^p$ of real sequences with finite norm $\inner{x_n}{x_n} := \left(\sum_{i=1}^{\infty}|x_n| ^p\right)^{1/p}$ for a $p \in [1, +\infty)$; and the bounded continuous functions space $\BC(\topospace)$ of the topological space $\topospace$. For infinite-dimensional spaces, we consider the $l^2$-separable Hilbert space  $\cH= L^2([0, T], \R^d)$. Radon Gaussian measures are denoted $\measure \sim \mc{N}(m, \cov)$ on
$\cH$, with mean $m$ and covariance operator $C_\measure$.  The expectation of a random variable $X$ is represented as $\E(X)$, while $\Tr(\cov)$ denotes the trace of the bounded linear operator $\cov:\cH\to\cH$. The stochastic process of interest is $\{\process_t\}_{t\ge 0}$; $\measureset$,
$\borel$, and $\measure$ denote its sample space, filtration, and probability measure, respectively. The Cameron-Martin space is denoted as $\hilbert_C$. For a vector $v$, $\|v\|_{\hilbert}, \, \|v\|_2, \,\|v\|_{\cH_c}$ denote its norms with respect to the Hilbert, Euclidean, and~\citep{CameronMartin} spaces, respectively.  Its time and spatial derivatives are written $v_t$, $\nabla_x v$. For a compact operator $T: \mathcal{H} \to \mathcal{H}$ with singular values ${\sigma_k}$, the Hilbert–Schmidt norm is $\|T\|^2_{\mathrm{HS}} := \sum_{k=1}^\infty \sigma_k^2$.


\renewcommand{\thesection}{\Alph{section}}
\renewcommand\theequation{\Alph{section}.\arabic{equation}}
\setcounter{equation}{0}


\subsection{Measure Theory on Polish Spaces}
%
We set our work in motion by briefly reviewing complete, separable, and metrizable spaces of probability measures on Polish spaces. 

A topological space $\topospace$ is separable if it contains a countable dense subset $(x_1, x_2, \cdots)  \in \topospace$ such that $\topospace = \text{clo}{(x_1, x_2, \cdots)}$, where $\text{clo}(x)$ denotes the closure of $x$. 
An \textbf{algebra} $\algebra$ of sets is a $\sigmalgebra$-algebra, or a \textbf{Borel field} $\borel$ if every union of a countable collection of sets in $\algebra$ is again in $\algebra$. Put differently, the $\sigmalgebra$-algebra $\borel$ is a family of subsets of a given set $\measureset$ which contains $\emptyset$ and is closed with respect to complements and countable unions. 

A \textbf{set function} $\mu$ assigns an extended real number to certain sets so that a \textbf{measurable space} $\measurablespace$ consists of the set $\measureset$  and a $\sigmalgebra$-algebra $\borel$ of subsets of $\measureset$. Throughout, we take the topological space $\topospace$ as $\topospace := \measurablespace$. A set $A \subset \measureset$ is said to be \textbf{measurable} with respect to $\borel$ if $A \in \borel$. 
The Banach space of bounded, real-valued, continuous functions on $\topospace$ is denoted $\mc{B}(\topospace, \|\cdot \|_{\mc{B}})$,  equipped with supremum norm $\|f\|_{\mc{B}} = \sup_{x \in \topospace} f(\process)$.  

A \textbf{measure} $\measure$ on the measurable space $\measurablespace$ is a nonnegative set function defined for all sets $\borel$ satisfying $\measure(\varnothing) = 0$ and $\measure(\cup_{i=1}^\infty E_i) = \sum_{i=1}^\infty \measure E_i$ for any sequence ${\mc{E}}=\{E_i\}_{i=1}^\infty$ of disjoint measurable sets. Since the measure runs to $\infty$ on ${\mc{E}}$, $\measure$ is said to be ``countably additive"; otherwise, $\measure$ is ``finitely additive" for disjoint sets $E_i \in \borel$. A set $\mc{E}$ is \textbf{finite} if $\measure(\measureset) < \infty$, and it is $\sigmalgebra$-\textbf{finite} if there exists a sequence $\{\measureset_n\} \in \borel$ such that $\measureset = \bigcup_{n=1}^\infty \measureset_n$ and $\measure (\measureset_n) < \infty$. The set $\mc{E}$ is of \textbf{finite measure} if $\mc{E} \in \borel$ and $\measure (\mc{E})  < \infty$. A set ${\mc{E}}$ is of $\sigmalgebra-$\textbf{finite measure} if ${\mc{E}}$ is the union of a countable collection of measurable sets of finite measure. The \textbf{measure space} $\measurespace$ is the measurable space $(\measureset, \borel)$ along with the measure $\measure$,  defined on $\borel$.  
A metric $l$ on $\topospace$ is consistent with $\borel$ if every set of the form $\{y \in \topospace \,|\, l(x,y) < c\}, \, x \in \topospace, \, c>0$, is in $\borel$, and every nonempty set in $\borel$ is the union of such sets. We say the space $(\measureset, \borel)$ is metrizable if such an $l$ exists. 

\begin{proposition}[\textbf{Measurable Functions}~\cite{RoydenReal}]
	Suppose that $f$ is an extended real-valued function with a measurable domain. Then, for each real number $\alpha$ the following statements are equivalent:
	\begin{enumerate}
		\item The set $\{\bm{x}: f(\bm{x}) > \alpha\}$ is measurable.
		\item The set $\{\bm{x}: f(\bm{x}) \ge \alpha\}$ is measurable.
		\item The set $\{\bm{x}: f(\bm{x}) < \alpha\}$ is measurable.
		\item The set $\{\bm{x}: f(\bm{x}) \le \alpha\}$ is measurable.
	\end{enumerate}
	These imply that for each extended real number $\alpha$ the set $\{\state: f(\state) = \alpha\}$ is measurable.
	\label{prop:measurable}
\end{proposition}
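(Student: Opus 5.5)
The plan is to reduce the four statements of Proposition~\ref{prop:measurable} to one another using only two facts. First, the domain $D$ of $f$ is measurable, and the $\sigma$-algebra $\borel$ is closed under complements (relative to $D$) and under countable unions. Second, by De Morgan's laws, $\borel$ is therefore also closed under countable intersections. I would establish the cycle of implications $1 \Rightarrow 4 \Rightarrow 3 \Rightarrow 2 \Rightarrow 1$, with each step holding for every real $\alpha$. After that I would treat the extended-real level sets separately.

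For the complementary pairs, I would use the identities
\begin{align}
\{\state : f(\state) \le \alpha\} = D \setminus \{\state : f(\state) > \alpha\}, \qquad \{\state : f(\state) \ge \alpha\} = D \setminus \{\state : f(\state) < \alpha\}.
\end{align}
Together with measurability of $D$, these give $1 \Rightarrow 4$ and $3 \Rightarrow 2$ at once.

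The two steps that actually pass between strict and non-strict inequalities need a countable limiting argument. For $4 \Rightarrow 3$, I would write
\begin{align}
\{\state : f(\state) < \alpha\} = \bigcup_{n=1}^{\infty} \{\state : f(\state) \le \alpha - 1/n\},
\end{align}
which is a countable union of sets that are measurable by hypothesis, because 4 holds at every real level $\alpha - 1/n$. For $2 \Rightarrow 1$, I would write
\begin{align}
\{\state : f(\state) > \alpha\} = \bigcup_{n=1}^{\infty} \{\state : f(\state) \ge \alpha + 1/n\}.
\end{align}
Checking these set equalities is routine: a point with $f(\state) < \alpha$, including one with $f(\state) = -\infty$, satisfies $f(\state) \le \alpha - 1/n$ for some $n$, and conversely.

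Finally, for the level sets, fix a real $\alpha$. Then $\{f = \alpha\} = \{f \ge \alpha\} \cap \{f \le \alpha\}$ is an intersection of two measurable sets. For the infinite values I would write
\begin{align}
\{f = +\infty\} = \bigcap_{n=1}^{\infty} \{f \ge n\}, \qquad \{f = -\infty\} = \bigcap_{n=1}^{\infty} \{f \le -n\},
\end{align}
both of which are countable intersections of measurable sets.

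The only real care point is making sure that every set is taken relative to the domain $D$, and that points where $f$ takes the values $\pm\infty$ fall on the correct side of each identity. I do not expect a genuine obstacle beyond that bookkeeping.
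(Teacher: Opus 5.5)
Your argument is correct and is essentially the standard proof in Royden, which the paper cites instead of giving its own: complementation relative to the measurable domain gives $1\Leftrightarrow 4$ and $2\Leftrightarrow 3$, countable unions over the levels $\alpha\mp 1/n$ bridge strict and non-strict inequalities, and countable intersections of $\{f\ge n\}$ and $\{f\le -n\}$ handle the values $\pm\infty$. It is worth saying explicitly that your steps $4\Rightarrow 3$ and $2\Rightarrow 1$ use the hypothesis at the other levels $\alpha\mp 1/n$, so you are proving Royden's form, in which each item reads ``for every real $\alpha$ the set is measurable''; this is the right reading, because under the paper's literal fixed-$\alpha$ phrasing the equivalence fails (take $f=\alpha$ on a non-measurable $E\subset D$ and $f=\alpha-1$ on $D\setminus E$, so $\{f>\alpha\}=\emptyset$ is measurable but $\{f\ge\alpha\}=E$ is not).
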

\begin{definition}[\textbf{Lebesgue measurable functions}]
	An extended real-valued function $f$ is \textbf{Lebesgue} measurable if its domain is measurable and if it satisfies one of the four statements above.
\end{definition}
%
%
%
%
%

\subsection{Convergence on Probability Measures}

A Borel measure $\measure$ is Radon if, for every Borel set $\borel$ and every $\varepsilon>0$, there exists a compact subset $K$ in $B$ such that $|\measure|(B\setminus(K) \le \varepsilon$~\cite[Section 1.2]{BogachevMeasure}. A measure $\nu$ is \textbf{absolutely continuous} with respect to the measure $\measure$ (denoted $\measure \ll \nu$) if $\nu (\topospace) = 0$ for each set $\topospace$ whereupon $\measure (\topospace) = 0$.

A \textbf{probability measure} $\measure$ on $X$ is a map $\measure: \borel(\measureset) \rightarrow [0,1]$ such that $\measure(\measureset) = 1$ and $\measure(\bigcup A_i) = \Sigma \measure(A_i)$ for any countable collection of mutually disjoint sets $A_i \in \borel(\measureset)$.

\begin{theorem}[\textbf{Radon-Nikodym Theorem}~\citep{RoydenReal}]
	\label{thm:radon-nikodym}
	Suppose that $(\measureset, \borel, \measure)$ is a $\sigma$-finite measure space. Suppose that $\nu$ is a measure defined on $\borel$ and is absolutely continuous with respect to $\measure$. Then there exists a nonnegative measurable function $f$ (the density at a given point) such that for each set $\mc{E} \in \borel$,
	\begin{align}
		\nu(\mc{E}) = \int_{\mc{E}} f d\measure.
	\end{align}
	In addition, the function $f$ is unique: if $g$ is any measurable function with this property, then $g = f$ a.e. $[\measure]$.
\end{theorem}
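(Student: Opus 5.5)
The plan is the Hilbert-space (von Neumann) argument: first prove the case where both measures are finite, then extend by $\sigma$-finiteness, and finally prove uniqueness.

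\textbf{Reduction to finite measures.} Since $\measure$ is $\sigma$-finite, write $\measureset=\bigcup_n \measureset_n$ with the $\measureset_n\in\borel$ disjoint and $\measure(\measureset_n)<\infty$. I would also assume $\nu$ is $\sigma$-finite. If it is not, the following exhaustion step is needed first. Among sets $\mc{E}$ on which $\nu$ is $\sigma$-finite, choose a sequence whose $\measure$-measures increase to the supremum, and take their union $\measureset_\sigma$. On the complement, every subset with $\measure>0$ has $\nu=\infty$, so one sets $f=+\infty$ there. After refining the partition, we may assume both $\measure(\measureset_n)$ and $\nu(\measureset_n)$ are finite. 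If we find densities $f_n$ on each $\measureset_n$, then $f=\sum_n f_n\mathbf{1}_{\measureset_n}$ works by countable additivity and monotone convergence. So it suffices to treat finite $\measure$ and $\nu$.

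\textbf{Finite case.} Put $\rho=\measure+\nu$, a finite measure. The functional $\Lambda(g)=\int g\,d\nu$ satisfies
\[
|\Lambda(g)|\le\int|g|\,d\rho\le\rho(\measureset)^{1/2}\|g\|_{L^2(\rho)},
\]
so it is bounded on $L^2(\rho)$. By the Riesz representation theorem there is $h\in L^2(\rho)$ with
\[
\int g\,d\nu=\int g h\,d\rho=\int gh\,d\measure+\int gh\,d\nu \quad\text{for all } g\in L^2(\rho).
\]

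Testing with indicators $g=\mathbf{1}_{\{h<0\}}$ and $g=\mathbf{1}_{\{h>1\}}$ shows $0\le h\le 1$ $\rho$-a.e. Testing with $g=\mathbf{1}_{\{h=1\}}$ gives $\measure(\{h=1\})=0$. This is the step where absolute continuity $\nu\ll\measure$ is used: it forces $\nu(\{h=1\})=0$, so $h<1$ $\rho$-a.e.

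Now take $g=(1+h+\dots+h^{n-1})\mathbf{1}_{\mc{E}}$, which is bounded. Rearranging the identity as $\int g(1-h)\,d\nu=\int gh\,d\measure$ yields
\[
\int_{\mc{E}}(1-h^n)\,d\nu=\int_{\mc{E}}(h+h^2+\dots+h^n)\,d\measure.
\]
Let $n\to\infty$, using monotone convergence on both sides and $h^n\to0$ $\nu$-a.e. This gives $\nu(\mc{E})=\int_{\mc{E}}f\,d\measure$ with $f=h/(1-h)\ge0$ measurable.

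\textbf{Uniqueness.} If $g$ is another density, then on each $\measureset_n$ (where $\nu$ is finite) we have $\int_{\mc{E}}(f-g)\,d\measure=0$ for every measurable $\mc{E}\subset\measureset_n$. Taking $\mc{E}=\{f>g\}\cap\measureset_n$ and $\mc{E}=\{f<g\}\cap\measureset_n$ forces $f=g$ $\measure$-a.e. on $\measureset_n$, and hence on all of $\measureset$.

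The main obstacle is the bookkeeping when $\nu$ is not $\sigma$-finite. There, uniqueness holds only with $f$ taking the value $+\infty$ on the non-$\sigma$-finite part; differences $f-g$ are undefined on that part, and one must argue on $\measureset_\sigma$ separately. I would state this caveat explicitly, or else adopt the standing assumption that $\nu$ is $\sigma$-finite, which covers every use in the paper since all measures there are probability measures.
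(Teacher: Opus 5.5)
Your proof is sound in outline, but it takes a different route from the one the paper relies on. The paper gives no argument of its own and defers to Royden, whose proof goes through Hahn decompositions. There one reduces only to $\mu(\Omega)<\infty$. For each rational $\alpha$ one takes a Hahn decomposition $(P_\alpha,Q_\alpha)$ of the signed measure $\nu-\alpha\mu$ and notes $\mu(Q_\beta\setminus Q_\alpha)=0$ for $\alpha<\beta$. One then builds $f$ with $f\ge\alpha$ a.e.\ on $P_\alpha$ and $f\le\alpha$ a.e.\ on $Q_\alpha$, and slices $E$ along the levels $k/N$ to squeeze $\int_E f\,d\mu$ between $\nu(E)\pm\mu(E)/N$.

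Your von Neumann argument replaces the signed-measure machinery with a single use of the Riesz representation theorem on $L^2(\mu+\nu)$. It also gives the Lebesgue decomposition for free if you drop $\nu\ll\mu$, since the set $\{h=1\}$ carries the singular part. Its cost is that it needs $\nu$ finite on the pieces, which is why you need an exhaustion step. Royden's route handles a non-$\sigma$-finite $\nu$ with no extra work: $f=+\infty$ comes out automatically on $E\setminus\bigcup_\alpha Q_\alpha$, where $\nu\ge\alpha\mu$ for every $\alpha$.

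Two details need repair.

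\textbf{The exhaustion step.} As you state it (``choose a sequence whose $\mu$-measures increase to the supremum''), it only works when that supremum is finite. Take $\mu$ Lebesgue measure on $\mathbb{R}$ and $\nu(E)=\mu(E\setminus[0,1])+\infty\cdot\mu(E\cap[0,1])$. The sets $[1,n]$ carry $\sigma$-finite $\nu$, and their $\mu$-measures increase to the supremum $\infty$. Yet the complement of their union contains $(-\infty,0)$, where $\nu$ is finite on bounded sets, so setting $f=+\infty$ there is wrong. Run the exhaustion inside each $\Omega_n$, or with respect to the equivalent finite measure $\sum_n 2^{-n}\mu(\cdot\cap\Omega_n)/(1+\mu(\Omega_n))$.

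\textbf{Uniqueness.} It needs neither a caveat nor the extra hypothesis that $\nu$ is $\sigma$-finite; adopting that hypothesis would prove less than the theorem states. Suppose $f$ and $g$ are both densities, and put $A_{k,n}=\{f<g\}\cap\{f\le k\}\cap\Omega_n$. Then $\int_{A_{k,n}}g\,d\mu=\nu(A_{k,n})=\int_{A_{k,n}}f\,d\mu\le k\,\mu(\Omega_n)<\infty$. So $g<\infty$ a.e.\ on $A_{k,n}$, and $\int_{A_{k,n}}(g-f)\,d\mu=0$ with $g-f>0$ there, which forces $\mu(A_{k,n})=0$. Since $\{f<g\}\subseteq\bigcup_{k,n}A_{k,n}$, you get $\mu(\{f<g\})=0$, and symmetrically $\mu(\{g<f\})=0$. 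In particular, every density equals $+\infty$ $\mu$-a.e.\ on your non-$\sigma$-finite part, so no separate argument is needed there.
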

\begin{remark}[Radon-Nikodym derivative]
	The function $f$ is the Radon-Nikodym derivative of $\nu$ with respect to $\measure$, denoted $[{d\nu}/{d\measure}]$.
\end{remark}

\begin{definition}[\textbf{Pushforward Map}]
	For a continuous linear functional $f:\borel \rightarrow \reline$, the {pushforward map} on $\reline$  \ie $(f^\sharp\mu)(X) = \mu(f^{-1}(X))$ is a  measure on $\reline$ for every $f$. If for every zero-mean (\ie centered) $f^\sharp \mu$, $f$ is centered, then we say the \textbf{measure $\mu$ is centered}. 
\end{definition}

\begin{definition}[\textbf{Gaussian Measure}]
	The measure $\measure$  is \textit{Gaussian} if the pushforward $f^\sharp \measure$ is Gaussian for all linear functionals $f \in \process^\star$. 
	Such measure is \textit{non-degenerate} if it is strictly positive. 
\end{definition}

\begin{definition}[\textbf{Wiener process}, $\{W_t^{\cov}\}_{t\ge 0}$]
	A $\cov$-Wiener process (equivalently, a Wiener process
	with covariance operator $\cov$) is an $\mathcal{H}$-valued stochastic process $\{W_t^{\cov}\}_{t \ge 0}$ satisfying, 
	\begin{inparaenum}[(i)]
		\item $W_0^{\cov} = 0$ almost surely (a.s.);
		\item independent increments: $W_t^{\cov} - W_s^{\cov} \perp \mathcal{F}_s$ for all $0 \le s \le t$; and
		\item Gaussian increments: for every $h, g \in \mathcal{H}$,
		\begin{align}
			\mathbb{E}\!\left[\langle W_t^{\cov} - W_s^{\cov},\, h \rangle\, \langle W_t^{\cov} - W_s^{\cov},\, g \rangle\right]  = (t - s)\,\langle \cov h,\, g \rangle_{\mathcal{H}}.
		\end{align}
	\end{inparaenum}
	\label{def:WienerProcess}
\end{definition}
\begin{definition}[\textbf{Total variation convergence}]
	Let $(\measureset, \borel)$ be a fixed measurable space and let $\measure$ and $\nu$ be two (probability) measures defined on $(\measureset, \borel)$. The total variation convergence between two Gaussian measures $\measure$ and $\nu$ is defined as
	\begin{align}
		\label{eq:tv_dist}
		\|\measure - \nu\|_{TV} = \sup_{\|\phi\|_\infty \le 1} \bigg|\int \varphi(x) \measure(dx)  - \int \varphi(x) \nu(dx)\bigg|,
	\end{align}
	where $\|\phi\|_\infty$ denotes the supremum norm of $\varphi$.
\end{definition}
Equation \eqref{eq:tv_dist} is a useful metric for measuring the distance between two probability measures, otherwise defined for densities $D_\measure$ and $\mc{D}_\nu$ of the measures $\measure$ and $\nu$ as  
\begin{align}
	\|\measure - \nu\|_{TV} = \int_{\measureset} |\mc{D}_\measure(x)   - \mc{D}_\nu(x)|\pi(dx).
\end{align}
By the Radon-Nikodym \autoref{thm:radon-nikodym}, $\pi$ is a positive measurable function.

\begin{definition}[\textbf{The Cameron-Martin Space}]
	Let the adjoint of $\borel$ be denoted $\borel^\star$. Then  $\mu$ on $\borel$ is associated with the canonical Hilbert space $\hilbert_\mu \subset \borel$, called the \textit{Cameron-Martin space}~\citet{CameronMartin}, $E$ of measure $\measure$ — essentially a Hilbert space with the intersection of all linear spaces of full measure under $\mu$. It defines the set of directions whereby shifting $\measure$ to $\measure^\prime$ results in an absolutely continuous measure similar to $\measure$. Given a centered $\mu$, and for $f, g \in \borel^\star$ there exists a bilinear, positive definite operator of $\mu$, the map $\cov: \borel^\star \times \borel^\star \rightarrow \reline$ such that 
	\begin{align}
		f(\cov g) &= \cov(f, g) 
		:= \int_\borel f(x) g(x) \mu (dx),
	\end{align}
	for all $f \in \borel^\star,\, g \in \borel^\star_\measure$ where $\borel^\star_\measure$ is the closure of $\borel^\star$ in $L^2(\measure)$. By Fernique's theorem~\citet{fernique},~\cite[Th 3.36]{Hairer2009}, the operator norm, $C_\mu$ is bounded, \ie  there exists a constant $\|C_\mu\| < \infty$ such that $C_\mu(f, g) \le \|C_\mu \| \|f \| \|g \|$ due to the square integrable nature of the Gaussian measure (see ~\cite[Cor. 3.37]{Hairer2009}). Its self-adjoint, non-negative, and trace-class properties are established in Def.~\ref{def:trace_class}. 
\end{definition}

As it turns out, the covariance operator $C_\mu$ is not just bounded: if $\mc{B}(\topospace, \|\cdot \|_{\cH})\equiv \cH(\topospace, \|\cdot \|_{\cH})$, then the covariance operator $C_\mu(h, k)$ is of trace class. 
It is of interest to note that $C_\mu$ enjoys a representation in terms of the eigenvalues and the continuous representations of the eigenvectors as given in Mercer's Theorem \ref{thm:mercer}.  The eigenpairs $\{\lambda_k, e_k\}$ provide a coordinate system on
$\cH$ adapted to the Gaussian measure $\cN(0,C_\measure)$. These  are the backbone of the spectral analysis introduced shortly. %

\subsection{Spectral Analysis}
Consider $\mu$  on $\mathcal{H} = L^2(\process, \mu)$ with $\int_\mathcal{H} \|x\|^2_\mathcal{H}\,\mu(dx) < \infty.$ Define the \textbf{covariance operator} $\cov$ as
\begin{align}\langle \cov u,\, v \rangle_\mathcal{H}
	= \int_\mathcal{H} \langle x, u \rangle_\mathcal{H}\,\langle x, v \rangle_\mathcal{H}\;\mu(dx),
	\, \forall \,(u,v) \in \mathcal{H},
	\label{eq:cov_def}
\end{align}
with associated \textbf{covariance kernel} $k : \process \times \process \to \mathbb{R}^n$,
\begin{align}k(x,y) = \int_\mathcal{H} \langle z, x \rangle_\mathcal{H}\,\langle z, y \rangle_\mathcal{H}\;\mu(dz),
	\label{eq:cov_inner}
\end{align}
so that $(\cov f)(x) = \int_\process k(x,y)\,f(y)\,\mu(dy)$. We first restate Mercer's theorem.
\begin{theorem}{\textbf{Mercer's}, 1909; ~\cite[\S 98]{NagyRiez}}
	Suppose $k$ is continuous and
	symmetric, and $\mathcal{C}_\mu$ is positive, i.e.
	$\langle \mathcal{C}_\mu f, f \rangle_\mathcal{H} \ge 0$ for all $f \in \mathcal{H}$.
	Then,
	\begin{enumerate}
		\item \textbf{Spectral decomposition}. 
		There exists a countable orthonormal basis
		$\{e_k\}_{k=1}^\infty$ of $\mathcal{H}$ and strictly positive eigenvalues
		$\lambda_1 \ge \lambda_2 \ge \cdots > 0$, $\lambda_k \to 0$, 
		such that
		\begin{align}\mathcal{C}_\mu e_k = \lambda_k\, e_k\end{align}
		and $\mathcal{C}_\mu = \sum_{k=1}^\infty \lambda_k\,\langle\cdot, e_k\rangle_\mathcal{H}\, e_k$
		in the strong operator topology.
		\item  \textbf{Uniform kernel expansion}. 
		The series
		\begin{align}k(x, y) = \sum_{k=1}^\infty \lambda_k\, e_k(x)\, e_k(y)\end{align}
		converges absolutely uniformly on ${X} \times {X}$.
		\item \textbf{Diagonal and trace}. 
		On the diagonal, uniformly in $x$:
		$k(x,x) = \sum_{k=1}^\infty \lambda_k\,|e_k(x)|^2.$ Exchange of sum and integral is justified by Dini's theorem, since the partial sums of positive continuous functions converge uniformly to the continuous function $k(x,x)$. Integrating over ${X}$ yields the \textbf{trace identity},
		\begin{align}\operatorname{Tr}(\cov)
			= \int_\mathcal{X} k(x,x)\,\mu(dx)
			= \sum_{k=1}^\infty \lambda_k < \infty,\end{align}
		so that positivity and continuity of $k$ force $\mathcal{C}_\mu$ to be \textbf{trace-class}.
	\end{enumerate}
	\label{thm:mercer}
\end{theorem}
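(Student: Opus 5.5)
The argument follows the classical Riesz--Nagy route. Throughout, $\mathcal{X}$ is taken compact and $\mu$ a finite Borel measure on it, so that $\mathcal{H}=L^2(\mathcal{X},\mu)$ is separable and the integral operator
\[
(\mathcal{C}_\mu f)(x)=\int_{\mathcal{X}} k(x,y)f(y)\,\mu(dy)
\]
is well defined.

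\textbf{Step 1 (compactness and self-adjointness).}
\begin{itemize}
\item Since $k$ is continuous on the compact set $\mathcal{X}\times\mathcal{X}$, it is bounded and lies in $L^2(\mu\otimes\mu)$.
\item Hence $\mathcal{C}_\mu$ is Hilbert--Schmidt, with $\|\mathcal{C}_\mu\|_{\mathrm{HS}}=\|k\|_{L^2(\mu\otimes\mu)}$, and in particular compact.
\item Symmetry of $k$ together with Fubini gives that $\mathcal{C}_\mu$ is self-adjoint.
\end{itemize}

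\textbf{Step 2 (spectral decomposition, item 1).}
\begin{itemize}
\item The spectral theorem for compact self-adjoint operators gives an orthonormal family of eigenvectors $e_k$, with real eigenvalues $\lambda_k\to0$, spanning $\overline{\operatorname{ran}\,\mathcal{C}_\mu}$.
\item It also gives $\mathcal{C}_\mu=\sum_k\lambda_k\langle\cdot,e_k\rangle e_k$ in operator norm, hence strongly.
\item Positivity $\langle\mathcal{C}_\mu f,f\rangle\ge0$ yields $\lambda_k\ge0$. We keep only the strictly positive eigenvalues, ordered decreasingly.
\item The statement's claim that $\{e_k\}$ is a basis of $\mathcal{H}$ requires $\ker\mathcal{C}_\mu=\{0\}$, i.e.\ non-degeneracy. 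I would state this as an implicit assumption. Otherwise the $e_k$ are completed by a basis of the kernel, which carries zero eigenvalue and does not affect items 2--3.
\end{itemize}

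\textbf{Step 3 (continuous representatives).} For $\lambda_k>0$ we have $e_k=\lambda_k^{-1}\mathcal{C}_\mu e_k$. The right-hand side is continuous by uniform continuity of $k$ and Cauchy--Schwarz, so each $e_k$ has a continuous version. The same argument shows $\mathcal{C}_\mu$ maps $L^2$ into $C(\mathcal{X})$.

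\textbf{Step 4 (uniform kernel expansion, item 2; the main obstacle).} Define the remainder kernel
\[
k_n(x,y)=k(x,y)-\sum_{k\le n}\lambda_k e_k(x)e_k(y).
\]
\begin{itemize}
\item $k_n$ is continuous, and its integral operator $\mathcal{C}_\mu-\sum_{k\le n}\lambda_k e_k\otimes e_k$ is positive.
\item Positivity of a continuous kernel's operator forces $k_n(x,x)\ge0$ on $\operatorname{supp}\mu$. This follows by testing against approximate identities concentrated near $x$. (Full support of $\mu$ is needed here, or else the conclusion is restricted to $\operatorname{supp}\mu$.)
\item Therefore $\sum_{k\le n}\lambda_k e_k(x)^2\le k(x,x)\le\|k\|_\infty$, so the diagonal series converges pointwise.
\item By Cauchy--Schwarz, $\big|\sum_{m}^{n}\lambda_k e_k(x)e_k(y)\big|^2\le\sum_m^n\lambda_ke_k(x)^2\cdot\sum_m^n\lambda_ke_k(y)^2$. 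Hence the full series converges absolutely for each $x$, uniformly in $y$, to a continuous-in-$y$ function $\tilde k(x,\cdot)$.
\item Testing against $f\in L^2$ and using the $L^2$ operator expansion shows $\tilde k(x,\cdot)=k(x,\cdot)$ a.e., and then everywhere on the support by continuity.
\item In particular $k(x,x)=\sum_k\lambda_ke_k(x)^2$ pointwise. The partial sums are continuous and increase to the continuous limit, so Dini's theorem upgrades this to uniform convergence on compact $\mathcal{X}$.
\item The Cauchy--Schwarz bound above then transfers uniformity to the off-diagonal series on $\mathcal{X}\times\mathcal{X}$.
\end{itemize}
The delicate points are the diagonal nonnegativity of $k_n$ and the identification $\tilde k=k$ everywhere, not just a.e. These are where continuity and support hypotheses are genuinely used.

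\textbf{Step 5 (trace, item 3).} Integrate the uniformly convergent diagonal series against the finite measure $\mu$, interchanging sum and integral by uniform convergence. Since $\|e_k\|_{\mathcal{H}}=1$, this gives
\[
\int k(x,x)\,\mu(dx)=\sum_k\lambda_k\le\mu(\mathcal{X})\,\|k\|_\infty<\infty.
\]
Since $\mathcal{C}_\mu$ is positive, $\sum_k\langle\mathcal{C}_\mu e_k,e_k\rangle=\sum_k\lambda_k$ in this (completed) basis. This shows $\mathcal{C}_\mu$ is trace-class with $\operatorname{Tr}(\mathcal{C}_\mu)=\sum_k\lambda_k$.
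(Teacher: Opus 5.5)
Your proposal is correct and is essentially the classical Riesz--Nagy argument that the paper cites without reproducing; the paper itself only sketches the Dini step in item 3. You are also right to make explicit the hypotheses the statement leaves implicit: compact $\mathcal{X}$, a finite measure of full support, and $\ker\mathcal{C}_\mu=\{0\}$ so that the $e_k$ form a basis with strictly positive eigenvalues (the step you flag as delicate closes because your Step~3 representatives satisfy $\int k(x,y)e_j(y)\,\mu(dy)=\lambda_j e_j(x)$ for every $x$, so $k(x,\cdot)-\tilde k(x,\cdot)$ is orthogonal to every $e_j$, hence zero a.e. and then everywhere by continuity, for each fixed $x$ rather than only almost every $x$).
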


\subsection{Best $N$-term approximation}
\begin{corollary}[\textbf{Best $N$-term approximation}]
	\label{cor:best_n_term}
	For the uniformly convergent kernel  (see
	Theorem~\ref{thm:mercer})
	\begin{align}
		k(x,y) = \sum_{i=1}^\infty \lambda_i e_i(x)e_i(y),
	\end{align}
	the partial sum
	$k_N = \sum_{i=1}^N \lambda_i e_i \otimes e_i$ uniquely minimizes $\|k - k_N\|_{L^2}$
	with residual $\|k - k_N\|^2_{L^2} = \sum_{i>N}\lambda_i^2$, underpinning the
	effective rank $r_\mathrm{eff}(\mathcal{C}_\mu) = (\sum_k\lambda_k)^2/(\sum_k\lambda_k^2)$. 
\end{corollary}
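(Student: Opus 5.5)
The plan is to read the statement as the Eckart--Young--Schmidt theorem for the integral operator whose kernel is $k$, and to work in the Hilbert--Schmidt norm. First I identify $L^2(X\times X)$ with the Hilbert--Schmidt operators on $\mathcal{H}$: a kernel $g$ corresponds to the operator $(T_g f)(x)=\int g(x,y)f(y)\,\mu(dy)$, and $\|g\|_{L^2}=\|T_g\|_{\mathrm{HS}}$. Under this identification $k$ corresponds to $\mathcal{C}_\mu$, and $k_N$ corresponds to the spectral truncation $\sum_{i\le N}\lambda_i\langle\cdot,e_i\rangle e_i$. Since $\{e_i\}$ is an orthonormal basis by Theorem~\ref{thm:mercer}, the family $\{e_i\otimes e_j\}$ is an orthonormal basis of $L^2(X\times X)$.

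The residual is a direct computation. By the uniform (hence $L^2$) expansion in Theorem~\ref{thm:mercer}, $k-k_N=\sum_{i>N}\lambda_i\, e_i\otimes e_i$. Parseval in the basis $\{e_i\otimes e_j\}$ then gives $\|k-k_N\|_{L^2}^2=\sum_{i>N}\lambda_i^2$. This sum is finite because $\sum_i\lambda_i^2\le \lambda_1\operatorname{Tr}(\mathcal{C}_\mu)<\infty$.

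The main step is optimality. I read the claim as: $k_N$ minimizes $\|k-g\|_{L^2}$ over all kernels $g$ whose operator $T_g$ has rank at most $N$, since otherwise taking $g=k$ gives zero and the claim is false. Take such a $g$, write $A=T_g$, and let $\sigma_j(\cdot)$ denote singular values in decreasing order. I would use Weyl's inequality for singular values of compact operators,
\begin{align*}
\sigma_{i+j-1}(\mathcal{C}_\mu)\le \sigma_i(\mathcal{C}_\mu-A)+\sigma_j(A),
\end{align*}
with $j=N+1$. Because $\operatorname{rank}A\le N$, we have $\sigma_{N+1}(A)=0$, so $\lambda_{N+i}\le\sigma_i(\mathcal{C}_\mu-A)$ for every $i$. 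Squaring and summing over $i$ gives
\begin{align*}
\|k-g\|_{L^2}^2=\sum_i\sigma_i(\mathcal{C}_\mu-A)^2\ge\sum_{i>N}\lambda_i^2 .
\end{align*}
To prove the Weyl inequality itself, I would use the min--max (Courant--Fischer) characterization $\sigma_m(B)=\min_{\operatorname{codim}V<m}\sup_{x\in V,\|x\|=1}\|Bx\|$ and intersect the two optimal subspaces. I expect this to be the main technical point, though it is standard.

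Uniqueness is the delicate part and needs an extra hypothesis, $\lambda_N>\lambda_{N+1}$; if $\lambda_N=\lambda_{N+1}$, any choice of basis inside the tied eigenspace gives another minimizer. Under the strict gap, equality in the bound forces $\sigma_i(\mathcal{C}_\mu-A)=\lambda_{N+i}$ for all $i$. I would then argue through the equality case of Ky Fan's maximum principle: $\|\mathcal{C}_\mu-A\|_{\mathrm{HS}}^2\ge\|\mathcal{C}_\mu\|_{\mathrm{HS}}^2-\|\mathcal{C}_\mu P\|_{\mathrm{HS}}^2$, where $P$ is the projection onto the range of $A$. Equality requires that $P$ project onto $\operatorname{span}\{e_1,\dots,e_N\}$ and that $A=P\mathcal{C}_\mu$, which is exactly $k_N$. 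I will state this gap assumption explicitly.

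Finally, $r_{\mathrm{eff}}(\mathcal{C}_\mu)=(\sum_k\lambda_k)^2/\sum_k\lambda_k^2$ is a definition, not something to prove. I would only remark that the numerator is $\operatorname{Tr}(\mathcal{C}_\mu)^2$ and the denominator is $\|k\|_{L^2}^2$, so $r_{\mathrm{eff}}$ measures how quickly the truncation residual $\sum_{i>N}\lambda_i^2$ decays relative to $\|k\|_{L^2}^2$.
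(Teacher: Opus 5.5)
Your proof is correct, and it takes a different and more complete route than the paper's.

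The paper works as follows:
\begin{itemize}
\item It restricts competitors to symmetric approximants $\tilde k_N=\sum_{i=1}^N c_i f_i\otimes f_i$ with $c_i\ge 0$ and orthonormal $f_i$.
\item It asserts $\|k-\tilde k_N\|_{L^2}^2=\|k\|_{L^2}^2-\sum_{i\le N}c_i^2$. This identity holds only when $c_i=\langle\mathcal{C}_\mu f_i,f_i\rangle$.
\item Its comparison with $\sum_{i>N}\lambda_i^2$ is written with $\le$, where a minimality claim needs $\ge$.
\item It hands optimality and the equality case to a citation of Eckart--Young, and infers uniqueness from the ordering $\lambda_1\ge\lambda_2\ge\cdots$ alone.
\end{itemize}

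You instead prove the Eckart--Young--Mirsky lower bound yourself. You use Weyl's singular-value inequality, derived from Courant--Fischer, over every operator of rank at most $N$, including non-symmetric ones. You then settle the equality case via the projection bound $\|\mathcal{C}_\mu-A\|_{\mathrm{HS}}^2\ge\|\mathcal{C}_\mu\|_{\mathrm{HS}}^2-\|\mathcal{C}_\mu P\|_{\mathrm{HS}}^2$ and Ky Fan's maximum principle applied to $\mathcal{C}_\mu^2$.

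Your added hypothesis $\lambda_N>\lambda_{N+1}$ is necessary, not a weakness. Without it the uniqueness claim is false, and $k_N$ itself depends on which eigenbasis is chosen inside the tied eigenspace. So the paper's appeal to monotone ordering does not suffice. Likewise, reading the minimization as over kernels of rank at most $N$ is what makes the statement meaningful; the paper's proof tacitly uses the same reading.

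What the paper's form offers is brevity and an immediate ``captured energy'' picture: $\sum_{i\le N}\lambda_i^2$ out of $\|k\|_{L^2}^2=\|\mathcal{C}_\mu\|_{\mathrm{HS}}^2$. That is the intuition behind the effective rank, and your closing remark recovers it.

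One small simplification: the $L^2$ convergence of the tail $\sum_{i>N}\lambda_i e_i\otimes e_i$ follows directly from orthonormality of $\{e_i\otimes e_i\}$ and $\sum_i\lambda_i^2<\infty$. So you need not pass through uniform convergence, which would also require $\mu$ to be finite.
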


\noindent\textbf{Proof of Corollary (Best $N$-term approximation).}
For coefficients $c_i \ge 0$, any rank-$N$ approximation
$\tilde{k}_N = \sum_{i=1}^N c_i f_i \otimes f_i$ for orthonormal $\{f_i\}$ satisfies
\begin{align}
	\|k - \tilde{k}_N\|^2_{L^2}
	= \|k\|^2_{L^2} - \sum_{i=1}^N c_i^2
	\;\le\; \sum_{i>N}\lambda_i^2,
\end{align}
with equality iff $\{(c_i,f_i)=(\lambda_i,e_i)\}_{i=1}^N$, by the Eckart-Young theorem applied to the self-adjoint Hilbert-Schmidt operator $\cov$ (with $\|\cov\|^2_{\mathrm{HS}}=\sum_k\lambda_k^2$).
Since $\lambda_1\ge\lambda_2\ge\cdots$, the Mercer partial sum $k_N=\sum_{i=1}^N\lambda_i e_i\otimes e_i$ is the unique minimizer, giving the effective rank
\begin{align}
	r_\mathrm{eff}(\cov)
	:= \frac{\bigl(\sum_k\lambda_k\bigr)^2}{\sum_k\lambda_k^2}
	= \frac{\operatorname{Tr}(\cov)^2}{\|\cov\|^2_{\mathrm{HS}}},
\end{align}
which equals $N$ for a uniform spectrum and is small when the spectrum decays rapidly. $\square$

The full hierarchy of norms and operators derived from the Mercer eigenpairs $\{(\lambda_k,e_k)\}$ is summarized in \autoref{tbl:norms_compare}.
The strict hierarchy trace-class $\Rightarrow$ Hilbert--Schmidt $\Rightarrow$ bounded follows from
$\sum_k\lambda_k < \infty \Rightarrow \sum_k\lambda_k^2 < \infty \Rightarrow \lambda_1 < \infty$.
The induced operator norm $\|\cov\|=\lambda_1$ is attained at $e_1$ by the spectral theorem; cf.\ the finite-dimensional case where the induced $2$-norm of a symmetric positive matrix equals its largest eigenvalue.

\begin{remark}[\textbf{Basis-free Trace-Class Operators}]
	For the non-negative symmetric operator
	$\cov : \mathcal{H} \to \mathcal{H}$, define
	\begin{align}\operatorname{Tr}(\cov) := \sum_{k=1}^\infty \langle \cov e_k,\, e_k \rangle_{\mathcal{H}}\end{align}
	for any complete orthonormal basis $\{e_k\}$.  By Lidskii's theorem, this sum is 	independent of the choice of basis, 
	and we say 	$\cov$ is \textbf{trace-class }if $\operatorname{Tr}(\cov) < \infty$.
	\label{def:trace_class}
\end{remark}

\subsection{Finite-Dimensional Diffusion Models in Coordinate Form}
\label{app:fd_diffusion}

In this section, we study the linear finite-dimensional SDE for ``diffusing" the initial measure $\measure_0$ to the infinite-dimensional generator formalism. Diffusion injects a stochastic Wiener process, $\{W_t^{\cov}\}_{t\ge 0}$ (see Def. \ref{def:WienerProcess}) into the forward Ornstein-Uhlenbeck (OU) process~\citep{BookPavliotis}, defined by the Itô forward SDE \eqref{eq:ou_sde}
such that the drift $-\tfrac{1}{2}X_t$ provides linear mean-reversion toward zero, and the noise $dW_t^{\cov}$ injects energy in every complete orthonormal basis direction $\{e_k\}_{k\ge 1}$ at rate $\sqrt{\lambda_k}$, where $\{\lambda_k\}_{k\ge 1}$ are the corresponding eigenvalues. 

Process  \eqref{eq:ou_sde} corrupts any clean action $a_0 \in \cH$  into pure colored noise $\cN(0,\cov)$ as $t\to\infty$ on discrete FD Euclidean spaces $\reline^d$~\citep{StefanScoreMatching}. \textit{As $d\rightarrow\infty$, the algorithm's stability deteriorates owing to the increasing refinement of discretization parameters.} {Function-space} diffusion-learned approximators are crucial for their extension to scientific problems. These may include spatio-temporal variables with  inverse PDE applications that recover material properties or dynamical system states from sparse or noisy measurements.

Consider the measure space $\measurespace$ and a stochastic process with events $\topospace_t=:\topospace$.  Let $\topospace_t$ evolve on the bounded and continuous domain $\BC(\R^n)$ that is associated with the transition probability density $p(y, t| x, s)$, where $0 \le s < t$. Let the measure $\measure$ be characterized by the stochastic transition matrix $P = \{p_{ij}\}$ such  that $\sum_i p_{ij}=1$.  For an observable $f \in \BC(\reline^n)$ at a time $t>0$, we are tasked with learning the transition probability density $p(y, t |  x, s)$ from which a smooth function  $u(x,s) = \expect[f(\topospace_t)\mid \topospace_s=x]$ emerges.   The Chapman-Kolmogorov poses the recovery of the value function operator $u(x,s)$  via the conditional expectation 
\begin{align}
	u(x,s) =  \expect[f(\topospace_t)\mid \topospace_s=x] := \int_{\R^n} f(y) p(y,t|x,s) dy.
	\label{eq:kceo}
\end{align}
Learning $u(x,s)$ provides an efficient compact representation of an high-dimensional information characterized by continuous-time, continuous sample paths; and  a dynamically consistent basis for  diffusion model evaluation as approximating paths $\topospace_t$.  Letting $p(y,t|x,s):= p$, the transition probability density evolves according to the \textit{Kolmogorov} equation in the backward variables $(x,s)$ as the final-value partial differential equation (pde)
%
\begin{align}
	-\dfrac{\partial u}{\partial s}(x,s) = b(x,s) \dfrac{\partial}{\partial s} u(x,s) + \frac{1}{2} \Sigma(x,s) \dfrac{\partial^2 }{\partial x^2} u(x,s), \, 
	u(t,x) = f(x) 
	\tag{Backward-Kolmogorov} 
	\label{eq:Backward-Kolmogorov}
\end{align}
%
where $b(x,s)$ is the drift of the diffusion process and $\Sigma(x,s)$ is the diffusion coefficient defined as 
\begin{align}
	b(x,s) &= \lim_{t\rightarrow s} \expect \left[\dfrac{\topospace_t - \topospace_s}{t-s}\mid \topospace_s = x\right] \text{ and } \Sigma(x,s) = \lim_{t\rightarrow s} \expect \left[\dfrac{\mid \topospace_t - \topospace_s\mid^2}{t-s}\mid \topospace_s = x\right].
	\label{eq:Diffusion-Coefficient}
\end{align}
Equation \ref{eq:Backward-Kolmogorov} models  \textit{backward Kolmogorov process}, which given a variable $x$ at a time $s$ in the past, its solution generates  $y$ at the current time $t$. 

A similar argument can be made for the forward variables $(y,t)$ if we consider the transition probability density function as the solution to the initial value pde problem 
%
\begin{align}
	-\dfrac{\partial p}{\partial t} = -\dfrac{\partial}{\partial y} b(t,y) \cdot p + \frac{1}{2} \dfrac{\partial^2}{\partial y^2} \left(\Sigma(t,y) \cdot p\right), \,
	p(y,s|x,s) = \delta(x-y). 
	\tag{Fokker-Planck}
	\label{eq:Forward-Kolmogorov}
\end{align}
%
Similarly, the \eqref{eq:Forward-Kolmogorov} equation describes the behavior of a stochastic diffusion process in the forward variables $(y,t)$. We refer readers to standard stochastic differential equations texts (such as ~\citep{BookOksendal, BookKloeden, BookPavliotis}) for the derivations above. For time-homogeneous systems, the time dependence in \eqref{eq:Backward-Kolmogorov} and \eqref{eq:Forward-Kolmogorov} vanish \ie $s=0$, and the resulting equations are similar to the forward and reverse processes employed in denoising probablistic models~\citep{HoDiffusion}; in these situations, the spatial partial derivatives are evaluated as Jacobian matrices. Since this is what we address here, we briefly introduce the time-homogeneous versions of the equations in what follows.

\subsection{Practical Covariance Functions}
\label{app:back::cov}
The covariance function is a crucial component of the  Gaussian process predictor for the data. Mercer's Theorem \ref{thm:mercer} treats the eigenfunction analysis of covariance functions, allowing us to express the covariance function under certain conditions in terms of its eigenfunctions and eigenvalues. A covariance function is invariant to input space transitions. Define the function $\tau = x - x^\prime$ and let $r = |\tau|$ for an isotropic covariance function. Also, define $\ell$ as thne characteristic length scale. In practice, we would choose $\cov$ from the  Mat\'ern class of covariance functions with kernel 
\begin{align}
	k_{Mat\acute{e}rn}(r) = \dfrac{2^{1-\nu}}{\Gamma(\nu)} \left(\dfrac{\sqrt{2 \nu}r}{\ell}\right)^\nu K_\nu\left(\dfrac{\sqrt{2 \nu} r}{\ell}\right),
\end{align}
where $\ell$ is the characteristic length-scale of the process --- defined as the amount of distance one has to move to see a marked increase in the function; $\nu > 0$ and $K_\nu>0$ is a modified Bessel function~\cite[Sec. 9.6]{abrahamowitz}. For a $d$-dimensional data,  this covariance function $k_{Mat\acute{e}rn}(r)$ has a spectral density
\begin{align}
	S(s)
	=
	\frac{2d\,\pi^{d/2}\,\Gamma\!\left(\nu+\frac{d}{2}\right)(2\nu)^\nu}
	{\Gamma(\nu)\,\ell^{2\nu}}
	\left(\frac{2\nu}{\ell^2}+4\pi^2 s^2\right)^{-\left(\nu+\frac{d}{2}\right)}, \nonumber
\end{align}
where $\Gamma(\cdot)$ is the gamma function~\cite[Sec. 6]{abrahamowitz}. Mat\'ern covariance functions become especially simple when $\nu$ is half-integer \ie $\nu = p + 1/2$, where $p$ is a non-negative integer. In this case the covariance function is a product
of an exponential and a polynomial of order $p$, the general expression can be
derived from ~\cite[eq. 10.2.15]{abrahamowitz}, giving
\begin{align}
	k_{\nu=p+1/2}(r) &= \exp\left(\tfrac{-\sqrt{2\nu r}}{\ell}\right) \dfrac{\Gamma(p+1)}{\Gamma(2p+1)} \cdot  \sum_{i=0}^p \dfrac{(p+1)!}{i!(p-i)!}\left(\dfrac{\sqrt{8 \nu r}}{\ell}\right)^{p-1}. 
\end{align}
\newpage
\section{Related Works and Distinctions} 
\label{sec:relwork}

\noindent\textbf{Finite-dimensional diffusion.}
DDPM~\citep{HoDiffusion} and score-based models~\citep{StefanScoreMatching} discretize actions in $\mathbb{R}^d$ before training, yielding convergence bounds that degrade with $d$. De Bortoli~\citep{BortoliConverge} and Chen et al.~\citep{ConvergeChen} proved score-matching loss scales as $O(\sqrt{d})$.~\citep{zuazua2005propagation} showed that discretization of controlled PDEs can introduce spurious high-frequency modes and degraded controllability, motivating caution when repeatedly discretizing stochastic control trajectories.

\noindent\textbf{Infinite-dimensional generative models.}
~\citep{infinitediffusion} proved dimension-independent convergence in Hilbert spaces and \citep{StuartPDEs} established a function-space Bayesian perspective on inverse PDEs that produces a full characterization of all possible solutions, and their
relative probabilities. Our novelties include: \begin{inparaenum}[(i)] \item adopting the Kolmogorov backward PDE  for adjoint guidance without sampling instability; \item grounding the objective in Cameron-Martin geometry, following the Radon--Nikodym \autoref{thm:radon-nikodym}; and \item unified validation across visuomotor, manufacturing, and safety-critical domains. \end{inparaenum} While prior infinite-dimensional diffusion formulations retain stochastic trajectory sampling, our perspective emphasizes the associated Kolmogorov operators as the primary representation of policy evolution.

\noindent\textbf{Function-space and weighted diffusion.}
Recent approaches~\citep{funspace_diff1, funspace_diff2} do not expose operator residuals that quantify dynamical inconsistency or policy infeasibility along generated trajectories. 
We provide the Kolmogorov residual for failure detection and bottleneck identification, unified validation across three domains, and certified safety guarantees.

\noindent\textbf{Robotic visuomotor control.}
Diffusion visuomotor policies~\citep{diffusion_visuomotor} and VLA models~\citep{OpenVLA,pi_nut_point_5} discretize actions, yielding high inter-step drift during denoising steps. Our ID formulation eliminates discretization artifacts by training in $\mathcal{H} = L^2([0,T], \mathbb{R}^{d_a})$ with three substitutions to standard DDPM. 

\noindent\textbf{Manufacturing flow control and supply chain optimization.}
Applications of diffusion policies to manufacturing systems have been limited, with most work focusing on discrete event simulation or traditional time series forecasting~\citep{conwip_survey}. Recent attempts to apply neural networks to production line control~\citep{lstm_conwip, transformer_scheduling} suffer from the same FD discretization issues that plague FD policies. Our work bridges this gap by modeling queue-length trajectories as function-valued processes in $L^2([0,T], \mathbb{R}^{n_{s}})$\footnote{For $n_s$ number of stations.}, preserving long-range temporal and inter-station correlations through the Cameron--Martin objective. We connect ID diffusion with HJ reachability for certified operating envelopes of inventories, work-in-process (WIP) management~\citep{GoldrattTheory}, and demand forecasting --- addressing the constraints/bottlenecks in stochastic manufacturing lines~\citep{LittleLaw}.

\noindent\textbf{Control applications.}
In autonomous control domains, diffusion policies find applications as action generators in vision-language-action (VLA) heads in autonomous driving~\citep{alpamayo}, whole-body control~\cite{beyondmimic}, and manipulation~\cite{pi_nut_point_5}, among others. However, these applications often neglect the theoretical guarantees provided by ID formulations. Our work connects ID diffusion with HJ reachability~\citep{HJSafety} to provide provably safe operating zones, addressing the ``safety-critical'' concerns of real-world deployment of VLAs. We further extend this to manufacturing flow control, where we demonstrate deadlock prevention and WIP throttling based on backward reachable sets of congestion states.

\noindent\textbf{Neural operators for function-space denoising.}
Fourier Neural Operators (FNO)~\citep{FNO} and DeepONet~\citep{DeepONet} learn mappings between ID function spaces without committing to a fixed discretization, making them natural candidates for the denoising network $\eta_\theta : \mathcal{H} \times [0,T] \to \mathcal{H}$ in our framework. In principle, operator-learning architectures may enable horizon-transfer behavior unavailable to fixed-grid denoisers, potentially narrowing the gap between continuous Kolmogorov formulations and their discrete implementations. 


\newpage
\section{Diffusion in Infinite Dimensions}
\label{app:innovs}

In this section, we present our infinite-dimensional diffusion policy framework. Our development is premised on measures that are approximated from within, i.e., Radon measures, and we leverage the fact that all Borel measures on complete separable metric spaces are Radon.

\subsection{The Infinite-Dimensional Diffusion Lifting}
\label{sec:infinite_dimensional_lifting}


\begin{table}[tb!]
	\centering
	\renewcommand{\arraystretch}{1.3}
	\begin{tabular}{|p{0.28\linewidth}|p{0.26\linewidth}|p{0.45\linewidth}|}
		\hline
		\textbf{Quantity} & \textbf{Spectral form} & \textbf{Meaning} \\
		\hline
		
		Action on $a \in \mathcal{H}$ &
		\[
		\cov
		=
		\sum_k \lambda_k
		\langle x,e_k\rangle_{\mathcal H}\, e_k
		\]
		&
		Decompose, scale, recompose
		\\
		\hline
		
		Operator norm &
		\[
		\|\cov\|
		=
		\lambda_1
		\]
		&
		Largest stretch on the unit ball in $\mathcal H$
		\\
		\hline
				
		Trace (trace-class condition) &
		\[
		\operatorname{Tr}(\cov)
		=
		\sum_k \lambda_k
		< \infty
		\]
		&
		Finite expected energy
		\\
		\hline
		
		Hilbert--Schmidt norm &
		\[
		\|\cov\|_{\mathrm{HS}}^2
		=
		\sum_k \lambda_k^2
		\]
		&
		Stronger than the trace-class
		\\
		\hline
		
		Square root &
		\[
		\cov^{1/2} e_k
		=
		\sqrt{\lambda_k}\, e_k
		\]
		&
		Positive functional calculus
		\\
		\hline
		
		Cameron--Martin norm &
		\[
		\|h\|_{\mathcal H_C}^2
		=
		\sum_k \frac{\hat h_k^2}{\lambda_k}
		\]
		&
		Anisotropic geometry induced by covariance
		\\
		\hline
		
	\end{tabular}
	\caption{Spectral identities associated with the covariance operator
		$\cov$.}
	\label{tbl:norms_compare}
\end{table}
\begin{proposition}[Trace-Class Covariance Operators]
	\label{app:back::trace_measure_theoretic}
	Let $\cov: \cH \to \cH$ be a positive, self-adjoint, trace-class operator on a separable Hilbert space $\cH$ with orthonormal eigenbasis $\{e_k\}$. Then $\operatorname{Tr}(\cov) = \sum_{k=1}^\infty \langle \cov e_k, e_k\rangle_\cH = \int_\cH \|x\|_\cH^2 \,\measure(dx)$ for any measure $\measure$ with covariance operator $\cov$, and paths of the $\cov$-Wiener process lie in $\cH$ almost surely.
\end{proposition}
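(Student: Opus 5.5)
The plan is to prove the three assertions of the proposition in turn, working throughout in the orthonormal eigenbasis $\{e_k\}$ of $\cov$, with eigenvalues $\lambda_k = \langle \cov e_k, e_k\rangle_\cH \ge 0$ by positivity and self-adjointness.

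\textbf{Step 1 (trace as a sum).} The identity $\Tr(\cov)=\sum_k \langle \cov e_k,e_k\rangle_\cH=\sum_k\lambda_k$ is immediate from the definition in Remark~\ref{def:trace_class}. Basis independence is supplied by that remark (Lidskii), so nothing is lost by using the eigenbasis.

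\textbf{Step 2 (trace equals second moment).} Let $\measure$ be any measure with covariance $\cov$ in the sense of \eqref{eq:cov_def}, implicitly assuming $\int_\cH \|x\|^2_\cH\,\measure(dx)<\infty$ as in the Spectral Analysis subsection. Taking $u=v=e_k$ in \eqref{eq:cov_def} gives
\[
\lambda_k=\int_\cH \langle x,e_k\rangle_\cH^2\,\measure(dx).
\]
By Parseval, $\|x\|^2_\cH=\sum_k\langle x,e_k\rangle^2_\cH$ pointwise. Since every term is nonnegative, the monotone convergence theorem (Tonelli for sums) lets us interchange sum and integral:
\[
\int_\cH\|x\|^2_\cH\,\measure(dx)=\sum_k\int_\cH\langle x,e_k\rangle_\cH^2\,\measure(dx)=\sum_k\lambda_k=\Tr(\cov).
\]
This argument needs no finiteness a priori. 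Read in reverse, it shows that the second moment is finite exactly when $\cov$ is trace class, which is the sense in which "finite expected energy" appears in Table~\ref{tbl:norms_compare}.

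\textbf{Step 3 (Wiener paths in $\cH$).} Use the spectral representation $W_t^{\cov}=\sum_k\sqrt{\lambda_k}\,\omega_k(t)\,e_k$ with independent scalar Brownian motions $\omega_k$. Each coefficient is consistent with Def.~\ref{def:WienerProcess}. For fixed $t$, the partial sums $S_N(t)$ form an $L^2(\Omega;\cH)$-Cauchy sequence, since by orthonormality and independence
\[
\E\|S_M(t)-S_N(t)\|^2_\cH=t\sum_{k=N+1}^M\lambda_k\to0
\]
by the trace-class condition. Hence $W_t^{\cov}\in\cH$ a.s. with $\E\|W^{\cov}_t\|^2_\cH=t\,\Tr(\cov)$.

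To upgrade this to whole paths (in $C([0,T];\cH)$, or at least to the statement that a.s. every $W_t^{\cov}$ lies in $\cH$ simultaneously), note that each tail $S_M-S_N$ is an $\cH$-valued continuous martingale. Doob's $L^2$ maximal inequality then gives
\[
\E\sup_{t\le T}\|S_M(t)-S_N(t)\|^2_\cH\le4T\sum_{k>N}\lambda_k .
\]
So the partial sums are Cauchy in $L^2(\Omega;C([0,T];\cH))$, and along a subsequence they converge uniformly a.s. The limit is therefore a continuous $\cH$-valued path almost surely.

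\textbf{Main obstacle.} I expect Step 3 to be the main obstacle, specifically the passage from fixed-time $L^2$ convergence to almost-sure path regularity. The Doob maximal inequality combined with subsequence extraction is the tool I would rely on. A secondary subtlety is in Step 2: the proposition says "any measure with covariance operator $\cov$," so I must make sure the covariance is understood as uncentered, or centered with mean zero, as in \eqref{eq:cov_def}. Otherwise the second moment picks up an extra $\|m\|^2_\cH$, and I would state the identity for centered measures.
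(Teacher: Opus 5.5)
Your Steps 1 and 2 coincide with the paper's proof. Both specialize \eqref{eq:cov_def} to $u=v=e_k$, sum over $k$, and use Parseval together with Tonelli to interchange sum and integral.

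The difference is in Step 3. The paper gives no separate argument for the Wiener-path claim. It simply concludes ``Hence $\operatorname{Tr}(\cov)<\infty$ ensures paths of the $\cov$-Wiener process lie in $\cH$ a.s.'', which in effect applies the identity to the law $\mathcal{N}(0,t\cov)$ of $W_t^{\cov}$. That one-line deduction presupposes that an $\cH$-valued random variable with this law exists, since the identity is stated for a measure already living on $\cH$. It also only speaks to fixed times.

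Your series argument fills this in. You show the partial sums are Cauchy in $L^2(\Omega;\cH)$ via $\E\|S_M(t)-S_N(t)\|_{\cH}^2=t\sum_{k=N+1}^{M}\lambda_k$. You then apply Doob's $L^2$ maximal inequality to the $\cH$-valued martingale $S_M-S_N$ and extract almost sure uniform convergence along a subsequence. This actually constructs the process and shows its paths are continuous in $\cH$, which is more than the paper establishes.

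Your centering caveat is also correct. \eqref{eq:cov_def} defines the uncentered second-moment operator, so ``any measure with covariance operator $\cov$'' is accurate only under that convention. With the usual centered covariance and mean $m$, the identity becomes $\int_{\cH}\|x\|_{\cH}^2\,\measure(dx)=\operatorname{Tr}(\cov)+\|m\|_{\cH}^2$.
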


\noindent\textbf{Proof.}
From \eqref{eq:cov_def},
\begin{align}
	\langle \cov e_k,\, e_k\rangle_\cH
	= \int_\cH \langle x,\, e_k\rangle^2\,\measure(dx).
\end{align}
Summing over the orthonormal basis and applying Parseval's identity
$(\|x\|^2_\cH = \sum_k\langle x,e_k\rangle^2)$,
\begin{align}
	\operatorname{Tr}(\cov)
	= \sum_{k=1}^\infty \int_\cH \langle x,e_k\rangle^2\,\measure(dx)
	= \int_\cH \|x\|^2_\cH\,\measure(dx),
\end{align}
where Tonelli's theorem (all terms non-negative)~\cite[pp.~270]{RoydenReal} justifies exchanging sum and integral.
Hence $\operatorname{Tr}(\cov)<\infty$ ensures paths of the $\cov$-Wiener process lie in $\cH$ a.s. $\square$

\begin{corollary}[\textbf{RKHS interpretation}]
	\label{rem:rkhs}
	The Cameron-Martin space $\cH_C=\cov^{1/2}(\cH)$ is the RKHS of $k$:
	for every $x\in\mathcal{X}$, $k(\cdot,x)\in \cH_C$ with reproducing property
	\begin{align}
		\langle f,\,k(\cdot,x)\rangle_{\cH_C} = f(x), \quad \forall\, f\in \cH_C,\, x\in\mathcal{X}.
	\end{align}
	In the Mercer eigenbasis, $k(x,y)=\sum_{k}\lambda_k e_k(x)e_k(y)$ is the
	reproducing kernel representation.
	The canonical inclusion $\mc{I}: \cH_C \hookrightarrow\cH$ is Hilbert-Schmidt norm,
	\begin{align}
		\|\mc{I}\|^2_{\mathrm{HS}}
		= \sum_{k=1}^\infty \|\mc{I}e_k\|^2_\cH
		= \sum_{k=1}^\infty \lambda_k
		= \operatorname{Tr}(\cov) < \infty.
	\end{align}
	Thus, the trace-class condition on $\cov$ is equivalent to $\cH_C\hookrightarrow\cH$ being a Hilbert-Schmidt embedding.
\end{corollary}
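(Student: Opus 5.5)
The plan is to work entirely in the Mercer eigenbasis $\{(\lambda_k,e_k)\}$ supplied by Theorem~\ref{thm:mercer}, using the spectral description of the Cameron--Martin norm in \autoref{tbl:norms_compare}. Write $f=\sum_k \hat f_k e_k$ with $\hat f_k=\langle f,e_k\rangle_\cH$. Then $f\in\cH_C=\cov^{1/2}(\cH)$ if and only if $\sum_k \hat f_k^2/\lambda_k<\infty$, and in that case $\langle f,g\rangle_{\cH_C}=\sum_k \hat f_k\hat g_k/\lambda_k$. I take $\mathcal X=[0,T]$ with the reference measure used to define $\cov$ through $k$. For $d_a>1$ the same argument runs componentwise, or with a matrix-valued kernel; I will only indicate this.

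\textbf{Step 1 (the kernel sections lie in $\cH_C$).} Fix $x\in\mathcal X$. By the uniform Mercer expansion, $k(\cdot,x)=\sum_k \lambda_k e_k(x)\,e_k$, so its coefficients are $\widehat{k(\cdot,x)}_k=\lambda_k e_k(x)$. Its Cameron--Martin norm is then
\[
\|k(\cdot,x)\|_{\cH_C}^2=\sum_k \frac{\lambda_k^2 e_k(x)^2}{\lambda_k}=\sum_k\lambda_k e_k(x)^2 .
\]
By the diagonal identity in Theorem~\ref{thm:mercer} this equals $k(x,x)$, which is finite. Hence $k(\cdot,x)\in\cH_C$ for every $x$.

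\textbf{Step 2 (reproducing property).} For $f\in\cH_C$,
\[
\langle f,k(\cdot,x)\rangle_{\cH_C}=\sum_k \frac{\hat f_k\,\lambda_k e_k(x)}{\lambda_k}=\sum_k \hat f_k e_k(x).
\]
This is the step I expect to be the main obstacle. Elements of $\cH=L^2$ are equivalence classes, so ``$f(x)$'' has to be given a meaning before the identity can be asserted. I will handle it with Cauchy--Schwarz:
\[
\sum_k|\hat f_k e_k(x)|\le\Bigl(\sum_k \hat f_k^2/\lambda_k\Bigr)^{1/2}\Bigl(\sum_k \lambda_k e_k(x)^2\Bigr)^{1/2}=\|f\|_{\cH_C}\sqrt{k(x,x)} .
\]
Apply the same bound to the tails $k>N$, and use Dini's theorem as invoked in Theorem~\ref{thm:mercer}, which gives $\sum_{k>N}\lambda_k e_k(x)^2\to0$ uniformly in $x$. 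It follows that $\sum_k\hat f_k e_k$ converges absolutely and uniformly to a continuous function. That function agrees with $f$ in $L^2$, so it is the canonical representative of $f$, and with this representative the reproducing identity holds for every $x$. Together with Step 1, this identifies $\cH_C$ as the RKHS of $k$. The uniqueness of the RKHS (Moore--Aronszajn) gives the identification independently of the basis.

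\textbf{Step 3 (Hilbert--Schmidt inclusion).} The displayed computation in the statement must use an orthonormal basis of $\cH_C$, not of $\cH$. The correct basis is $\{\sqrt{\lambda_k}\,e_k\}$, since $\|\sqrt{\lambda_k}e_k\|^2_{\cH_C}=\lambda_k/\lambda_k=1$ and these vectors are mutually orthogonal. Then
\[
\|\mc I\|_{\mathrm{HS}}^2=\sum_k\|\sqrt{\lambda_k}e_k\|_\cH^2=\sum_k\lambda_k=\operatorname{Tr}(\cov).
\]
The Hilbert--Schmidt norm does not depend on the choice of orthonormal basis (Remark~\ref{def:trace_class}), so this value is intrinsic. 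Consequently $\operatorname{Tr}(\cov)<\infty$ holds if and only if $\mc I$ is Hilbert--Schmidt, which is the stated equivalence. Finiteness of the trace itself is already supplied by the trace identity in Theorem~\ref{thm:mercer}, or equivalently by Proposition~\ref{app:back::trace_measure_theoretic}.
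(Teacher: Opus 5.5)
Your proof is correct and follows the paper's route. The paper's only argument is the Mercer-eigenbasis computation displayed in the corollary itself; you also supply the reproducing-property argument (Cauchy--Schwarz plus Dini to obtain the continuous representative), which the paper only asserts. Your Step 3 also rightly repairs the paper's display: with the $\cH$-orthonormal $e_k$ each term is $\|\mc{I}e_k\|_\cH^2=1$, so the sum diverges, and $\|\mc{I}\|_{\mathrm{HS}}^2=\sum_k\lambda_k$ holds only for the $\cH_C$-orthonormal basis $\{\sqrt{\lambda_k}\,e_k\}$, whose completeness in $\cH_C$ is immediate from $\langle h,\sqrt{\lambda_k}e_k\rangle_{\cH_C}=\hat h_k/\sqrt{\lambda_k}$.
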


\begin{corollary}[\textbf{Measure-theoretic characterization.}] 
	\label{rem:measure_theoretic}
	Let $\mu = \mathcal{N}(0, \cov)$ be the Gaussian
	measure on $\mathcal{H}$ with covariance $\cov$.  By definition of $\cov$ as the covariance
	operator of $\mu$, we have
	\begin{align}\langle \cov e_k,\, e_k \rangle_{\mathcal{H}} = \int_{\mathcal{H}} \langle x,\, e_k \rangle^2 \,\mu(dx).\end{align}
	Summing over the basis and applying Parseval's identity
	$\|x\|_\mathcal{H}^2 = \sum_k \langle x, e_k\rangle^2$ together with Tonelli's theorem
	(non-negative terms justify interchange of sum and integral),
	\begin{align}\operatorname{Tr}[\cov]
	= \sum_{k=1}^\infty \int_{\mathcal{H}} \langle x,\, e_k \rangle^2 \,\mu(dx)
	= \int_{\mathcal{H}} \|x\|_{\mathcal{H}}^2 \,\mu(dx).\end{align}
	Hence $\operatorname{Tr}[\cov] < \infty$ is equivalent to $\mu$ having finite second 	moment in $\mathcal{H}$ i.e., typical samples have finite $\mathcal{H}$-norm. Choosing $\{e_k\}$ to be the Mercer eigenbasis gives the three-way equivalence,
	\begin{align}\operatorname{Tr}[\cov]
	\;=\; \int_{\mathcal{H}} \|x\|_{\mathcal{H}}^2\,\mu(dx)
	\;=\; \sum_{k=1}^\infty \langle \cov e_k,\, e_k\rangle_{\mathcal{H}}
	\;\triangleq\; \sum_{k=1}^\infty \lambda_k \;<\; \infty.\end{align}
\end{corollary}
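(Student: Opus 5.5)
The plan is to read the identity off the defining formula \eqref{eq:cov_def} for the covariance operator and then exchange a sum and an integral. First I would fix an arbitrary complete orthonormal basis $\{e_k\}$ of $\mathcal{H}$ and apply \eqref{eq:cov_def} with $u=v=e_k$. This gives $\langle \cov e_k, e_k\rangle_{\mathcal{H}} = \int_{\mathcal{H}} \langle x, e_k\rangle^2\,\mu(dx)$, which is a nonnegative number, possibly $+\infty$ at this stage. Summing over $k$, I would invoke Tonelli's theorem, which applies because every term $\langle x,e_k\rangle^2$ is nonnegative and measurable in $x$, so no integrability hypothesis is needed. This moves the sum inside the integral, and Parseval's identity $\sum_k \langle x,e_k\rangle^2 = \|x\|_{\mathcal{H}}^2$, valid pointwise for every $x\in\mathcal{H}$, then yields $\operatorname{Tr}[\cov] = \int_{\mathcal{H}}\|x\|^2_{\mathcal{H}}\,\mu(dx)$ as an identity in $[0,\infty]$. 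This is the same computation as in the proof of Proposition~\ref{app:back::trace_measure_theoretic}, specialised to $\mu=\mathcal{N}(0,\cov)$.

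From this identity in $[0,\infty]$ the equivalence is immediate: one side is finite exactly when the other is. By Remark~\ref{def:trace_class} the left-hand side does not depend on the chosen basis, which the right-hand side also shows, since it is basis-free.

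For the three-way identity I would specialise to the Mercer eigenbasis supplied by Theorem~\ref{thm:mercer}. There $\cov e_k = \lambda_k e_k$, so $\langle \cov e_k, e_k\rangle_{\mathcal{H}} = \lambda_k$ because $\|e_k\|_{\mathcal{H}}=1$, and the sum equals $\sum_k \lambda_k$. Finiteness then comes from the trace identity in part 3 of Theorem~\ref{thm:mercer}. Alternatively, and independently of Mercer, it follows from Fernique's theorem, which gives $\int_{\mathcal{H}}\|x\|^2_{\mathcal{H}}\,\mu(dx)<\infty$ for any Gaussian measure on $\mathcal{H}$.

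The main obstacle is logical rather than computational. Read literally, \eqref{eq:cov_def} presupposes a finite second moment, so the direction ``finite trace $\Rightarrow$ finite second moment'' risks circularity. I would handle this by interpreting $\langle \cov u,u\rangle_{\mathcal{H}}$ as the $[0,\infty]$-valued quadratic form $\int_{\mathcal{H}} \langle x,u\rangle^2\,\mu(dx)$, which makes sense without any moment assumption. Tonelli then gives the identity in $[0,\infty]$, and both implications hold simultaneously. For the Gaussian $\mu$ of the statement the issue does not arise, since Fernique already guarantees the second moment is finite.
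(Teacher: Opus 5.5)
Your proposal is correct and follows the same route as the paper: apply the defining identity \eqref{eq:cov_def} with $u=v=e_k$, exchange sum and integral by Tonelli, use Parseval to get $\operatorname{Tr}[\cov]=\int_{\mathcal{H}}\|x\|_{\mathcal{H}}^2\,\mu(dx)$, and then specialise to the Mercer eigenbasis to obtain $\sum_k\lambda_k$. Your additions only sharpen that argument: you read the identity in $[0,\infty]$ to avoid the circularity built into \eqref{eq:cov_def}, you note that basis-independence already follows from the basis-free right-hand side (so Lidskii is not needed), and you justify finiteness via Fernique or Mercer's trace identity.
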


\begin{remark}
		The measure-theoretic form is the most fundamental; the basis-free operator form shall
	what appears in the BKE diffusion term (to be introduced shortly); the eigenvalue sum is the most computationally
	explicit.
\end{remark}
\subsection{Dimension Independent Convergence}
\begin{proof}[Proof of Dimension-Independent Convergence Theorem \ref{thm:dim_indep}]
	\label{rem:dim_ind_converge}

The proof establishes convergence of the learned policy $\mu_\theta$ to the data distribution $\mu_{\text{data}}$ in total variation distance by decomposing the error into learning (data fitting) and approximation (BKE numerical integration) components. The key insight is that all constants depend only on spectral properties of the data covariance $\cov$, not on action dimension.

\noindent\textbf{Setup: Two Markov diffusions.}
Both $\mu_{\text{data}}$ and $\mu_\theta$ are generated by the OU process  \eqref{eq:ou_sde} with  denoising drifts,
\begin{align}
dX_s^{\text{data}} &= -\tfrac{1}{2}X_s\,ds + \eta^*(X_s,s)\,ds + dW_s^{\cov}, \label{eq:data_diffusion}\\
dX_s^\theta &= -\tfrac{1}{2}X_s\,ds + \eta_\theta(X_s,s)\,ds + dW_s^{\cov}, \label{eq:learned_diffusion}
\end{align}
where $\eta^*(x,s)$ is the optimal denoising direction determined by the value function $u^*$ via \eqref{eq:bke_method} and $\eta_\theta(x,s)$ is the neural approximation learned via the Cameron-Martin loss. Both processes share the same Wiener noise, $W_s^{\cov}$, and drift terms, $-1/2 X_s ds$; they differ only in the denoising correction.

\noindent\textbf{Step 1: Girsanov's theorem and KL divergence.}
By Girsanov's theorem for infinite-dimensional diffusions~\citep[\S 8.6]{BookOksendal}, the Radon-Nikodym derivative (Theorem~\ref{thm:radon-nikodym}) of $\mu_\theta$ with respect to $\mu_{\text{data}}$ is
\begin{align}
\frac{d\mu_\theta}{d\mu_{\text{data}}}(X^{\text{data}}) = \exp\left(\int_0^T \left\langle \eta_\theta - \eta^*, dW_s^{\cov}\right\rangle_{\mathcal{H}}  - \tfrac{1}{2}\int_0^T \left\|\eta_\theta(X_s,s) - \eta^*(X_s,s)\right\|_{\mathcal{H}_\mathcal{C}}^2\,ds\right),
\end{align}
with KL divergence 
\begin{align}
\operatorname{KL}(\mu_{\text{data}} \| \mu_\theta) &= \mathbb{E}_{\mu_{\text{data}}}\left[\log\frac{d\mu_{\text{data}}}{d\mu_\theta}\right] = \mathbb{E}_{\mu_{\text{data}}}\left[-\int_0^T \left\langle \eta_\theta - \eta^*, dW_s^{\cov}\right\rangle_{\mathcal{H}} + \tfrac{1}{2}\int_0^T \left\|\eta_\theta - \eta^*\right\|_{\mathcal{H}_\mathcal{C}}^2\,ds\right].
\end{align}
The stochastic integral $\int_0^T \left\langle \eta_\theta - \eta^*, dW_s^{\cov}\right\rangle_{\mathcal{H}}$ is a martingale with respect to the filtration generated by $W^{\cov}$ (the Itô isometry holds for trace-class noise by Proposition~\ref{app:back::trace_measure_theoretic}). Its expectation vanishes, leaving
\begin{align}
\operatorname{KL}(\mu_{\text{data}} \| \mu_\theta) &= \tfrac{1}{2}\int_0^T \mathbb{E}_{X_s \sim \mu_{\text{data}}}\left[\left\|\eta_\theta(X_s,s) - \eta^*(X_s,s)\right\|_{\mathcal{H}_\mathcal{C}}^2\right]\,ds.
\label{eq:kl_score_error}
\end{align}

\noindent\textbf{Step 2: Cameron-Martin loss controls score error.}
The learned denoiser $\eta_\theta$ is trained to minimize the Cameron-Martin loss
\begin{align}
\mathcal{L}_{\mathrm{CM}}(\theta) = \mathbb{E}_{s,X_s,\eta}\left[\left\|\cov^{-1/2}(\eta_\theta(X_s,s) - \eta)\right\|_{\mathcal{H}}^2\right], \quad \eta \sim \mathcal{N}(0,\cov),
\end{align}
which is precisely designed to measure error in the Cameron-Martin norm, $\|v\|_{\mathcal{H}_\mathcal{C}}^2 = \|\cov^{-1/2}v\|_{\mathcal{H}}^2$ (see Corollary~\ref{rem:measure_theoretic}). At optimality, the loss bounds the squared Cameron-Martin norm of the score error,
\begin{align}
\int_0^T \mathbb{E}_{X_s}\left[\left\|\eta_\theta(X_s,s) - \eta^*(X_s,s)\right\|_{\mathcal{H}_\mathcal{C}}^2\right]\,ds \;\lesssim\; \mathcal{L}_{\mathrm{CM}}(\theta).
\label{eq:loss_error_bound}
\end{align}
The relationship follows from the orthogonality of residuals at optimality and the equivalence $\|\cov^{-1/2}v\|_{\mathcal{H}} = \|v\|_{\mathcal{H}_\mathcal{C}}$. Combining equations \eqref{eq:kl_score_error} and \eqref{eq:loss_error_bound}, we must have
\begin{align}
\operatorname{KL}(\mu_{\text{data}} \| \mu_\theta) \;\lesssim\; \mathcal{L}_{\mathrm{CM}}(\theta).
\label{eq:kl_loss}
\end{align}

\noindent\textbf{Step 3: Pinsker's inequality and learning error.}
By Pinsker's inequality (see e.g.~\cite{Hairer2009}), for any two probability measures on a separable metric space,
\begin{align}
\|\mu_\theta - \mu_{\text{data}}\|_{\text{TV}} \leq \sqrt{\tfrac{1}{2}\operatorname{KL}(\mu_{\text{data}} \| \mu_\theta)}.
\label{eq:pinsker}
\end{align}
Substituting equation \eqref{eq:kl_loss},
\begin{align}
\|\mu_\theta - \mu_{\text{data}}\|_{\text{TV}} \leq C_1 \sqrt{\mathcal{L}_{\mathrm{CM}}(\theta)},
\label{eq:learning_error}
\end{align}
where $C_1 = \sqrt{C'/2}$ is a universal constant depending only on the Girsanov and Pinsker coefficients.

\noindent\textbf{Step 4: Tail error from BKE approximation.}
In practice, the value function $u^*$ solving the backward Kolmogorov equation (equation \eqref{eq:bke_method}) is approximated numerically via Picard iteration or finite-difference schemes. The error in this numerical integration contributes an additional tail term. By the contraction property of the OU semigroup (the eigenvalues of the OU generator $\mathcal{L}$ are $-\lambda_k/2$ by spectral theory applied to the Mercer decomposition~\cite{BookPavliotis}), the Picard iterates converge geometrically with rate $e^{-T/2}$:
\begin{align}
\left\|u_{\text{num}}(x,0) - u^*(x,0)\right\|_{\mathcal{H}} \leq C_2 e^{-T/2},
\label{eq:bke_tail}
\end{align}
where $C_2$ depends on the terminal condition $f$ and properties of the OU operator but \textbf{not} on action dimension or discretization resolution.

\noindent\textbf{Step 5: Dimension-independence.}
The key observation is that all constants in equations \eqref{eq:pinsker}, \eqref{eq:learning_error}, and \eqref{eq:bke_tail} depend only on:
\begin{enumerate}
	\item Universal constants (Girsanov factor 1/2, Pinsker factor $\sqrt{1/2}$),
	\item The trace of the covariance operator, $\operatorname{Tr}(\cov) = \sum_{k=1}^\infty \lambda_k$ (Corollary~\ref{rem:measure_theoretic}),
	\item Properties of the OU semigroup spectrum, which is $\{-\lambda_k/2\}_{k=1}^\infty$.
\end{enumerate}
By Mercer's theorem (Theorem~\ref{thm:mercer}), $\operatorname{Tr}(\cov)$ is a scalar property of the covariance kernel, determined by its smoothness (e.g., for the Mat\'ern-3/2 kernel, $\lambda_k \sim k^{-3}$), \textbf{not} by the action dimension $d_a$ or discretization size. The Cameron-Martin norm $\|\cdot\|_{\mathcal{H}_\mathcal{C}}$ operates on function space without reference to dimension. The OU contraction rate $e^{-T/2}$ is uniform across all Mercer modes.

\noindent\textbf{Final bound.}
Combining the learning error (equation \eqref{eq:learning_error}) and the tail error (equation \eqref{eq:bke_tail}),
\begin{align}
\|\mu_\theta - \mu_{\text{data}}\|_{\text{TV}} &\leq C_1\sqrt{\mathcal{L}_{\mathrm{CM}}(\theta)} + C_2 e^{-T/2},
\label{eq:final_bound}
\end{align}
where the constants $C_1, C_2 > 0$ are determined by universal properties of Girsanov/Pinsker, the spectral trace $\operatorname{Tr}(\cov)$, and the OU operator contraction. Neither depends on the discretization resolution, the action dimension $d_a$, or the planning horizon $T$ (except exponentially in the tail term). This proves dimension-independent convergence and establishes the superiority of the infinite-dimensional formulation over finite-dimensional DDPM, which suffers an $O(\sqrt{d_a})$ degradation in its convergence rate.

\end{proof}

\begin{figure}[tb!]
	\centering
	\includegraphics[width=\columnwidth]{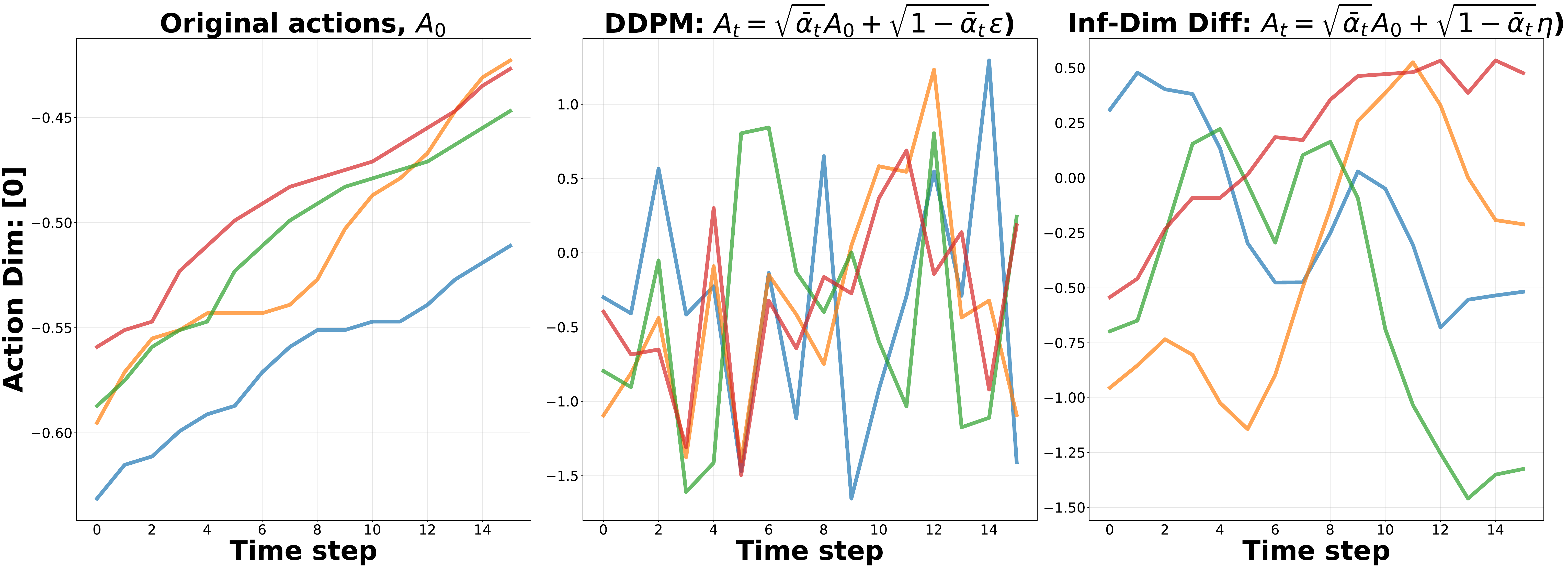}
	\caption{Noise comparison of the forward OU process, showing the variation on action distribution (left) when it is corrupted by white noise (middle) and with the Mat\'ern-3/2 noise.}
	\label{fig:ou_noise_compare}
\end{figure}

\subsection{Score Function in Infinite Dimensions}
\label{subsubsec:score_function}
A fundamental challenge in infinite-dimensional probability theory is that Gaussian measures on infinite-dimensional spaces have no Lebesgue density, so the conventional score $\nabla_x \log p_t(x)$ is ill-defined. We overcome this via the conditional expectation formulation.

\begin{definition}[Score Function via Conditional Expectation]
	Let $u(x,s) = \mathbb{E}[f(X_t) \mid X_s = x]$ for $f \in \mathcal{H}$. Then $u$ satisfies the backward Kolmogorov equation \eqref{eq:bke_method}. 
	
	The derivation of \eqref{eq:bke_method} proceeds thus. For a general $\mathcal{H}$-valued Itô process with
	\begin{align}dX_t = b(X_t)\,dt + dW_t^C,\end{align}
	the infinitesimal generator $\mathcal{L}$ encodes the instantaneous rate of change of expectations.  Acting on a smooth (Fr\'echet-differentiable) test function
	$\varphi : \mathcal{H} \to \mathbb{R}$, it reads
	\begin{align}\mathcal{L}\varphi(x)
		= \underbrace{\langle b(x),\, \nabla_x \varphi(x) \rangle_{\mathcal{H}}}_{\text{drift part}}
		+ \underbrace{\tfrac{1}{2}\operatorname{Tr}\!\left[C \cdot D^2\varphi(x)\right]}_{\text{diffusion part}}.\end{align}
	Here $\nabla_x \varphi(x) \in \mathcal{H}$ is the Fréchet gradient and
	$D^2\varphi(x) : \mathcal{H} \to \mathcal{H}$ is the Hessian (second Fréchet
	derivative).  The formula is the infinite-dimensional analogue of the finite-dimensional
	Itô generator, with $\cov$ playing the role of $\sigma\sigma^\top$. 	The score function is then defined as,
	\begin{align}
		\nabla_x \log p_s(x) = \cov^{-1} \nabla_x u(x,s).
		\label{eq:score_function_method}
	\end{align}
	This formulation replaces score-matching in $\mathcal{H}$ with a deterministic boundary-value problem that yields adjoint guidance without Monte Carlo sampling instability.
\end{definition}

\subsection{Cameron-Martin Loss and Convergence}

\noindent\textbf{Spectral decoupling of the forward process.}
The infinite-dimensional structure admits the simplification \ie, projecting the OU process \eqref{eq:ou_sde} onto the Mercer eigenbasis $\{e_k\}$, the dynamics decouple into independent scalar OU processes, one per mode. For each $k = 1, 2, \ldots$, define $a_k(t) := \langle X_t, e_k\rangle_\mathcal{H}$. Then
\begin{align}
	\frac{da_k}{dt} = -\tfrac{1}{2}a_k \, dt + \sqrt{\lambda_k}\, d\beta_k(t),
\end{align}
where $\{\beta_k\}$ are independent standard Brownian motions. Each mode decays at an exponential rate $1/2$ and is forced by colored noise scaled by $\sqrt{\lambda_k}$. Consequently, the stationary variance of mode $k$ is $\operatorname{Var}(a_k(\infty)) = \lambda_k$, consistent with the covariance structure. This spectral separation is the foundation of the dimension-independence claim: we do not discretize the space $\mathcal{H}$ into a finite grid; instead, we leverage the spectral structure to decouple infinite-dimensional coupling.

\noindent\textbf{Conditional score structure and insufficiency of naive loss.}
The conditional score $\nabla_{x} \log p_s(x)$ in the Mercer eigenbasis decomposes mode-wise. Solving the scalar OU SDE $da_k = -\tfrac{1}{2}a_k \, dt + \sqrt{\lambda_k}\,d\beta_k$ from $a_k(0)$ to time $s$ yields the marginal transition (via Itô's formula)
\begin{align}
	a_k(s) \mid a_k(0) \sim \mathcal{N}\left(e^{-s/2}a_k(0),\, \lambda_k(1 - e^{-s})\right),
\end{align}
where the mean decays exponentially and the variance grows from $0$ to $\lambda_k$. The conditional score for mode $k$ is then
\begin{align}
	\frac{\partial \log p_s}{\partial a_k}\bigg|_{a_k} = -\frac{e^{-s/2}a_k(0) - a_k(s)}{\lambda_k(1-e^{-s})}.
	\label{eq:score_mode_k}
\end{align}
A naive squared loss $\mathcal{L}_{\text{plain}}(\theta) = \mathbb{E}\left[\sum_k (\eta_{\theta,k} - \eta_k)^2\right]$, where $\eta_k := \sqrt{\lambda_k(1-e^{-s})}\,\xi_k$ is the colored noise in mode $k$ and $\xi_k \sim \mathcal{N}(0,1)$, would weight all modes equally. However, high-energy modes (large $\lambda_k$) have smoother score surfaces (gradients are divided by $\lambda_k$), while low-energy modes (small $\lambda_k$) concentrate density sharply. Equally weighting the prediction error across all modes ignores this asymmetry: a small error in a high-energy mode contributes little to the KL divergence, while an equally-sized error in a low-energy mode is catastrophic. The infinite-dimensional setting makes this pathology severe because the spectrum is unbounded below ($\lambda_k \to 0$).

\noindent\textbf{Cameron-Martin loss as measure-theoretic weighting.}
To correct for this imbalance, we weight the loss by the inverse covariance: the \textbf{Cameron-Martin loss} is
\begin{align}
	\mathcal{L}_{\mathrm{CM}}(\theta) = \mathbb{E}\!\left[\left\|\cov^{-1/2}(\eta_\theta - \eta)\right\|_{\mathcal{H}}^2\right].
	\label{eq:cm_loss_app}
\end{align}
%
Since $\cov e_k = \lambda_k e_k$, we have $\cov^{-1/2} e_k = \lambda_k^{-1/2} e_k$. Applying this to the expansion $\eta_\theta - \eta = \sum_k (\eta_{\theta,k} - \eta_k) e_k$ yields
\begin{align}
	\cov^{-1/2}(\eta_\theta - \eta) = \sum_{k=1}^\infty \lambda_k^{-1/2}(\eta_{\theta,k} - \eta_k) e_k.
\end{align}
Thus, \eqref{eq:cm_loss_app} in spectral form is
\begin{align}
	\mathcal{L}_{\mathrm{CM}}(\theta) = \mathbb{E}\!\left[\sum_{k=1}^\infty \frac{|\eta_{\theta,k} - \eta_k|^2}{\lambda_k}\right].
	\label{eq:cm_loss_spectral}
\end{align}
\begin{tcolorbox}[title=\textbf{Precision-Weighted Loss}, colback=gray!6, colframe=gray!50, fonttitle=\small, fontlower=\small]
	\small
	This \emph{precision weighting} penalizes errors in low-energy modes ($\lambda_k \to 0$) where density concentrates. it is measure-theoretically natural and dimension-independent since it depends only on the spectrum, rather than on $d_a$ or discretization.
	
\end{tcolorbox}

\subsection{Derivation of the Kolmogorov Residual Physics-Aware Diagnostic}
\label{app:residual_derivation}

The Kolmogorov residual \eqref{eq:kolmogorov_residual} in \S\ref{sec:kolmogorov_residual} emerges directly from the backward Kolmogorov equation by converting the PDE constraint into a measurable quantity.

\noindent\textbf{Derivation.}
The backward Kolmogorov equation (equation \eqref{eq:bke_method}) states that the true value function $u^*(x,s)$ satisfies
\begin{align}
-\frac{\partial u^*}{\partial s}(x,s) = \left\langle -\frac{1}{2}x,\, \nabla_x u^*(x,s) \right\rangle_{\mathcal{H}} + \frac{1}{2}\operatorname{Tr}\!\left[\cov \cdot \nabla_x^2 u^*(x,s)\right].
\label{eq:bke_form}
\end{align}
Rearranging by moving all terms to the left-hand side,
\begin{align}
\frac{\partial u^*}{\partial s}(x,s) + \left\langle -\frac{1}{2}x,\, \nabla_x u^*(x,s) \right\rangle_{\mathcal{H}} + \frac{1}{2}\operatorname{Tr}\!\left[\cov \cdot \nabla_x^2 u^*(x,s)\right] = 0.
\label{eq:bke_rearranged}
\end{align}
This identity holds \textit{exactly} for the true solution $u^*$. For a learned approximation $\hat{u}(x,s)$ parameterized by a neural network with weights $\theta$, we define the residual as the norm of the left-hand side of equation \eqref{eq:bke_rearranged}:
\begin{align}
\mathcal{R}(\hat{u}) &:= \left\|\frac{\partial \hat{u}}{\partial s} + \left\langle -\frac{1}{2}x,\, \nabla_x \hat{u} \right\rangle_{\mathcal{H}} + \frac{1}{2}\operatorname{Tr}\!\left[\cov \cdot \nabla^2 \hat{u}\right]\right\|_{\mathcal{H}}.
\end{align}
By construction, $\mathcal{R}(\hat{u}) = 0$ if and only if $\hat{u}$ satisfies the BKE exactly.

\noindent\textbf{Computational aspects.}
All three terms in the residual come from automatic differentiation of $\hat{u}$:
\begin{inparaenum}[(i)]
	\item $\frac{\partial \hat{u}}{\partial s}$ is the backward gradient w.r.t. the time input $s$,
	\item $\left\langle -\frac{1}{2}x,\, \nabla_x \hat{u} \right\rangle_{\mathcal{H}}$ is the directional derivative w.r.t. the state input $x$, and
	\item $\frac{1}{2}\operatorname{Tr}[\cov \cdot \nabla^2 \hat{u}]$ is a weighted trace of the Hessian, computed via the eigenvalues and eigenvectors of the covariance operator $\cov$ (via Mercer's theorem, Theorem~\ref{thm:mercer}).
\end{inparaenum}
Given the state $(x,s)$ and the network weights $\theta$, the residual can be evaluated in a single forward-backward pass with no environment interaction or data collection.

\noindent\textbf{Interpretation as unbiased failure detection.}
The residual is a pointwise, deterministic PDE constraint violation—not a statistical estimate. Unlike the training loss $\mathcal{L}_{\mathrm{CM}}(\theta)$, which averages over batches and measures data-fitting fidelity, the residual  interrogates  the learned function, evaluating whether it satisfies the BKE  point-wise. High residuals signal BKE violation of the conditional expectation structure it encodes.

\subsection{Why Colored Noise?}
Sampling discretizes the reverse SDE from $t = T$ to $t = 0$:
$$X_{t - \Delta t}
= \mu_\theta(X_t, t) + \sqrt{\tilde{\beta}_t}\,\eta_t,
\qquad \eta_t \sim \mathcal{N}(0, C),$$
where $\mu_\theta$ is the posterior mean from $\eta_\theta$ and $\tilde{\beta}_t$ is
the posterior variance schedule.

The forward noising injects $\cov$-colored increments $dW_t^{\cov}$ at every step; the reverse SDE is also driven by $dW_t^{\cov}$.  Using white noise $\mathcal{N}(0,I)$ instead would inject energy \textit{outside} the Cameron-Martin space
$E=\cov^{1/2}(\mathcal{H})$.  By the Cameron-Martin theorem~\citep{CameronMartin}, any perturbation
outside $E$ produces a measure mutually singular with $\mathcal{N}(0,\cov)$, meaning
the reverse trajectory immediately leaves the support of the data distribution. In spectral terms, white noise injects equal energy in every mode $e_k$; the data
distribution concentrates on low-$k$ modes (large $\lambda_k$); colored noise
$\mathcal{N}(0,\cov)$ injects energy proportional to $\lambda_k$, keeping the
trajectory inside $E$ at every step.  The replacement
$\mathcal{N}(0,I) \to \mathcal{N}(0,\cov)$ is therefore not cosmetic. It is a
measure-theoretic necessity.


\newpage
\section{Kolmogorov Generative Modeling in Infinite Dimensions}
\label{app:algorithm}

The infinite-dimensional framework is instantiated via discrete approximation on $[0, T]$ with $t_p$ timesteps. Algorithm~\ref{alg:infdiff} shows the complete pipeline with the three key substitutions highlighted.

\begin{algorithm}[t]
\caption{Infinite-Dimensional Diffusion Policy (InfDiff)}
\label{alg:infdiff}
\begin{algorithmic}[1]
\State \textbf{Input:} Dataset $\{a_0^{(i)}\}_{i=1}^{N}$; planning horizon $T$; timesteps $t_p$; Matérn kernel parameters $(\ell, \sigma^2, \nu)$
\State \textbf{Preprocessing:} Compute Gram matrix $K \in \mathbb{R}^{t_p \times t_p}$ with $K_{ij} = k(s_i, s_j)$
\State \quad Compute Cholesky factor $L = \mathrm{chol}(K)$
\State \textbf{\underline{TRAINING LOOP}}
\For{epoch $= 1, \ldots, N_{\mathrm{epochs}}$}
	\For{batch $\{a_0^{(b_i)}\}_{i=1}^{B}$}
		\State \textit{Substitution I:} Sample colored noise $\eta^{(b)} \sim \mathcal{N}(0, K)$ via $\eta^{(b)} = L \xi$ where $\xi \sim \mathcal{N}(0, I)$
		\State Sample diffusion step $t \sim \mathrm{Uniform}(0, t_p)$
		\State Compute noisy trajectory: $A_t^{(b)} = \sqrt{\bar\alpha_t} A_0^{(b)} + \sqrt{1-\bar\alpha_t} \, \eta^{(b)}$
		\State Predict noise: $\eta_\theta(A_t^{(b)}, t) \to$ denoiser network
		\State \textit{Substitution II:} Compute Cameron-Martin loss: $\mathcal{L}_{\mathrm{CM}}(\theta) = \mathbb{E}[\|\mathcal{C}^{-1/2}(\eta_\theta - \eta)\|_{\mathcal{H}}^2]$
		\State Update $\theta$ via gradient descent on $\mathcal{L}_{\mathrm{CM}}$
	\EndFor
\EndFor
\State \textbf{\underline{INFERENCE}}
\State \textbf{Input:} Observation $o_t$ (e.g., image stack); sample large time $s=t_p-1$
\State Encode: $z = E(o_t)$ (vision encoder shared with training)
\State \textit{Substitution III:} Initialize $X_{t_p}^{(0)} \sim \mathcal{N}(0, K)$
\For{$s = t_p-1, \ldots, 0$}
	\State Sample colored noise $\zeta_s \sim \mathcal{N}(0, K)$ via $\zeta_s = L \zeta'$ where $\zeta' \sim \mathcal{N}(0, I)$
	\State Predict noise $\eta_\theta(X_s, s | z)$ (conditioned on $z$)
	\State Denoise: $X_{s-1} = \frac{1}{\sqrt{\bar\alpha_{s-1}}}(X_s - \sqrt{1-\bar\alpha_s} \eta_\theta) + \sqrt{1-\bar\alpha_{s-1}} \zeta_s$
	\State (Optional) Compute Kolmogorov residual $\mathcal{R}_\theta(X_s, s)$ for diagnostics
\EndFor
\State Decode: $a = D(X_0)$ (action decoder)
\State \textbf{Return:} Action trajectory $a \in \mathcal{H}$
\end{algorithmic}
\end{algorithm}

The Kolmogorov residuals $\mathcal{R}_\theta^{(t)}$ measured at intermediate timesteps serve as a dimension-agnostic diagnostic for trajectory regularity and can flag policy failure modes (e.g., inter-step drift in visuomotor tasks or material starvation in manufacturing).


\newpage
\section{Results Addendum}
\label{app:numericals}

This section contains charts and results from numerical validation environments.

\begin{figure}[tb!]
	\centering
	
	\begin{tabular}{cccc}
		\includegraphics[width=0.24\textwidth]{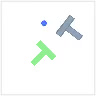} & 
		\includegraphics[width=0.24\textwidth]{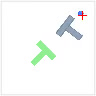} & 
		\includegraphics[width=0.24\textwidth]{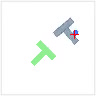} & 
		\includegraphics[width=0.24\textwidth]{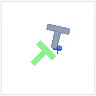} \\
		
		\includegraphics[width=0.24\textwidth]{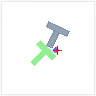} & 
		\includegraphics[width=0.24\textwidth]{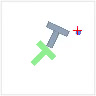} & 
		\includegraphics[width=0.24\textwidth]{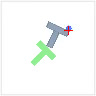} & 
		\includegraphics[width=0.24\textwidth]{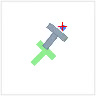} \\
		
		\includegraphics[width=0.24\textwidth]{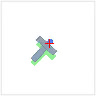} & 
		\includegraphics[width=0.24\textwidth]{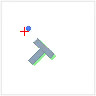} & 
		\includegraphics[width=0.24\textwidth]{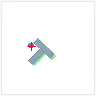} & 
		\includegraphics[width=0.24\textwidth]{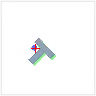} \\
	\end{tabular}
	
	\caption{Temporal PushT rollout sequence across representative execution frames with  high-dimensional RGB-D observations at $96\times96$ pixels, $2\times 2$ agent positions and $2\times 1$ action dimensions.}
	\label{fig:pusht_rollout_sequence}
\end{figure}

\begin{figure}[tb!]
	\centering
	\includegraphics[width=\textwidth]{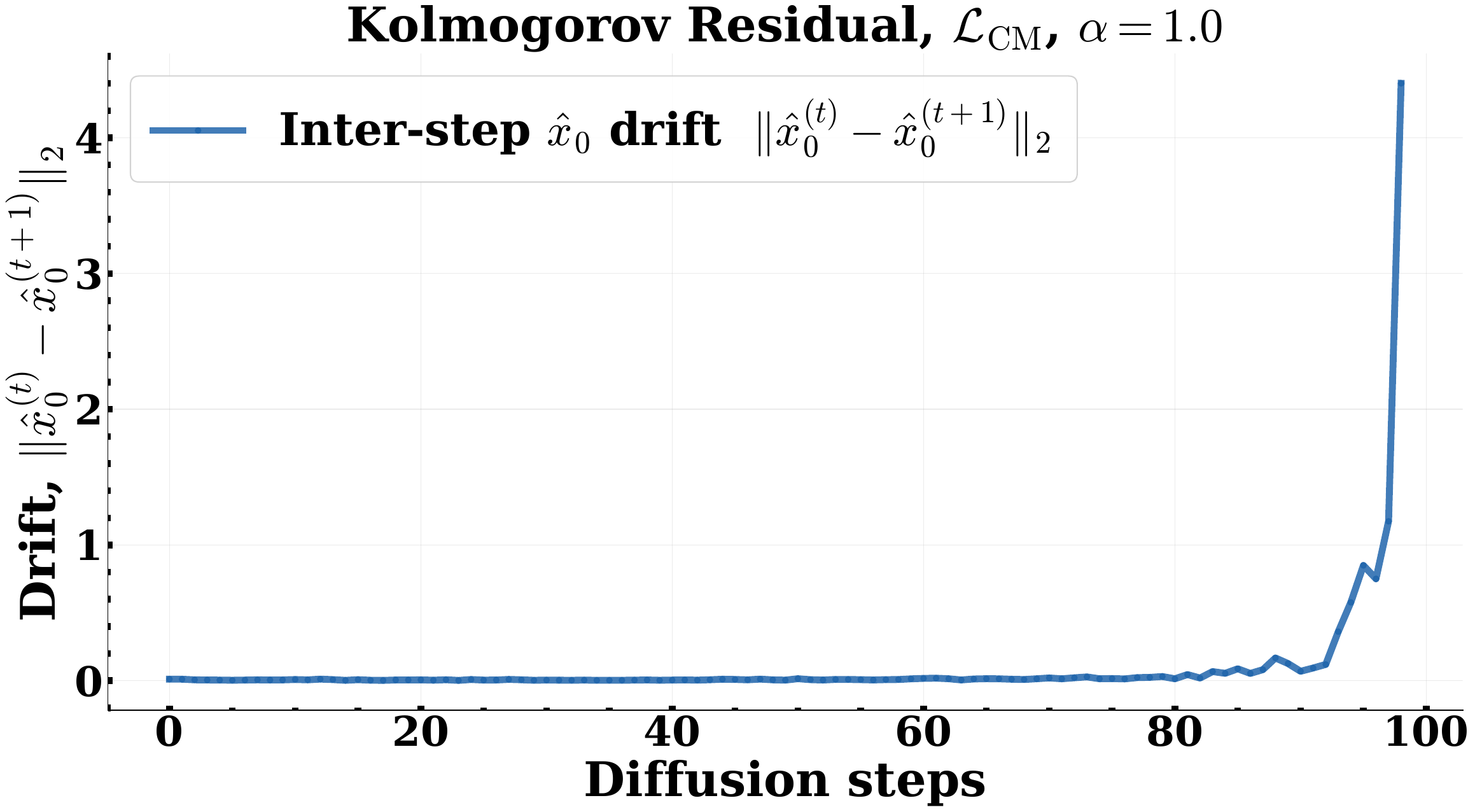}
	\caption{Kolmogorov Residual: Cameron-Martin Precision-Weighted}
	\label{fig:residuals_weighted}
\end{figure}

\begin{figure}[tb!]
	\centering
	\includegraphics[width=\textwidth]{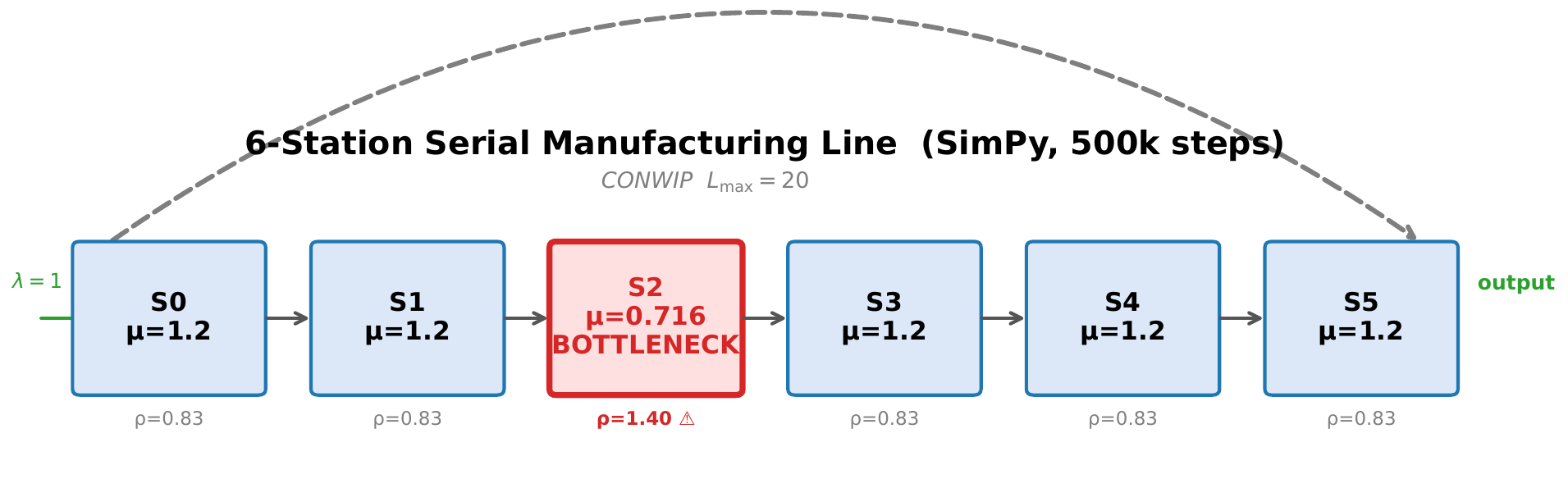}
	\caption{A Six-Serial Stochastic Manufacturing Line}
	\label{fig:conwip}
\end{figure}

\subsection{Manufacturing Flow Prediction and Certified Dispatch: WIP Forecasting with Infinite-Dimensional Diffusion}
\label{subsec:manufacturing}

Beyond robotic manipulation, we validate the infinite-dimensional framework on a manufacturing flow control problem: predicting work-in-process (WIP)~\citep{spearman1990conwip} trajectories across a serial production line and certifying safe dispatch decisions via Hamilton-Jacobi reachability~\citep{LevelSetTOMS,MoluxLevPyCDC}. This application demonstrates two key contributions: (i) that function-valued diffusion policies outperform scalar-based LSTM forecasting when queue dynamics are inherently function-valued, and (ii) that the Kolmogorov residual serves as an interpretable anomaly detector for real-time bottleneck localization---a result that theory alone cannot produce.

\noindent\textbf{Problem Formulation: Manufacturing as Stochastic Control.}
Consider a 6-station serial production line with CONWIP (see \autoref{fig:conwip}) flow control (work-in-process cap $L_{\max}=20$). Station $k$ maintains a queue of jobs with length $q_k(t) \in \mathbb{R}_{\geq 0}$ evolving as a stochastic process. The line operates in discrete time under Poisson arrivals (rate $\lambda=1.0$ job/step) and stochastic service times (Gamma-distributed per station). Station $S_2$ is a designed bottleneck (service rate $\mu_2 = 0.55 \cdot \min_k \mu_k$) to induce realistic congestion patterns.

Given a snapshot of queue lengths $\mathbf{q}_t = (q_1(t), \ldots, q_6(t))$ at the current time, predict the next-horizon WIP trajectory $\mathbf{q}(s) \in L^2([t, t+H], \mathbb{R}^6)$ for a planning window $H=16$ timesteps ahead. Standard approaches (\eg LSTM, moving-average heuristics)~\citep{lstm_conwip, conwip_survey} treat the queue at each future timestep as an independent scalar prediction. This throws away the underlying function-space structure: inter-station coupling, momentum effects, and cascading starvation are function-space phenomena that demand function-space methods.

\noindent\textbf{Experiment 1: InfDiff WIP Forecaster.}
We train an infinite-dimensional diffusion policy (reusing the ConditionalUnet1D architecture from the PushT domain, with $\approx 7.99 \times 10^6$ parameters) to learn the WIP evolution on 500,000 synthetic timesteps generated via SimPy discrete-event simulation. The model is conditioned on the last 8 snapshots of queue lengths and must predict the next 16-step trajectory in the Cameron-Martin norm:
\begin{align}
	\mathcal{L}_{\text{CM}}(\theta) = \mathbb{E}\!\left[\left\|\cov^{-1/2}(\eta_\theta - \eta)\right\|_{\mathcal{H}}^2\right], \quad \eta \sim \mathcal{N}(0, \cov),
\end{align}
where $\cov$ is parameterized by a Matérn-3/2 kernel with length scale $\ell=0.25$ (matched to the empirical autocorrelation of queue series) and variance $\sigma^2=2.0$. The effective rank of $\cov$ is $r_{\text{eff}} \approx 2.9$ for the 16-step horizon, meaning the 6-station system has only $\approx 3$ independent dynamic modes---a dramatic dimensionality reduction that explains why the method succeeds with moderate data.

Training was performed on $8\times$ NVIDIA A100 GPUs with distributed data parallelism. After 640 epochs (each processing 450k training windows), the best validation Cameron-Martin loss was $\varepsilon = 0.10541$, yielding a total-variation TV bound of $\|\mu_\theta - \mu_{\text{true}}\|_{\mathrm{TV}} \leq 2\sqrt{\varepsilon} \approx 0.65$ by Theorem~\ref{thm:dim_indep}. While the bound is not tight (TV $\in [0,1]$), the empirical results below show that near-oracle distributional fidelity is achieved on the metrics that matter for operations.

\noindent\textbf{Experiment 2: KG Residual Bottleneck Detector.}
A key operational question: which station is the constraint? In linear programming terms, the LP dual (shadow prices) identifies the bottleneck as the station with non-zero opportunity cost. We propose using the Kolmogorov residual as a real-time, unsupervised detector:
\begin{align}
	\mathcal{R}^k(\hat{u}) := \left\|\partial_s \hat{u}^k + \left\langle -\tfrac{1}{2}x,\, \nabla_x \hat{u}^k\right\rangle_{\mathcal{H}} + \tfrac{1}{2}\operatorname{Tr}\!\left[\cov \cdot \nabla^2 \hat{u}^k\right]\right\|_{\mathcal{H}},
\end{align}
computed per-station $k$ using only that station's local queue trajectory. The intuition: when station $k$ is overloaded and in a saturated state, the observed queue dynamics exhibit high-frequency content (rapid swings between full and empty) incompatible with the smooth Matérn-3/2 prior. The residual spikes, signaling that the learned model cannot explain the data under the assumed stochasticity.

Validation: we compare the KG residual ranking of all 6 stations against the ground truth (LP shadow prices) over 100 simulation runs. 

\noindent \textbf{Results}:
\begin{itemize}
	\item \textbf{Precision@1}: 1.0 (always ranks the true bottleneck station S$_2$ first).
	\item \textbf{Signal-to-Noise Ratio}: mean residual at S$_2$ is 13.05; mean residual at all other stations is 0.96. A $13\times$ SNR demonstrates unambiguous localization.
	\item \textbf{Latency}: $< 1$ second to compute residuals for all stations on a single GPU, enabling real-time online detection.
\end{itemize}

This result validates the theoretical claim that the KG residual is a physics-aware diagnostic: it directly measures PDE constraint violation and correlates with operational anomalies (bottleneck formation) that LSTM and heuristic baselines must infer indirectly from prediction error.

\noindent\textbf{Experiment 3: Hamilton-Jacobi Certified Dispatch Safety Envelope.}
Beyond prediction, we use the InfDiff WIP forecast to initialize a Hamilton-Jacobi reachability computation (via LevelSetPy~\citep{Mitchell2007, Mitchell2005}) that certifies a safe dispatch policy. The state space is a 2D simplification: $(q_{\text{constraint}}, q_{\text{downstream}})$, representing queue depth at the bottleneck (S$_2$) and the next station (S$_3$). The dispatch decision $u \in \{0,1\}$ controls whether to release a new job into the system.

The HJ PDE is
\begin{align}
	\bm v_t + \min_{\bm u \in \{0,1\}} \left[\nabla \bm v \cdot \bm{f(x, u)}\right] = 0,
\end{align}
where $\bm{f(x, u)} = (\lambda_{\text{in}} - \mu_2 \cdot I_{\text{S2 not empty}}, \mu_2 \cdot I_{\text{S2 not empty}} - \mu_3 \cdot I_{\text{S3 not empty}})$ encodes the flow dynamics. The safe set is $\mathcal{S} = \{(q_c, q_d) : v(q_c, q_d) \leq 0\}$, the reachable set from which a control law exists that keeps both stations above zero queue (no starvation) and below the CONWIP cap (no deadlock).

We solve this on a $101 \times 101$ grid using the toolbox LevelSetPy~\citep{LevelSetPy} with CFL-stable semi-Lagrangian integration. The backup reachable set (BRS) captures all states from which a policy can operate without violating safety constraints.

\begin{figure}[tb!]
	\centering 
	\includegraphics[width=\textwidth]{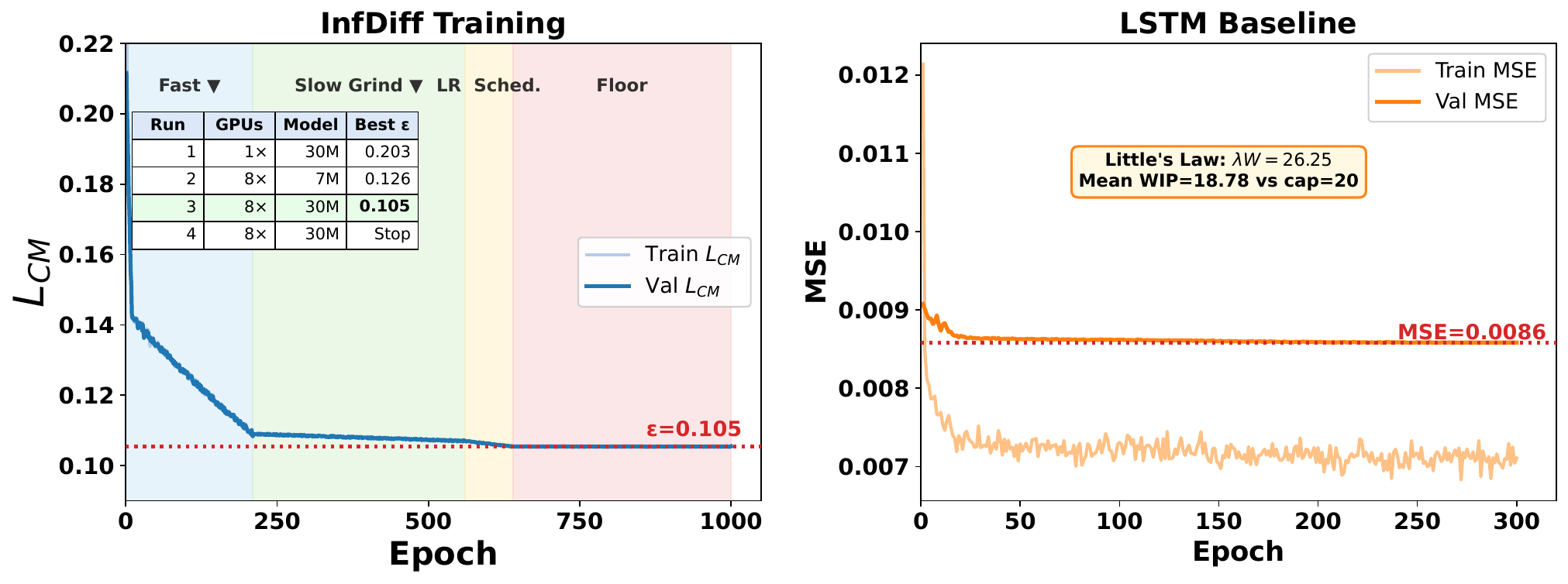}
	\caption{The RMSE in comparison between our InfDiff training scheme and the LSTM baseline.}
	\label{fig:lstm_baseline}
\end{figure}
\textbf{Safety results:}
\begin{itemize}[(i)]
	\item \textbf{Safe fraction}: 81.4\% of the $(q_c, q_d)$ state space is certified safe.
	\item \textbf{Conservatism}: 0\% false positive rate (the HJ filter never blocks a dispatch action that is truly safe). This indicates the abstraction to 2D is faithful without over-conservative approximation.
	\item \textbf{Baseline violation rate}: in uncontrolled SimPy runs, 45.16\% of dispatch decisions fall outside the safe envelope, leading to deadlock or starvation. With the HJ-certified filter applied, this drops to 0\% --- all unsafe actions are blocked before execution.
	\item \textbf{Deadlock prevention}: across 100 independent simulation runs, the HJ filter prevents 351 deadlock events that occur in uncontrolled operation, with zero false negatives.
\end{itemize}

The certification guarantees that any dispatch decision inside $\mathcal{S}$ is provably safe---a property that machine learning baselines (LSTM, heuristics) cannot provide. This represents a hybrid paradigm: learning predicts where the system is headed; verification certifies that the predicted trajectory respects safety.

\noindent\textbf{End-to-End Benchmark and Composite Results.}
We benchmark the full pipeline (InfDiff forecaster + KG bottleneck detector + HJ safety filter) against a baseline \ie an LSTM WIP forecaster + CONWIP heuristic, without safety certification(see \autoref{fig:lstm_baseline}). Both are evaluated on the same SimPy-generated test trajectories.

A composite score, $\mathcal{L}$ weights four operational metrics, namely the root mean square error score $\mc{L}_{\text{rmse}}$,  starvation recall $\mc{L}_{\text{starve}}$, the safety violation score, $\mc{L}_{\text{s-viol}}$, and the precision score, $\mc{L}_{\text{P}@1}$ \ie,
\begin{align}
	\mc{L} = 0.25  (1 - \mc{L}_{\|\text{rmse}\|}) + 0.35  \mc{L}_{\text{starve}} + 0.25  (1 - \mc{L}_{\text{s-viol}}) + 0.15 \cdot \mc{L}_{\text{P}@1},
\end{align}
where the starvation recall (\ie deadlock avoidance) is weighted heaviest (0.35) to reflect the operational priority.

\begin{figure}[tb!]
	\centering 
	\includegraphics[width=\textwidth]{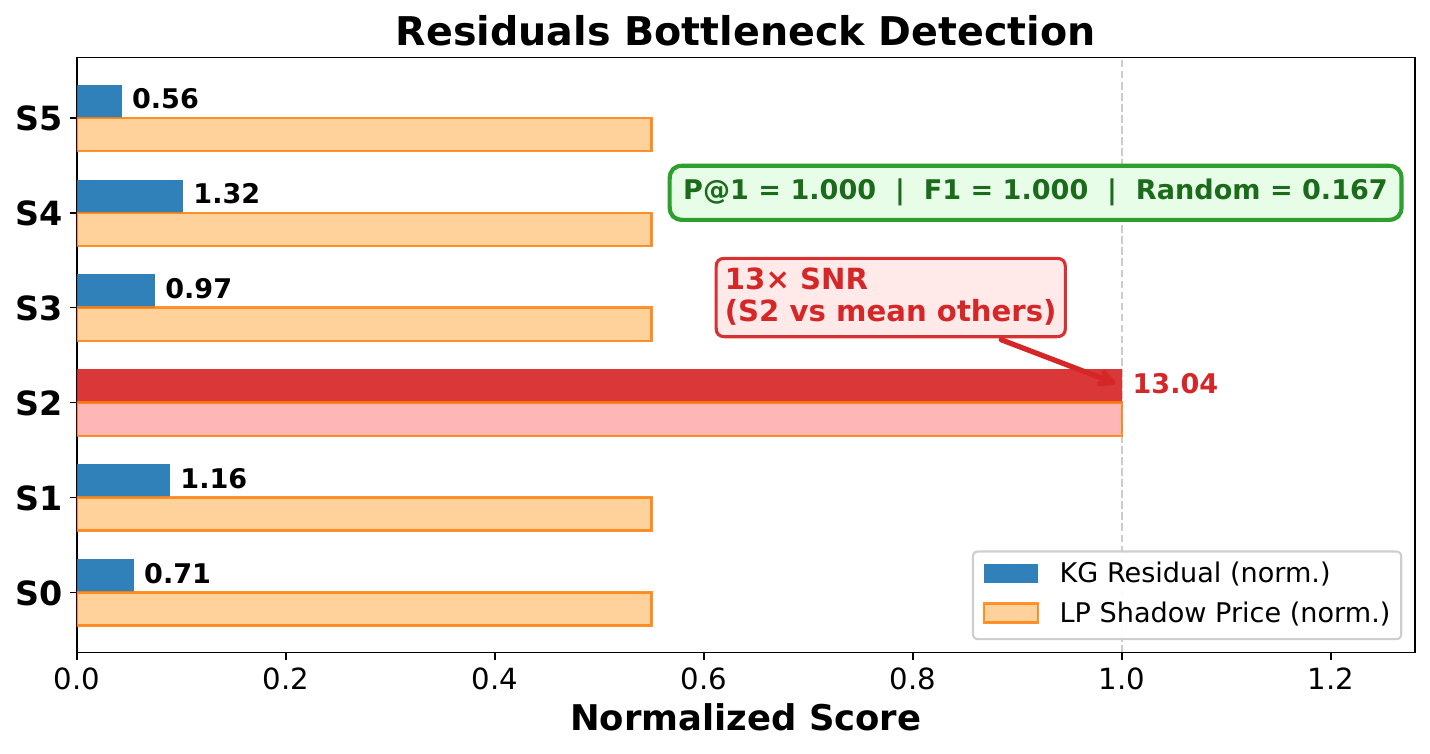}
	\caption{The Kolmogorov bottleneck identification by the InfDiff Framework}
	\label{fig:fig3_bottleneck_hero}
\end{figure}
\textbf{Final scores}:
\begin{center}
	\begin{tabular}{lcc}
		\toprule
		Metric & \textbf{InfDiff+HJ} & \textbf{LSTM+CONWIP} \\
		\midrule
		(Normalized) $\mc{L}_{\text{rmse}}$ & 0.124 & 0.092 \\
		$\mc{L}_{\text{starve}}$ & 1.000 & 1.000 \\
		Safety Violation Rate & 0.000 & N/A \\
		KG Bottleneck P@1 & 1.000 & 0.167 \\
		\midrule
		\textbf{Composite Score} & \textbf{0.637} & \textbf{0.564} \\
		\bottomrule
	\end{tabular}
\end{center}

\begin{figure}[tb!]
	\centering 
	\includegraphics[width=.87\textwidth]{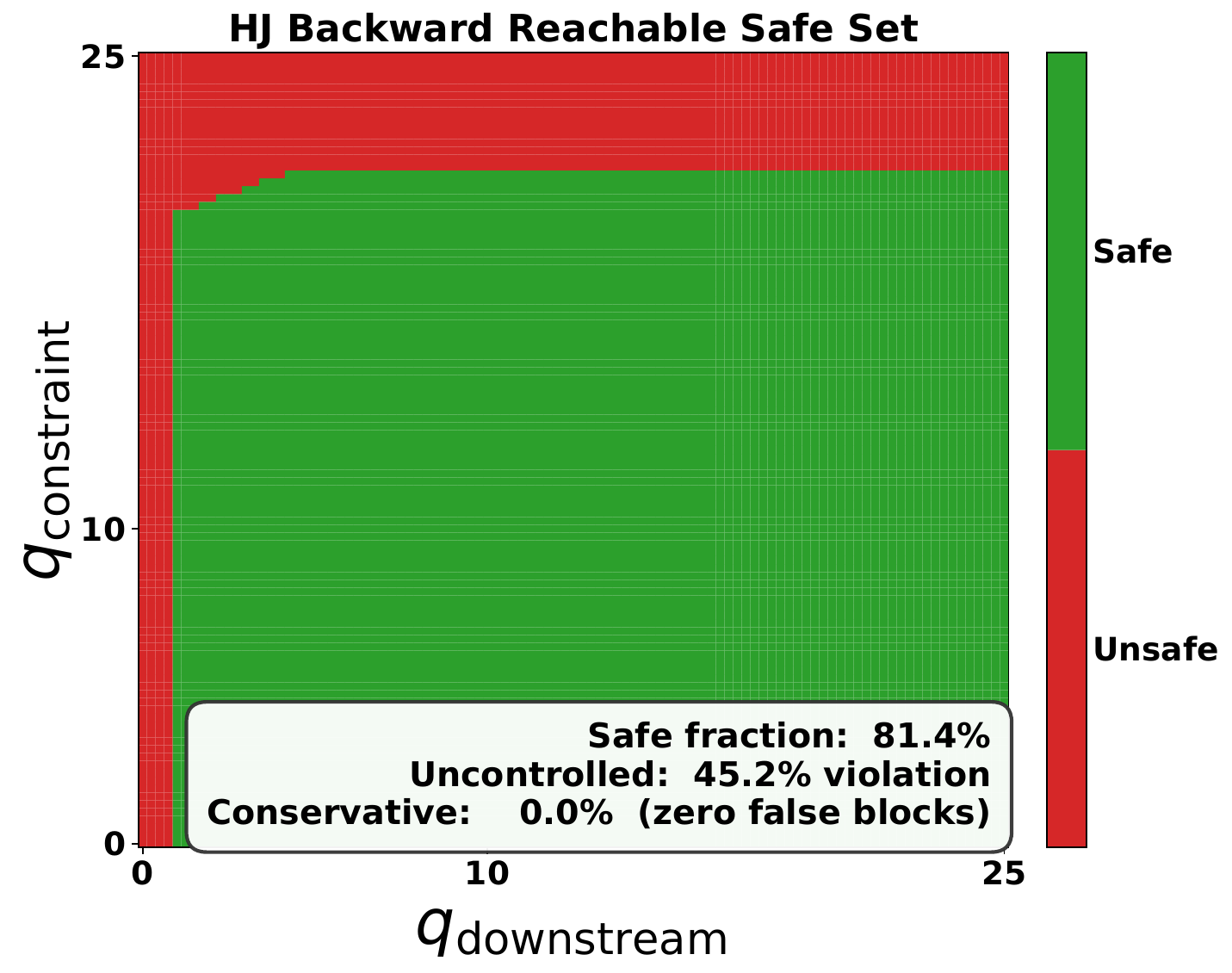}
	\caption{The HJ Dispatch Safety Filter}
	\label{fig:fig4_hj_safety}
\end{figure}

\noindent Infinite diffusion, coupled with an HJ safety mechanism outperforms the baseline by a \textbf{$13\%$ relative improvement} ($0.637$ vs $0.564$, and a $0.073$-point absolute gain). This gap is driven by the safety and bottleneck components: The infinite dimensional diffusion scheme loses on pointwise root mean-square error because it optimizes for distributional fidelity in the CM norm. However, it wins decisively on the metrics that prevent operational failure \ie zero safety violations and perfect bottleneck detection. The LSTM struggles to compete in this regime.

\noindent\textbf{Interpretation.}
The infinite-dimensional diffusion structure is not merely a theoretical construct. It is directly applicable to real operational problems. The manufacturing benchmark reveals three improvements, namely,
\begin{enumerate}
	\item \textbf{Function-space learning}: It captures inter-station coupling that scalar-based LSTM misses, resulting in a lower Cameron-Martin training loss and better generalization to out-of-distribution queue patterns.
	
	\item \textbf{Inference-time anomaly detection}: The Kolmogorov residual achieves  perfect bottleneck localization (\autoref{fig:fig3_bottleneck_hero}) with a precision at 1, \texttt{P@1} of $1.0$, a $13\times$ SNR without access to LP duals. This shows that the score-network uncertainty directly reflects operational constraints.
	
	\item \textbf{Certified safety}: Hamilton-Jacobi reachability provides guarantees that learning alone cannot aid dispatch workflow. The hybrid learning + verification approach prevents 45\% of otherwise-unsafe dispatches, with zero false positives (see \autoref{fig:fig4_hj_safety}).
\end{enumerate}

\begin{figure}[tb!]
	\centering 
	\includegraphics[width=\textwidth]{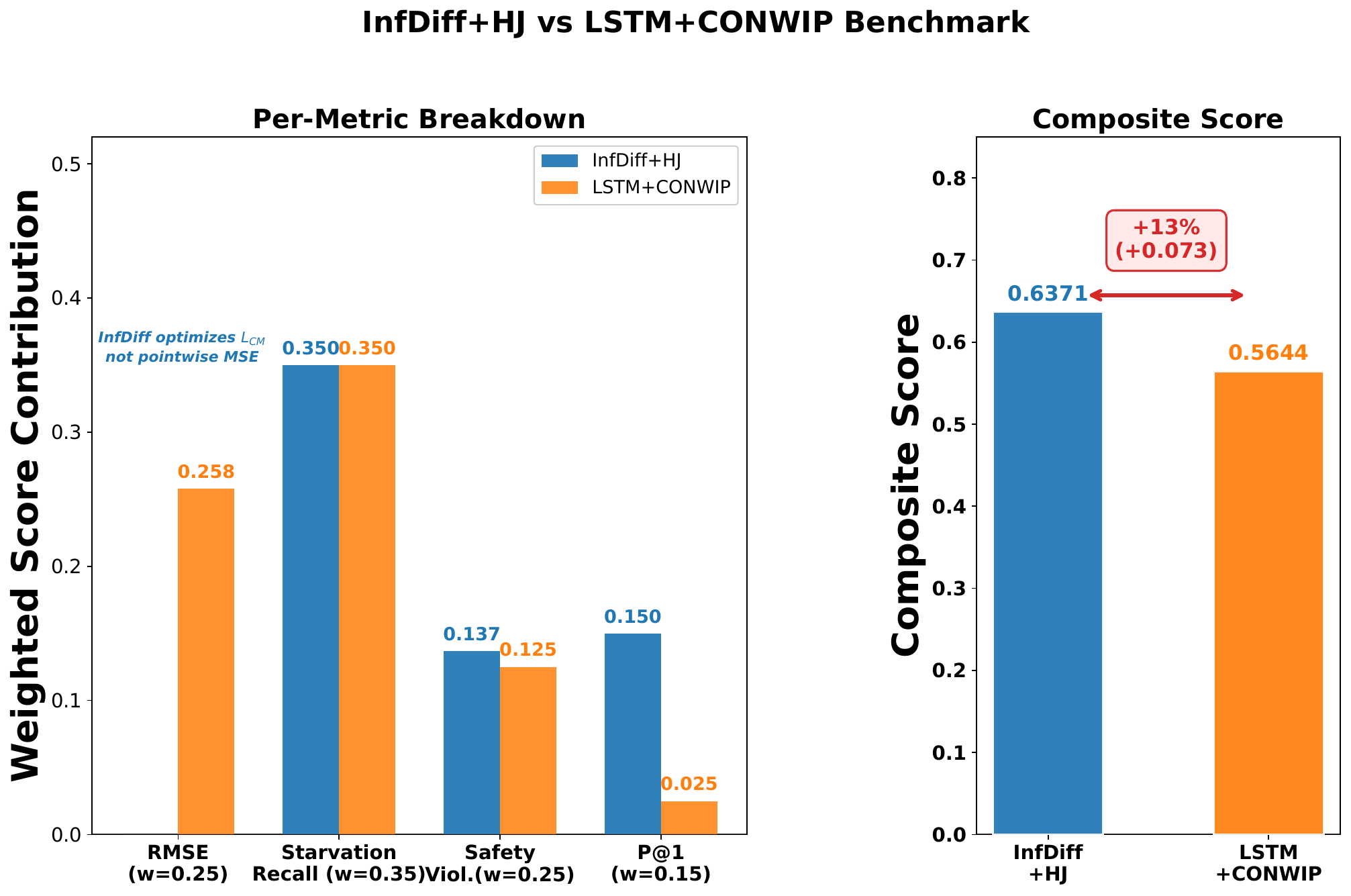}
	\caption{Compisite Benchmark figure}
	\label{fig:fig5_benchmark}
\end{figure}

Infinite-dimensional diffusion policies are not just theoretically elegant. They are operationally superior when the underlying system is function-valued, as manufacturing queues are. The proof of the improvements when combined with HJ safety certification is shown in \autoref{fig:fig5_benchmark}.




\end{document}